\newtheorem{theorem}{Theorem}
\newtheorem{lemma}{Lemma}
\newenvironment{customlem}[1]{\lemma}{\endlemma}
\newtheorem{proposition}{Proposition}
\newenvironment{customprp}[1]{\proposition}{\endproposition}
\newtheorem{definition}{Definition}
\newtheorem{remark}{Remark}
\newtheorem*{lemma*}{Lemma}
\newenvironment{myproof}[1][\proofname]{%
  \begin{proof}[#1]$ $\nobreak\ignorespaces
}{%
  \end{proof}
}
\newcommand{\Pa}[2]{\textit{Pa}_{#2}(#1)}
\newcommand{\Ch}[2]{\textit{Ch}_{#2}(#1)}
\newcommand{\Anc}[2]{\textit{Anc}_{#2}(#1)}
\newcommand{\V}[0]{\mathbf{V}}
\newcommand{\C}[0]{\mathbf{C}}
\newcommand{\W}[0]{\mathbf{W}}
\newcommand{\U}[0]{\mathbf{U}}
\newcommand{\E}[0]{\mathbf{E}}
\newcommand{\X}[0]{\mathbf{X}}
\newcommand{\R}[0]{\mathbf{R}}
\newcommand{\Y}[0]{\mathbf{Y}}
\newcommand{\Z}[0]{\mathbf{Z}}
\newcommand{\A}[0]{\mathbf{A}}
\newcommand{\B}[0]{\mathbf{B}}
\newcommand{\T}[0]{\mathbf{T}}
\newcommand{\x}[0]{\mathbf{x}}
\newcommand{\y}[0]{\mathbf{y}}
\newcommand{\z}[0]{\mathbf{z}}
\newcommand{\G}[0]{\mathcal{G}}
\newcommand{\M}[0]{\mathcal{M}}
\newcommand{\F}[0]{\mathcal{F}}
\newcommand{\dom}[1]{\mathfrak{X}_{#1}}
\title{Revisiting the General Identifiability Problem}
\author[1]{Yaroslav Kivva}
\author[1]{Ehsan Mokhtarian}
\author[1]{Jalal Etesami}
\author[1,2]{Negar Kiyavash}
\affil[1]{%
    School of Computer and Communication Sciences\\
    EPFL\\
    Lausanne, Switzerland
}
\affil[2]{%
    College of Management of Technology\\
    EPFL\\
    Lausanne, Switzerland
}
\begin{document}
\maketitle

\begin{abstract}
    We revisit the problem of general identifiability originally introduced in \citep{lee2019general} for causal inference and note that it is necessary to add positivity assumption of observational distribution to the original definition of the problem. We show that without such an assumption the rules of do-calculus and consequently the proposed algorithm in \citep{lee2019general} are not sound. Moreover, adding the assumption will cause the completeness proof in \citep{lee2019general} to fail. Under positivity assumption, we present a new algorithm that is provably both sound and complete. A nice property of this new algorithm is that it establishes a connection between  general identifiability and classical identifiability by \cite{pearl1995causal} through decomposing the general identifiability problem into a series of classical identifiability  sub-problems.
\end{abstract}

\section{Introduction}
    Causal effect identification (or {ID} for short) problem, a central concern in causal inference, pertains to whether, given a causal graph, an interventional distribution can be uniquely computed from observational distribution \citep{pearl2009causality}.
    When all the variables in the system are observable, Pearl's do-calculus (a collection of three rules) allows determining whether a causal effect is identifiable  \citep{pearl1995causal}.
    Moreover, it was shown that Pearl's do-calculus is both sound and complete for ID problem \citep{shpitser2006identification, huang2008completeness}.
    
    In the classical setting of ID problem, both the causal graph and the observational distribution, denoted by $P(\V)$ ($\V$ is the set of observed variables in the causal graph), are given.
    However, it is assumed that no extra information (such as interventional distribution) is available.
    Recently, several work in the literature relax these assumptions \citep{tikka2019causal,shpitser2012identification,bareinboim2015recovering,bareinboim2014transportability,mokhtarian2022causal}.
    Before discussing these results, let us introduce a notion. 
    We denote by $P_{\x}(\Y)$ the distribution of a set of variables $\Y$ resulting from intervening on another set of variables $\X$.
    \cite{bareinboim2012causal} introduced the z-identification problem (or zID for short) in which for a fixed set $\Z \subseteq \V$, given a set of interventional distributions of the form $\{P_{\z'}(\V):\ \forall \Z' \subseteq \Z\}$, one asks whether $P_{\x}(\Y)$ is identifiable. 
    Note that the observational distribution $P(\V)$ always belongs to the set of available distributions.
    Furthermore, the form of given interventional distributions is restrictive.
    \cite{lee2019general} generalized zID and proposed so-called general identifiability problem (or gID for short). 
    In the gID, observational distribution is not necessarily given but instead we have access to $\{P_{\z_i}(\V)\}_{i=0}^{m}$ for some subsets $\{\Z_i\}_{i=0}^{m}$ of observed variables.  
    When one of $\Z_i$s is an empty set, we have access to $P(\V)$.
    
    We give formal definitions of identifiability (Definition \ref{def: id}) and general identifiability (Definition \ref{def: gid}) in Section \ref{sec: pre}.
    An important contribution of this paper is to add an assumption on the positivity of the observational distribution in the definition of general identifiability, i.e., $P(\mathbf{v})>0$ for all the realizations of observed variables.
    As we shall discuss in detail in Section \ref{sec: positivity}, this assumption, or at least a relaxed version of it, is crucial.
    More specifically, do-calculus-based methods are no longer sound for the ID problem if we ignore the positivity assumption. 
    In other words, there exist causal graphs with non-positive distribution $P(\V)$ such that do-calculus would claim a causal effect is identifiable while it cannot be uniquely computed from mere observational distribution.
    Violation of the positivity assumption can happen in practice. For instance, some empirical distributions would be zero when the observational data is not large enough.
    An even more important reason for including the positivity assumption is that without it, the proposed algorithm in the original gID in \citep{lee2019general} is not sound.
    Furthermore, as we shall discuss in Section \ref{sec: positivity}, the proof of completeness in \citep{lee2019general} relies on building two models that have zero probabilities for certain realizations of observed variables.
    Therefore, unfortunately, simply adding the positivity assumption to the definition of general identifiability (g-identifiability) will fail the proof technique in \citep{lee2019general} for the completeness of their proposed algorithm.
    On the other hand, ignoring the positivity assumption makes the soundness of their algorithm incorrect.
    
    In summary, our main contributions are as follows.
    We redefine the g-identifiability by adding the positivity assumption of observational distribution (Definition \ref{def: gid}).  
    We show in Section \ref{sec: positivity} that this assumption is essential for the gID problem.
    We then provide a sound and complete algorithm for the gID problem (Algorithm \ref{algo: GID}). 
    A nice property of our algorithm is that it establishes a connection between gID and classical ID by showing that gID can be reduced to solving a series of ID problems (Theorem \ref{thm: main}).

\section{Preliminaries} \label{sec: pre}
    \subsection{Terminology}
        Throughout the paper, we denote random variables by capital letters (e.g., $X$), their realizations by small letters (e.g., $x$), and sets by bold letters (e.g., $\X$ or $\x$). 
        We use $\dom{X}$ to denote the domain of random variable $X$ and $\dom{\X}$ to denote the Cartesian product of the domains of all the variables in set $\X$, i.e., $\prod_{X\in \X} \dom{X}$. 
        For integer numbers $a\leq b$, we use $[a:b]$ to denote $\{a,a+1,\cdots,b\}$.
        
        Suppose $\G=(\V \cup \U, \E)$ is a directed acyclic graph (DAG) over vertex set $\V \cup \U$, where $\V$ and $\U$ represent the set of observed and unobserved variables, respectively. 
        For each edge $(X, Y)\in \E$, $X$ is called a parent of $Y$, and $Y$ is called a child of $X$. 
        Vertex $X$ is an ancestor of $Y$ in $\G$ if a directed path exists from $X$ to $Y$ in $\G$.
        Note that $X$ is an ancestor of itself. 
        $\Pa{X}{\G}$, $\Ch{X}{\G}$, and $\Anc{X}{\G}$ denote the set of parents, children, and ancestors of $X$ in $\G$, respectively.
        These notations are also used for a set of vertices. 
        In this case, they refer to the union over the set elements. 
        For instance,  $\Pa{\X}{\G}=\bigcup_{X \in \X}\Pa{X}{\G}$.
        We assume $\G$ is semi-Markovian, that is for each $U \in \U$, $\Pa{U}{\G}= \varnothing$ and $|\Ch{U}{\G}|=2$. 
        Note that this is not a restrictive assumption as there exists an equivalency for identifiability in DAGs and semi-Markovian DAGs \citep{huang2006identifiability}.
        
        Structural Equation Models (SEMs) are used to model causal systems \citep{pearl2009causality}. 
        $\G$ is a causal graph for SEM $\mathcal{M}$ if each $X\!\in\! \V \cup \U$ is generated as $f_X(\Pa{X}{\G}, \epsilon_X)$, where $\{\epsilon_X\!: X\in \V\}$ is a set of mutually independent exogenous random variables.
        We denote by $P^{\M}(\cdot)$ the joint distribution of the variables in $\M$ and drop the superscript $\M$ when it is clear from the context.
        Markov factorization property implies that $P^{\M}(\cdot)$ can get factorized as
        \begin{equation} \label{eq: factorization 1}
            P(\mathbf{v})= \sum_{\U} \prod_{X \in \V}P(x|\Pa{X}{\G}) \prod_{U\in \U}P(u),
        \end{equation}
        where $\sum_{\U}$ denotes the marginalization over $\U$.
        \begin{definition}
            $\mathbb{M}(\G)$ denotes the set of SEMs with causal graph $\G$. 
            $\mathbb{M}^{+}(\G)$ denotes the set of SEMs $\M \in \mathbb{M}(\G)$  such that $P^{\M}(\mathbf{v})>0$ for each $\mathbf{v} \in \dom{\V}$.
        \end{definition}
        
        For $\X \subseteq \V$ and $\x \in \dom{\X}$, the intervention $do(\X = \x)$ converts $\M$ to a new SEM where the equations of $\X$ in $\M$ are replaced by the constants in $\x$. 
        We denote by $P_{\x}(\cdot)$ the corresponding post interventional distribution.
        \begin{remark}
            For three disjoint subsets $\X, \Y, \mathbf{W}$ of $\V$, if $\M \in \mathbb{M}^+(\G)$, then $P^{\M}_{\x}(\y \mid \mathbf{w})>0$
            for any $\x \in \dom{\X}$, $\y \in \dom{\Y}$, and $\mathbf{w} \in \dom{\mathbf{W}}$.
        \end{remark}
        For $\mathbf{v} \in \dom{\V}$ and $\mathbf{S} \subseteq \V$, we define $Q[\mathbf{S}](\cdot)$ by 
        \begin{equation} \label{eq: Q}
            Q[\mathbf{S}](\mathbf{v}) := P_{\mathbf{v}\setminus \mathbf{s}}(\mathbf{s}).
        \end{equation}
        Similar to Equation \eqref{eq: factorization 1}, $Q[\mathbf{S}]$ can get factorized as 
        \begin{equation} \label{eq: factorization 2}
            Q[\mathbf{S}](\mathbf{v})= \sum_{\U} \prod_{S \in \mathbf{S}}P(s|\Pa{S}{\G}) \prod_{U\in \U}P(u).
        \end{equation}

        For $\X \subseteq \V$, $\G[\X]$ denotes the inducing subgraph of $\G$ over $\X$ and the unobserved variables with both children in $\X$. 
        Note that $\G$ is semi-Markovian.
        Furthermore, we denote by $\G_{\X}$ the partially directed graph over $\X$ obtained by removing unobserved variables of $\G[\X]$ and replacing them by bidirected edges.
    
        \begin{definition}[\textbf{c-component, c-forest}] \label{def: c-component and c-forest}
            For $\X \subseteq \V$, confounded components or c-components of $\X$ are the connected components of the graph obtained by only the bidirected edges of $\G_{\X}$.
            Also, a subgraph of $\G_\V$ is called a single c-component if its bidirected edges form a connected graph. 
            Suppose $\mathcal{H}$ is a subgraph of $\G$ over observed vertices $\X$. 
            The root set of $\mathcal{H}$ is the maximal subset of $\X$ with no children in $\mathcal{H}$. 
            $\mathcal{H}$ is called $\mathbf{R}$-rooted c-forest if $\mathbf{R}$ is the root set of $\mathcal{H}$, $\mathcal{H}_{\X}$ is a single c-component, and each node in $\X$ has at most one child in $\mathcal{H}$. 
        \end{definition}
        
        \begin{figure}
            \centering
            \begin{subfigure}[b]{.23\textwidth}
                \centering
                \begin{tikzpicture}[
                roundnode/.style={circle, draw=black!60,, fill=white, thick, inner sep=1pt},
                dashednode/.style = {circle, draw=black!60, dashed, fill=white, thick, inner sep=1pt},
                ]
                    \node[roundnode]        (X1)        at (0, 0)                   {$X_1$};
                    \node[roundnode]        (X2)        at (2, 0)                   {$X_2$};
                    \node[roundnode]        (Y1)        at (0, -2)                  {$Y_1$};
                    \node[roundnode]        (Y2)        at (2, -2)                  {$Y_2$};
                    \node[dashednode]       (U1)        at (1, 0)                   {$U_1$};
                    \node[dashednode]       (U2)        at (1, -2)                   {$U_2$};
                    
                    \draw[-latex] (X1.south) -- (Y1.north) ;
                    \draw[-latex] (X2) -- (Y2);
                    \draw[latex-, dashed] (X1) -- (U1);
                    \draw[-latex, dashed] (U1) -- (X2);
                    \draw[latex-, dashed] (Y1) -- (U2);
                    \draw[-latex, dashed] (U2) -- (Y2);
                \end{tikzpicture}
                \caption{$\G$}
                \label{fig: G}
            \end{subfigure}
            \hfill
            \begin{subfigure}[b]{.23\textwidth}
                \centering
                \begin{tikzpicture}[
                roundnode/.style={circle, draw=black!60,, fill=white, thick, inner sep=1pt},
                dashednode/.style = {circle, draw=black!60, dashed, fill=white, thick, inner sep=1pt},
                ]
                    \node[roundnode]        (X1)        at (0, 0)                   {$X_1$};
                    \node[roundnode]        (X2)        at (2, 0)                   {$X_2$};
                    \node[roundnode]        (Y1)        at (0, -2)                  {$Y_1$};
                    \node[roundnode]        (Y2)        at (2, -2)                  {$Y_2$};
                    
                    \draw[-latex] (X1.south) -- (Y1.north) ;
                    \draw[-latex] (X2) -- (Y2);
                    \draw[dashed,style={<->}] (X1) -- (X2);
                    \draw[dashed,style={<->}] (Y1) -- (Y2);
                \end{tikzpicture}
                \caption{$\G_{\V}$}
                \label{fig: Gv}
            \end{subfigure}
            
            \begin{subfigure}[b]{.23\textwidth}
                \centering
                \begin{tikzpicture}[
                roundnode/.style={circle, draw=black!60,, fill=white, thick, inner sep=1pt},
                dashednode/.style = {circle, draw=black!60, dashed, fill=white, thick, inner sep=1pt},
                ]
                    \node[roundnode]        (X1)        at (0, 0)                   {$X_1$};
                    \node[roundnode]        (X2)        at (2, 0)                   {$X_2$};
                    \node[dashednode]       (U1)        at (1, 0)                   {$U_1$};
                    
                    \draw[latex-, dashed] (X1) -- (U1);
                    \draw[-latex, dashed] (U1) -- (X2);
                \end{tikzpicture}
                \caption{$\G[X_1,X_2]$}
                \label{fig: G[X1,X2]}
            \end{subfigure}
            \hfill
            \begin{subfigure}[b]{.23\textwidth}
                \centering
                \begin{tikzpicture}[
                roundnode/.style={circle, draw=black!60,, fill=white, thick, inner sep=1pt},
                dashednode/.style = {circle, draw=black!60, dashed, fill=white, thick, inner sep=1pt},
                ]
                    \node[roundnode]        (X1)        at (0, 0)                   {$X_1$};
                    \node[roundnode]        (X2)        at (2, 0)                   {$X_2$};
                    
                    \draw[dashed,style={<->}] (X1) -- (X2);
                \end{tikzpicture}
                \caption{$\G_{\{X_1,X_2\}}$}
                \label{fig: G_X1,X2}
            \end{subfigure}
            \caption{An example for a causal DAG $\G$ over observed variables $\V = \{X_1,X_2,Y_1,Y_2\}$ and unobserved variables $\U=\{U_1,U_2\}$.}
            \label{fig: counterexample}
        \end{figure}
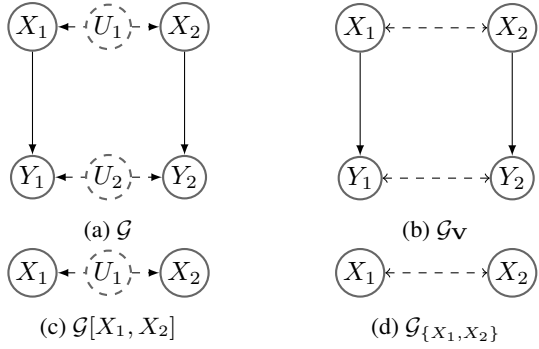
    
        \textbf{Example 1:}
        Consider the causal graph $\G$ in Figure \ref{fig: G}, where $\V = \{X_1,X_2,Y_1,Y_2\}$ and $\U = \{U_1,U_2\}$.
        $\G_{\V}$ is depicted in Figure \ref{fig: Gv}.
        The c-components of $\V$ are $\{X_1,X_2\}$ and $\{Y_1,Y_2\}$.
        Figure \ref{fig: G[X1,X2]} depicts the inducing subgraph of $\G$ over $\{X_1,X_2\}$ and $\G_{\{X_1,X_2\}}$ is depicted in Figure \ref{fig: G_X1,X2}. 
        Herein, $\G[\{X_1,X_2\}]$ is $\{X_1,X_2\}$-rooted c-forest since $\G_{\{X_1,X_2\}}$ is single c-component.
        
    \subsection{Identifiability}    
        The goal in the \emph{identifiability} problem is to understand whether a post-interventional distribution can be uniquely computed from observational distribution $P(\V)$, given the causal graph \citep{pearl2009causality}.
        
        \begin{definition} [\textbf{identifiability}] \label{def: id}
            Suppose $\X$ and $\Y$ are two disjoint subsets of $\V$.
            The causal effect of $\X$ on $\Y$ is said to be identifiable from $\G$ if for any $\x \in \dom{\X}$ and $\y \in \dom{\Y}$, $P^{\M}_{\x}(\y)$ is uniquely computable from $P^{\M}(\mathbf{V})$ in any SEM $\mathcal{M} \in \mathbb{M}^{+}(\G)$.
            Also, $Q[\Y]$ is said to be identifiable from $\G$ if the causal effect of $\V \setminus \Y$ on $\Y$ is identifiable from $\G$.
        \end{definition}
        \cite{huang2008completeness} showed that identifiability of a causal effect is equivalent to identifiability of a specific $Q[\cdot]$.
        \begin{proposition}[\cite{huang2008completeness}]\label{prp: 1}
            The causal effect of $\X$ on $\Y$ is identifiable from $\G$ if and only $Q[\Anc{\Y}{\G_{\V \setminus \X}}]$ is identifiable from $\G$.
        \end{proposition}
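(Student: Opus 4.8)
The plan is to fix $\mathbf{D}:=\Anc{\Y}{\G_{\V\setminus\X}}$ (so that $\Y\subseteq\mathbf{D}\subseteq\V\setminus\X$) and $\W:=\V\setminus(\X\cup\mathbf{D})$, and to prove the two implications separately --- the ``if'' by a direct do-calculus computation and the ``only if'' by the contrapositive, via the hedge characterization of identifiability.

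For the ``if'' direction I would first establish the identity
\begin{equation*}
    P_{\x}(\y)\;=\;\sum_{\mathbf{d}\setminus\y} Q[\mathbf{D}](\x,\mathbf{d},\mathbf{w})
    \qquad\text{for every fixed }\mathbf{w}\in\dom{\W},
\end{equation*}
where the sum runs over the $\mathbf{d}\in\dom{\mathbf{D}}$ that agree with $\y$ on the $\Y$-coordinates. On one hand, $P_{\x}(\y)=\sum_{\mathbf{d}\setminus\y}P_{\x}(\mathbf{d})$ by plain marginalization (legitimate because $\mathbf{D}\subseteq\V\setminus\X$). On the other hand, by the very definition of $\mathbf{D}$ no vertex of $\W$ is an ancestor of $\mathbf{D}$ in the graph obtained from $\G$ by deleting the arrows into $\X$, so intervening additionally on $\W$ is inert --- an iterated application of Rule~3 of do-calculus --- which yields $P_{\x}(\mathbf{d})=P_{\x,\mathbf{w}}(\mathbf{d})=P_{\mathbf{v}\setminus\mathbf{d}}(\mathbf{d})=Q[\mathbf{D}](\mathbf{v})$ with $\mathbf{v}=(\x,\mathbf{d},\mathbf{w})$. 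Thus $P_{\x}(\y)$ is a fixed functional of $Q[\mathbf{D}]$ that never conditions on a zero-probability event, so if $Q[\mathbf{D}]$ is identifiable from $\G$ --- i.e.\ computable from $P(\V)$ throughout $\mathbb{M}^{+}(\G)$ --- then so is the causal effect of $\X$ on $\Y$.

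For the ``only if'' direction I would argue the contrapositive. Suppose $Q[\mathbf{D}]$ is not identifiable from $\G$. The c-component factorization $Q[\mathbf{D}]=\prod_{j}Q[\mathbf{D}_j]$ over the c-components $\mathbf{D}_1,\dots,\mathbf{D}_k$ of $\G[\mathbf{D}]$ makes $Q[\mathbf{D}]$ identifiable as soon as every $Q[\mathbf{D}_j]$ is, so some factor $Q[\mathbf{D}_j]$ must fail to be identifiable. By the completeness of do-calculus / the hedge criterion \citep{shpitser2006identification,huang2008completeness}, this failure is witnessed by a hedge: a pair of $\R$-rooted c-forests $F'\subseteq F$ with $\R\subseteq\mathbf{D}_j$ and $F'\subseteq\mathbf{D}_j$, while $F$ contains some vertex outside $\mathbf{D}_j$. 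Since $\mathbf{D}_j\subseteq\mathbf{D}=\Anc{\Y}{\G_{\V\setminus\X}}$, every vertex of $\R$ is an ancestor of $\Y$ along a directed path of $\G$ avoiding $\X$, and $F'$ is already disjoint from $\X$. One then enlarges $(F,F')$ --- keeping both of them $\R$-rooted c-forests, keeping $F'$ disjoint from $\X$, and keeping each vertex with at most one child --- until $F$ meets $\X$, which is possible thanks to those directed paths into $\Y$. The enlarged pair is a hedge for the causal effect of $\X$ on $\Y$, so that effect is not identifiable from $\G$.

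I expect the last paragraph to hold the only real obstacle, and for two reasons. First, the ``hedge-to-hedge'' surgery must be performed so that \emph{all} the defining properties of the two c-forests are preserved while $F$ is pushed into $\X$; making this bookkeeping honest, and checking it can always be completed given $\mathbf{D}_j\subseteq\Anc{\Y}{\G_{\V\setminus\X}}$, is the combinatorial core of the implication. Second --- and this is precisely where the positivity stipulation of the present paper bites --- the two SEMs read off the final hedge must lie in $\mathbb{M}^{+}(\G)$, whereas the textbook parity-type constructions attached to a hedge place zero mass on some realizations of $\V$; one must perturb or redesign the mechanisms so that the two induced observational distributions become strictly positive yet remain \emph{exactly} equal, and so that their disagreement on $P_{\x}(\y)$ survives the marginalization $\sum_{\mathbf{d}\setminus\y}$. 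It is this last requirement that forces the witnesses to be built from the hedge rather than taken as two arbitrary models separating $Q[\mathbf{D}]$.
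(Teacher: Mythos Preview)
Your ``if'' direction is correct and matches the paper's argument for the g-identifiability analogue (Proposition~\ref{prp: 3}): both reduce to the identity $P_{\x}(\y)=\sum_{\mathbf{D}\setminus\Y}Q[\mathbf{D}]$, obtained because $\mathbf{D}$ is ancestral in $\G[\V\setminus\X]$.

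The ``only if'' direction has a gap precisely where you suspect one, though not for the reason you give. You propose to \emph{enlarge} the hedge $(F,F')$ for $Q[\mathbf{D}_j]$ until $F$ meets $\X$, justifying this by ``those directed paths into $\Y$.'' That cannot work: the paths you invoke run from $\R$ into $\Y$ while \emph{avoiding} $\X$, so they provide no mechanism for pushing $F$ toward $\X$. Fortunately no enlargement is needed. Every observed vertex of the $\R$-rooted c-forest $F$ has a directed $F$-path to $\R\subseteq\mathbf{D}$; if $F\cap\X=\varnothing$ that path lies inside $\G[\V\setminus\X]$, so every observed vertex of $F$ belongs to $\Anc{\R}{\G_{\V\setminus\X}}\subseteq\mathbf{D}$, and since $F$ is a single c-component its observed vertices then all lie in one c-component of $\G[\mathbf{D}]$, namely $\mathbf{D}_j$---contradicting that $F$ leaves $\mathbf{D}_j$. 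Hence $F\cap\X\neq\varnothing$ automatically, and $(F,F')$ is already a hedge for $P_{\x}(\y)$. Your positivity worry is likewise misplaced here: the hedge criterion for \emph{classical} ID is established in \citep{shpitser2006identification,huang2008completeness} under the positivity assumption, so you may invoke it as a black box rather than rebuild the counter-models.

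For comparison, the paper does not prove Proposition~\ref{prp: 1} itself (it is cited from \citep{huang2008completeness}), but its proof of the gID generalization, Proposition~\ref{prp: 3}, takes a genuinely different route for the necessary direction. Instead of hedges it uses an edge-manipulation and explicit model-construction argument (Lemmas~\ref{lemma: prp3 - 1}--\ref{lemma: prp3 - 3}) that peels off the vertices of $\mathbf{D}\setminus\Y$ one at a time while preserving non-g-identifiability, building positive counter-models at each step. That route is forced in the gID setting because no hedge-style criterion is available there until Theorem~\ref{thm: main} is proved. Your hedge approach is cleaner for the classical statement but does not transfer to gID without that machinery.
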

        For a subset $\textbf{S}$ of observed nodes, \citep{tian2003ID} showed that identifiability of a $Q[\textbf{S}]$ is equivalent to identifiability of all its c-components.
        \begin{proposition}[\cite{tian2003ID}] \label{prp: 2}
            Suppose $\mathbf{S}_1,\cdots,\mathbf{S}_l$ are the c-components of $\mathbf{S}\subseteq \V$. $Q[\mathbf{S}]$ is identifiable from $\G$ if and only if $Q[\mathbf{S}_i]$ is identifiable from $\G$ for all $i\in [1:l]$.
        \end{proposition}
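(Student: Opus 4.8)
The plan is to show that $Q[\mathbf{S}]$ and the collection $\{Q[\mathbf{S}_1],\dots,Q[\mathbf{S}_l]\}$ are inter-definable by \emph{model-independent} formulas, so that each is a fixed functional of $P(\V)$ precisely when the other is. Since identifiability from $\G$ means ``expressible as a fixed functional of $P(\V)$ valid on all of $\mathbb{M}^{+}(\G)$'' and such functionals compose, the claimed equivalence then follows immediately. Throughout I may restrict attention to $\M \in \mathbb{M}^{+}(\G)$, so by the Remark every quantity of the form $P_{\x}(\y)$ — in particular every $Q[\cdot](\mathbf{v})$ appearing below — is strictly positive, which legitimizes all the divisions.

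\textbf{Step 1: $\{Q[\mathbf{S}_i]\}_i$ determine $Q[\mathbf{S}]$.} Starting from the factorization \eqref{eq: factorization 2}, I would argue that the unobserved variables relevant to $\mathbf{S}$ (those with a child in $\mathbf{S}$) split across the c-components of $\mathbf{S}$: since $\G$ is semi-Markovian each such $U$ has exactly two children, and if both lie in $\mathbf{S}$ they lie in the same c-component (they are joined by a bidirected edge in $\G_{\mathbf{S}}$), while if only one does, $U$ is still associated with a single c-component. Hence these $U$'s partition into blocks $\U_1,\dots,\U_l$, and each factor $P(s\mid \Pa{S}{\G})$ with $S\in\mathbf{S}_i$ involves only $U$'s from $\U_i$. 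Pushing $\sum_{\U}$ inside \eqref{eq: factorization 2} then yields, in every SEM of $\mathbb{M}(\G)$,
\begin{equation*}
    Q[\mathbf{S}](\mathbf{v}) = \prod_{i=1}^{l} Q[\mathbf{S}_i](\mathbf{v}),
\end{equation*}
so if each $Q[\mathbf{S}_i]$ is identifiable, so is their product $Q[\mathbf{S}]$.

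\textbf{Step 2: $Q[\mathbf{S}]$ determines each $Q[\mathbf{S}_i]$.} Fix an ordering $S_1 < \dots < S_n$ of $\mathbf{S}$ consistent with a topological order of $\G$ and write $\mathbf{S}^{(j)} := \{S_1,\dots,S_j\}$. First I would note that summing out a childless node behaves well: from \eqref{eq: factorization 2}, if $S$ has no child in $\mathbf{T}\subseteq\mathbf{S}$ then $\sum_{s}Q[\mathbf{T}](\mathbf{v}) = Q[\mathbf{T}\setminus\{S\}](\mathbf{v})$, because $\sum_{s}P(s\mid\Pa{S}{\G})=1$ and $s$ occurs in no other factor. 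Applying this repeatedly in reverse topological order gives $Q[\mathbf{S}^{(j)}](\mathbf{v}) = \sum_{s_{j+1},\dots,s_n} Q[\mathbf{S}](\mathbf{v})$, so every $Q[\mathbf{S}^{(j)}]$, and hence every ratio $Q[\mathbf{S}^{(j)}](\mathbf{v})/Q[\mathbf{S}^{(j-1)}](\mathbf{v})$, is a functional of $Q[\mathbf{S}]$. The key claim is that this incremental ratio sees only the c-component of $S_j$: writing $\mathbf{H}_j$ for the intersection of $\mathbf{S}^{(j)}$ with the c-component $\mathbf{S}_i$ containing $S_j$,
\begin{equation*}
    \frac{Q[\mathbf{S}^{(j)}](\mathbf{v})}{Q[\mathbf{S}^{(j-1)}](\mathbf{v})} = \frac{Q[\mathbf{H}_j](\mathbf{v})}{Q[\mathbf{H}_j\setminus\{S_j\}](\mathbf{v})}.
\end{equation*}
Multiplying these equalities over the indices $j$ with $S_j\in\mathbf{S}_i$ telescopes the right-hand side to $Q[\mathbf{S}_i](\mathbf{v})$ (using $Q[\varnothing](\mathbf{v})=1$), exhibiting $Q[\mathbf{S}_i]$ as a functional of $Q[\mathbf{S}]$; hence identifiability of $Q[\mathbf{S}]$ transfers to every $Q[\mathbf{S}_i]$.

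\textbf{Main obstacle.} The only nonroutine step is the displayed c-component identity in Step 2. I would prove it by expanding both sides via \eqref{eq: factorization 2} and invoking d-separation in the relevant post-interventional graph to show that, after the interventions implicit in $Q[\mathbf{S}^{(j)}]$, the node $S_j$ is conditionally independent of $\mathbf{S}^{(j-1)}\setminus\mathbf{H}_j$ given $\mathbf{H}_j\setminus\{S_j\}$ — intuitively, members of $\mathbf{S}$ outside the c-component of $S_j$ share no latent confounder with $S_j$ and the interventions cut the remaining directed paths, so they add no information. Making this independence precise while carefully tracking which variables are intervened on and the structure of the induced subgraph $\G[\mathbf{S}^{(j)}]$ is the technical heart; everything else is bookkeeping with \eqref{eq: factorization 2} together with the strict positivity guaranteed by working in $\mathbb{M}^{+}(\G)$.
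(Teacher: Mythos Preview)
Your proposal is correct and matches the argument the paper invokes by citation: Proposition~\ref{prp: 2} is quoted from \cite{tian2003ID} without proof, and the paper's own proof of the g-version (Proposition~\ref{prp: 4}) uses exactly your two ingredients --- the product formula $Q[\mathbf{S}]=\prod_i Q[\mathbf{S}_i]$ for one direction and Tian's chain-rule formula (his Lemma~4, Equations~(71)--(72)) for the other. One simplification: your ``main obstacle'' is lighter than you suggest, since the displayed ratio identity follows immediately from reapplying your Step~1 factorization to $\mathbf{S}^{(j)}$ with the coarser partition $\{\mathbf{S}^{(j)}\cap\mathbf{S}_{i'}\}_{i'}$ (any $U$ with both children in $\mathbf{S}^{(j)}\subseteq\mathbf{S}$ already has them in the same $\mathbf{S}_{i'}$), so no separate d-separation argument is needed.
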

        
        \begin{algorithm}[t]
            \caption{Identifiability}
            \label{algo: ID}
            \begin{algorithmic}[1]
                \STATE \textbf{Function ID}($\X,\Y,\G$)
                \STATE \textbf{Output:} True, if the causal effect of $\X$ on $\Y$ is identifiable from $\G$. 
                \STATE $\mathbf{S} \gets \Anc{\Y}{\G_{\V \setminus \X}}$
                \STATE $\{\mathbf{S}_1,\dots,\mathbf{S}_l\} \gets$ c-components of $\mathbf{S}$
                \FOR{$i$ from $1$ to $l$}
                    \IF{\textbf{ID\_Single}($\mathbf{S}_i,\G$) = False}
                        \STATE \textbf{Return} False
                    \ENDIF
                \ENDFOR
                \STATE \textbf{Return} True
            \end{algorithmic}
            \hrulefill
            \begin{algorithmic}[1]
                \STATE \textbf{Function ID\_Single}($\mathbf{S},\G$)
                \STATE \textbf{Output:} True, if $Q[\mathbf{S}]$ is identifiable from $\G$, where $\mathbf{S}$ is a single c-component.
                \STATE $\Y \gets \V$
                \WHILE{$\Y \neq \mathbf{S}$}
                    \STATE $\mathbf{A} \gets \Anc{\mathbf{S}}{\G_{\Y}}$
                    \STATE $\Y_{new} \gets $ The c-component of $\mathbf{A}$ that contains $\mathbf{S}$
                    \IF{$\Y_{new} = \Y$}
                        \STATE \textbf{Return} False
                    \ELSE
                        \STATE $\Y \gets \Y_{new}$
                    \ENDIF
                \ENDWHILE
                \STATE \textbf{Return} True
            \end{algorithmic}
        \end{algorithm}
        
        Based on Propositions \ref{prp: 1} and \ref{prp: 2}, \citep{tian2003ID} proposed an algorithm  that for two disjoint subsets $\X$ and $\Y$ checks the identifiability of the causal effect of $\X$ on $\Y$ from observational distribution given the causal graph $\G$.
        As we will use their algorithm as a subroutine in our algorithm for g-identifiability, we present their method in Algorithm \ref{algo: ID}. 
        In this algorithm, function \textbf{ID\_Single} determines whether $Q[\mathbf{S}]$ is identifiable from $\G$ when $\mathbf{S}$ is a single c-component. 
        More precisely, this function starts from $\Y = \V$ and at each step, it decreases $\Y$ such that both $Q[\Y]$ remains identifiable from $\G$ and $\mathbf{S} \subseteq \Y$.
        If this procedure can reduce $\Y$ to $\mathbf{S}$, then $Q[\mathbf{S}]$ is identifiable otherwise,  $Q[\mathbf{S}]$ is not identifiable. 
        This algorithm is both sound and complete \citep{shpitser2006identification, huang2008completeness}.
        
    \subsection{General identifiability}
        In the previous section, we explained the classical identifiability problem which determines whether a causal effect is identifiable from observational distribution given the causal graph. 
        As we discussed earlier, in many problems of interest, the goal is to identify a causal effect from a set of both observational and interventional distributions given a causal graph. 
        A variant of this problem was defined by \cite{lee2019general} under the name g-identifiability.
        
        \begin{definition}[g-identifiability in  \citep{lee2019general}]\label{def: gid_old}
            Let $\X, \Y$ be disjoint subsets of $\V$, $\mathbb{Z}=\{\Z_i\}_{i=0}^{m}$ be a collection of subsets of $\V$, and let $\G$ be a causal diagram. 
            $P_{\x}(\y)$ is said to be g-identifiable from $\mathbb{Z}$ in $\G$, if $P_{\mathbf{x}}(\mathbf{y})$ is uniquely computable from distributions $\{P(\V|do(\z))\}_{\Z\in \mathbb{Z}, \z\in \dom{\Z}}$ in any causal model which induces $\G$.
        \end{definition}
        
        Note that the causal model in this definition belongs to $\mathbb{M}(\G)$.
        However, as we shall discuss in Section \ref{sec: positivity}, it is crucial to assume that the causal model is positive, i.e., it belongs to $\mathbb{M}^+(\G)$.
        Therefore, we modify the above definition as follows.
    
        \begin{definition}[g-identifiability] \label{def: gid}
            Suppose $\mathbb{A}=\{\A_i\}_{i=0}^m$ is a collection of subsets of $\V$ and $\X, \Y$ are two disjoint subsets of $\V$.
            The causal effect of $\X$ on $\Y$ is said to be g-identifiable from $(\mathbb{A}, \G)$ if for any $\x \in \dom{\X}$ and $\y \in \dom{\Y}$, $P^{\M}_{\x}(\y)$ is uniquely computable from the set of distributions $\{Q[\A_i]\}_{i=0}^m$ in any SEM $\M \in \mathbb{M}^{+}(\G)$.
            Also, $Q[\Y]$ is said to be g-identifiable from $(\mathbb{A}, \G)$ if the causal effect of $\V \setminus \Y$ on $\Y$ is g-identifiable from $(\mathbb{A}, \G)$.
        \end{definition}
        
        Note that knowing $P(\V|do(\Z))$ for some subset $\Z\subseteq\V$ is equivalent to knowing $Q[\V \setminus \Z]$, and therefore, by setting $\A_i = \V \setminus \Z_i$, the two aforementioned definitions are the same except for the positivity assumption.
        For the remainder of this paper, we use Definition \ref{def: gid} for g-identifiability.
    
\section{On the positivity assumption in g-identifiability} \label{sec: positivity}
    In our definition of g-identifiability and the classical definition of identifiability (Definitions \ref{def: gid} and \ref{def: id}), only SEMs that belong to $\mathbb{M}^+(\G)$ instead of $\mathbb{M}(\G)$ are considered \citep{huang2008completeness,shpitser2006identification}.
    That is, SEMs with positive probabilities for any realization $\mathbf{v} \in \dom{\V}$. 
    In this section, we discuss why this assumption is crucial by showing that ignoring positivity leads to wrong conclusions.
    As a consequence, since \cite{lee2019general} presented the soundness and completeness of their algorithm for g-identifiability, ignoring the positivity assumption, we discuss how after imposing the assumption, their results are no longer valid.
    We further show that this issue cannot be fixed by the relaxed version of the positivity assumption introduced by \citep{shpitser2006identification}.
    After this discussion, we present a new algorithm in the next section for g-identifiability and prove its soundness and completeness under the positivity assumption.
    
    \subsection{Soundness requires positivity}
        The following example shows that do-calculus-based methods (e.g., Algorithm \ref{algo: ID}) are no longer sound for the ID problem ignoring the positivity assumption.
        
        \textbf{Example 2:}
        Consider again the causal graph in Figure \ref{fig: counterexample}.
        Herein, do-calculus-based methods (e.g., Algorithm \ref{algo: ID}) would report that the causal effect of $\mathbf{X}=\{X_1,X_2\}$ on $\mathbf{Y}=\{Y_1,Y_2\}$ is identifiable given $\G$.
        However, by ignoring the positivity assumption, we can introduce two SEMs $\M_1$ and $\M_2$ in $\mathbb{M}(\G)$ that have the same observational distribution but result in two different post-interventional distributions after intervening on $\{X_1,X_2\}$. 
        This clearly contradicts with the identifiability of $P_{x_1,x_2}(y_1,y_2)$.
        
        All variables in both models are binary. 
        Also, for both models and $i\in \{1,2\}$, we define $P(U_i=0) = P(U_i=1) = 0.5$ and $X_i = U_1$.
        In model $\M_1$, we define $Y_1,Y_2$ to have the following conditional distributions:
        \begin{equation*}
            \begin{split}
                P^{\M_1}(y_1 \mid u_2, x_1) = &\frac{1}{3} \mathds{1}_{y_1=u_2} + \frac{2}{3}\mathds{1}_{y_1\neq u_2},
                \\
                P^{\M_1}(y_2 \mid u_2, x_2) = &\frac{1}{3} \mathds{1}_{y_2 = (u_2\oplus x_2)} + \frac{2}{3}\mathds{1}_{y_2\neq(u_2\oplus x_2)},
            \end{split}
        \end{equation*}
        where $\mathds{1}_{A}$ is the indicator function which is one whenever the statement in $A$ is true and is zero otherwise. 
        For model $\M_2$, we define the conditional distributions of $Y_1,Y_2$ as
        \begin{align*}
                &P^{\M_2}(y_1 \mid u_2, x_1) = \frac{2}{3}\mathds{1}_{y_1=(u_2\oplus x_1)} + \frac{1}{3}\mathds{1}_{y_1\neq(u_2\oplus x_1)},\\
                &P^{\M_2}(y_2|u_2, x_2) = \frac{2}{3}\mathds{1}_{y_2=u_2} + \frac{1}{3}\mathds{1}_{y_2\neq u_2}.
        \end{align*}
        
        It is straightforward to see that for any realizations $(x_1,x_2,y_1,y_2) \in \dom{\V}$, we have
        \begin{equation*}
            P^{\M_1}(x_1,x_2,y_1,y_2) = P^{\M_2}(x_1,x_2,y_1,y_2).
        \end{equation*}
        However,
        \begin{equation*}
        \begin{split}
            \frac{4}{9}=P^{\M_1}_{x_1=0, x_2=1}(Y_1=0, Y_2=0) \\
            \neq 
            P^{\M_2}_{x_1=0, x_2=1}(Y_1=0, Y_2=0) = \frac{5}{9}.
        \end{split}
        \end{equation*}
        Note that $\M_1$ and $\M_2$ do not belong to $\mathbb{M}^+(\G)$, since $P(x_1=0, x_2=1, y_1, y_2)= 0$ for any $y_1\in \dom{Y_1}$ and $y_2\in \dom{Y_2}$.
        This example shows that if we use $\mathbb{M}(\G)$ instead of $\mathbb{M}^+(\G)$ in Definition \ref{def: id}, the causal effect of $\mathbf{X}$ on $\mathbf{Y}$ is not identifiable from $\G$, and therefore, do-calculus-based methods such as the proposed algorithm in \cite{lee2019general} are not sound.
        Specifically, the proposed algorithm in \cite{lee2019general} suggests the causal effect in this example is g-identifiable and returns the following expression:
        \begin{equation*}
            P_{x_1, x_2}(y_1,, y_2) = P(y_1|x_1, x_2)P(y_2|y_1, x_2, x_1).
        \end{equation*}
        This expression is not well-defined for all realizations ignoring the positivity assumption because for some realizations $P(x_1, x_2)$ is zero which means the conditional distribution $P(y_1|x_1, x_2)$ is not well-defined.
        Thus, the algorithm in \cite{lee2019general} \textit{is not sound}.

        Next, we discuss the g-identifiability in \cite{lee2019general} and show that the completeness result provided in that work relies on two models in $\mathbb{M}(\G)$ that violate the positivity assumption.
    
    \subsection{Completeness}
        \citep{lee2019general} presented necessary and sufficient conditions to determine if a causal effect $P_{\textbf{x}}(\textbf{y})$ is g-identifiable w.r.t. the Definition \ref{def: gid_old}.
        To prove that their proposed conditions are necessary for g-identifiability, they construct two models $\M_1$ and $\M_2$ such that the available distributions in the definition of the problem are the same for both models yet $P^{\M_1}_{\textbf{x}}(\textbf{y})\neq P^{\M_2}_{\textbf{x}}(\textbf{y})$.
        The issue here is that they constructed their models ignoring the positivity assumption, allowing for zero probability for some realizations. 
        In fact, having zero probabilities in their model is essential for the proof. 
        For instance, Lemma 3 in \cite{lee2019general} states that under certain conditions, there is an observed variable $R\in \V$ such that it takes value zero in both their models with probability one.
        In other words, the probability of $R$ not being zero is zero (see Appendix \ref{apn: pos assumption simple} for more details.)
        This shows that adding the positivity assumption to the definition of gID will fail the proof technique in \citep{lee2019general} for the completeness of their proposed algorithm.
        
        It is noteworthy to mention that an alternative positivity assumption is introduced by \cite{shpitser2006identification}. 
        Below, we describe this assumption and discuss that the models introduced in \cite{lee2019general} also violate this assumption.
    
    \subsection{Relaxed positivity assumption}
        \cite{shpitser2006identification} show that in the ID problem of a causal effect \mbox{$P_{\mathbf{x}}(\mathbf{y})$}, one can relax the positivity constraint \mbox{$P(\mathbf{V})>0$} to \mbox{$P(\mathbf{X}|\left(\Pa{\mathbf{X}}{\G}\cap\mathbf{V}\right) \setminus \mathbf{X})>0$}.
        They show that the rules of do-calculus are sound under the relaxed positivity assumption.
        However, as we mentioned, even the relaxed constraint does not hold for the constructed models in \mbox{\cite{lee2019general}}.
        More precisely, consider the causal graph $\G$ in \mbox{Figure \ref{fig:countr_exmpl_complex}} which is brought here from \cite{lee2019general}. 
        Assume that we are interested in g-identifying the causal effect $Q[R]$ from $\mathbb{Z}=\{\varnothing\}$, i.e., from mere observational distribution $P(\V)$, w.r.t. Definition \ref{def: gid_old}.
        In this case, $\X=\{T_1, T_2, T_3\}$ and therefore:
        \begin{equation*}
            P(\mathbf{X}|\left(\Pa{\mathbf{X}}{\G}\cap\mathbf{V}\right) \setminus \mathbf{X}) = P(T_1, T_2, T_3).  
        \end{equation*}
        
        The result in \cite{lee2019general} implies that the causal effect $Q[R]$ is not g-identifiable given the causal graph $\G$ in Figure \ref{fig:countr_exmpl_complex}. 
        To prove the non g-identifiability, \cite{lee2019general} constructed two models $\M_1$ and $\M_2$ that impose similar observational distributions, i.e., $P^{\M_1}(\V)=P^{\M_2}(\V)$, while the causal effect $Q[R]$ under these two models are not the same for at least one realization. 
        Next, we present these two models and show that they violate the positivity assumption claimed in \cite{shpitser2006identification}, i.e., $P(T_1,T_2,T_3)$ is zero for certain realizations of $\{T_1,T_2,T_3\}$.
        
        By the construction in \cite{lee2019general}, variables $T_3, U_1, U_2, U_3$ are binary variables and $T_1, T_2$ are binary vectors of length two. 
        For both models, all unobserved variables are defined to be binary with uniform distribution, and the observed variables $T_1, T_2, T_3$ are defined as follows.
        \begin{align*}
            &T_3 = U_2 \oplus U_3,\\
            &T_{2, 1} = T_3,\quad  T_{2, 2} = U_1,\\
            &T_{1, 1} = T_{2, 1}\oplus U_2,\quad T_{1, 2} = T_{2, 2}.  
        \end{align*}
        In model $\M_1$, variable $R$ is defined as
        \begin{equation*}
            R = \mathds{1}_{T_{1, 1} = 0} \wedge \mathds{1}_{T_{1, 2}=0} \wedge \mathds{1}_{U_3=1} \wedge \mathds{1}_{U_{1} = 1},
        \end{equation*}
        and in model $\M_2$, it is defined to be zero, i.e., $R = 0.$
        
        Given the above models, it is clear that the probability $P(t_1, t_2, t_3)$ is equal to zero whenever $t_{2,1}\neq t_3$, and therefore, the relaxed positivity constraint \mbox{$P(T_1, T_2, T_3)>0$} does not hold for the models in \cite{lee2019general}.
        See Appendix \ref{sec: apd_pos} for more details.
        
        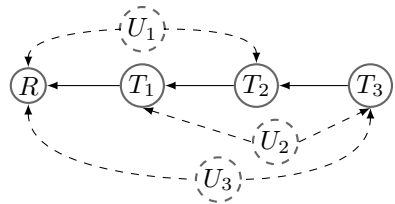
\begin{figure}
            \centering
            \begin{tikzpicture}[
            roundnode/.style={circle, draw=black!60,, fill=white, thick, inner sep=1pt},
            dashednode/.style = {circle, draw=black!60, dashed, fill=white, thick, inner sep=1pt},
            ]
            \node[roundnode]        (t1)        at (0, 0)                   {$T_1$};
            \node[roundnode]        (t2)        at (1.5, 0)                 {$T_2$};
            \node[roundnode]        (t3)        at (3, 0)                   {$T_3$};
            \node[roundnode]        (r)         at (-1.5, 0)                {$R$};
            \node[dashednode]       (u1)        at (0, 0.75)                {$U_1$};
            \node[dashednode]       (u2)        at (1.75, -0.75)            {$U_2$};
            \node[dashednode]       (u3)        at (1, -1.25)               {$U_3$};
            
            \draw[-latex] (t1.west) -- (r.east) ;
            \draw[-latex] (t2.west) -- (t1.east);
            \draw[-latex] (t3.west) -- (t2.east);
            \draw[latex-, dashed] (t1.south) -- (u2.west);
            \draw[-latex, dashed] (u2.east) -- (t3.south);
            \draw[latex-, dashed] (r.north) .. controls +(up:5mm) and +(right:2mm) .. (u1.west);
            \draw[latex-, dashed] (t2.north) .. controls +(up:5mm) and +(left:2mm) .. (u1.east);
            \draw[latex-, dashed] (r.south) .. controls +(down:10mm) and +(right:2mm) .. (u3.west);
            \draw[latex-, dashed] (t3.south) .. controls +(down:10mm) and +(left:2mm) .. (u3.east);
                
            \end{tikzpicture}
        \caption{A causal graph of \citep{lee2019general} that shows the violation of relaxed positivity assumption in constructed models of \citep{lee2019general}.}
        \label{fig:countr_exmpl_complex}
        \end{figure}
        
        To summarize, in this section, our goal was to prove the importance of positivity assumption in both classical ID and its generalization gID.
        We did so by showing that the rules of \textit{do-calculus} and consequently the proposed algorithm in \cite{lee2019general} are not sound without the positivity assumption. Moreover, we discussed that the completeness proof in \cite{lee2019general} only holds when there is no positivity assumption.
        This motivates our work to revisit the gID problem by including the positivity assumption in the definition of gID and presenting a new algorithm that is provably sound and complete.

\section{An algorithm for GID} \label{sec: algorithm}
    In this section, we propose an algorithm for gID from $(\mathbb{A}, \G)$, where $\mathbb{A}=\{\A_i\}_{i=0}^m$ is a collection of subsets of $\V$.
    To this end, we first extend Propositions \ref{prp: 1} and \ref{prp: 2} from identifiability to g-identifiability.
    
    \begin{proposition} \label{prp: 3}
        Let $\X$ and $\Y$ be two disjoint subsets of $\V$.
        The causal effect of $\X$ on $\Y$ is g-identifiable from $(\mathbb{A},\G)$ if and only if $Q[\Anc{\Y}{\G_{\V \setminus \X}}]$ is g-identifiable from $(\mathbb{A},\G)$.
    \end{proposition}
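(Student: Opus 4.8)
The plan is to prove the equivalence by relating $P_{\x}(\y)$ to $Q[\mathbf{D}]$ for $\mathbf{D}:=\Anc{\Y}{\G_{\V\setminus\X}}$, mirroring the reasoning behind Proposition \ref{prp: 1} but carrying the side information $\mathbb{A}=\{Q[\A_i]\}_{i=0}^m$ along throughout. Two structural facts about $\mathbf{D}$ drive everything: $\mathbf{D}$ is an ancestral set of $\G_{\V\setminus\X}$, and, writing $\mathbf{W}:=(\V\setminus\X)\setminus\mathbf{D}$, no vertex of $\mathbf{W}$ is a parent in $\G$ of a vertex of $\mathbf{D}$ (otherwise it would be an ancestor of $\Y$ in $\G_{\V\setminus\X}$). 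Starting from the factorization \eqref{eq: factorization 2} for $Q[\V\setminus\X]$ and summing out the vertices of $\mathbf{W}$ one at a time in reverse topological order, each partial sum $\sum_{w}P(w\mid\Pa{W}{\G})$ collapses to $1$ while $w$ occurs in no surviving factor; this yields
\begin{equation*}
Q[\mathbf{D}](\v)=\sum_{\mathbf{w}}Q[\V\setminus\X](\v)=P_{\x}(\mathbf{d}),
\end{equation*}
a quantity depending only on $\x=\v|_{\X}$ and $\mathbf{d}=\v|_{\mathbf{D}}$. Marginalizing once more over $\mathbf{D}\setminus\Y$ gives the key identity $P_{\x}(\y)=\sum_{\mathbf{d}\,:\,\mathbf{d}|_{\Y}=\y}Q[\mathbf{D}](\v)$.

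The direction ``$Q[\mathbf{D}]$ g-identifiable $\Rightarrow$ causal effect g-identifiable'' is then immediate: if $Q[\mathbf{D}]$ is g-identifiable from $(\mathbb{A},\G)$, then in every $\M\in\mathbb{M}^{+}(\G)$ the object $Q^{\M}[\mathbf{D}]$ is a fixed functional of $\{Q^{\M}[\A_i]\}_{i=0}^m$, and by the identity above so is $P^{\M}_{\x}(\y)$ for every $\x\in\dom{\X}$ and $\y\in\dom{\Y}$.

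For the converse I would argue the contrapositive: assume $Q[\mathbf{D}]$ is not g-identifiable, so there are $\M_1,\M_2\in\mathbb{M}^{+}(\G)$ with $Q^{\M_1}[\A_i]=Q^{\M_2}[\A_i]$ for all $i$ but $Q^{\M_1}[\mathbf{D}]\neq Q^{\M_2}[\mathbf{D}]$. The main obstacle — which I expect to be the crux of the whole argument — is that such a discrepancy need not survive the marginalization over $\mathbf{D}\setminus\Y$ in the key identity, so it does not by itself give $P^{\M_1}_{\x}(\y)\neq P^{\M_2}_{\x}(\y)$. To force survival I would not take an arbitrary witness pair but one built around the c-forest that obstructs identifiability of a single-c-component piece of $\mathbf{D}$, whose root set $\mathbf{R}$ then lies inside $\mathbf{D}$; since every vertex of $\mathbf{D}$ is an ancestor of $\Y$ in $\G_{\V\setminus\X}$, I would fix directed paths from $\mathbf{R}$ to $\Y$ in $\G_{\V\setminus\X}$ and modify both SEMs along these paths so that each intermediate observed variable essentially forwards the relevant coordinate of its parent, perturbed by an arbitrarily small amount of fresh independent noise. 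The noise is precisely what keeps both models in $\mathbb{M}^{+}(\G)$ — this is the point at which the positivity requirement makes the naive deterministic ``copy'' construction of earlier work unavailable — while being small enough not to disturb either that every $Q[\A_i]$ is preserved or that the $Q[\mathbf{D}]$-discrepancy propagates to a discrepancy in $P_{\x}(\y)$ for a suitable $\x,\y$, contradicting g-identifiability of the causal effect of $\X$ on $\Y$. A tempting alternative is an induction on $|\mathbf{D}\setminus\Y|$, peeling off a vertex of $\mathbf{D}\setminus\Y$ whose only children in $\mathbf{D}$ lie in $\Y$; but the inductive step is itself the same difficulty in miniature. Either way, the delicate bookkeeping — simultaneously guaranteeing positivity of both models, invariance of all the $Q[\A_i]$, and a genuine difference in the marginal $P_{\x}(\y)$, with the invariance of the $Q[\A_i]$ being the tightest constraint — is where the real work of the proof sits.
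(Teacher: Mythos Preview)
Your sufficient direction is the paper's. For the necessary direction you correctly isolate the crux --- a $Q[\mathbf{D}]$-discrepancy need not survive marginalization over $\mathbf{D}\setminus\Y$ --- but the paper's resolution is precisely the inductive peeling you raise and then dismiss. The inductive step (Lemma~\ref{lemma: prp3 - 2}) \emph{is} tractable: given witnesses for non-g-identifiability of $P_{\x}(\Y)$ in the graph with a single edge $(Y_1,Y_2)$ deleted ($Y_1,Y_2\in\Y$), the paper enlarges the domain of $Y_2$ by one extra bit that noisily records $Y_1$; because the change touches only $Y_2$ and its children, preservation of every $Q[\A_i]$ reduces to a two-line case split rather than the global obstacle you anticipate. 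Iterating this step (Lemma~\ref{lemma: prp3 - 3}) shrinks the target set from $\mathbf{D}$ down to $\Y$ one vertex at a time.

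There is a further ingredient your sketch lacks: before the peeling begins, the paper first \emph{deletes} the relevant edges using Lemma~\ref{lemma: prp3 - 1}, and that lemma is proved by invoking Theorem~\ref{thm: main} and Proposition~\ref{prp: 4}. Thus the paper's proof of Proposition~\ref{prp: 3} actually routes through the main theorem --- the dependency diagram in the appendix makes this explicit --- which is the opposite of a self-contained construction. Your direct route, returning to the c-forest witnesses from the proof of Theorem~\ref{thm: main} and splicing in noisy forwarding paths toward $\Y$, may well be workable, but you have not explained how a non-local modification simultaneously keeps every $Q[\A_i]$ invariant; the paper's local, one-edge-at-a-time construction is designed exactly to sidestep that difficulty.
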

    \begin{proposition} \label{prp: 4}
        Suppose $\mathbf{S}_1,\cdots,\mathbf{S}_l$ are the c-components of $\mathbf{S} \subseteq \V$. 
        $Q[\mathbf{S}]$ is g-identifiable from $(\mathbb{A}, \G)$ if and only if $Q[\mathbf{S}_i]$ is g-identifiable from $(\mathbb{A}, \G)$ for all $i\in [1:l]$.
    \end{proposition}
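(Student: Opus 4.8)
The plan is to lift, at the level of g-identifiability, the classical argument of \cite{tian2003ID} behind Proposition~\ref{prp: 2}, isolating the two structural facts that make it work and checking which of them needs the restriction to $\mathbb{M}^+(\G)$. Throughout I use the two-models reading of Definition~\ref{def: gid}: a quantity is g-identifiable from $(\mathbb{A},\G)$ exactly when any two $\M_1,\M_2\in\mathbb{M}^+(\G)$ with $Q^{\M_1}[\A_j]=Q^{\M_2}[\A_j]$ for all $j\in[0:m]$ agree on that quantity. Note the available set $\mathbb{A}$ plays a purely passive role here, so the statement is really about the relationship between $Q[\mathbf{S}]$ and its c-component pieces.

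First I would record two ingredients. Ingredient~(i), the \emph{c-component factorization}: for every $\M\in\mathbb{M}(\G)$ and every $\mathbf{v}\in\dom{\V}$, $Q[\mathbf{S}](\mathbf{v})=\prod_{i=1}^{l}Q[\mathbf{S}_i](\mathbf{v})$. This follows directly from \eqref{eq: factorization 2}: each $U\in\U$ contributes a factor $P(u)$ and enters $P(s\mid\Pa{S}{\G})$ only for $S\in\Ch{U}{\G}\cap\mathbf{S}$, and if $U$ has two children in $\mathbf{S}$ they lie in the \emph{same} c-component of $\mathbf{S}$; hence $\sum_{\U}$ splits into a product over the c-components. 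No positivity is needed for (i). Ingredient~(ii), the \emph{inverse formula}: fix a topological order $S_1,\dots,S_n$ of $\G_{\mathbf{S}}$, write $\mathbf{H}^{(k)}=\{S_1,\dots,S_k\}$ and $\mathbf{H}^{(0)}=\varnothing$. Each $\mathbf{H}^{(k)}$ is ancestral in $\G_{\mathbf{S}}$, so (again by \cite{tian2003ID}) $Q[\mathbf{H}^{(k)}](\mathbf{v})=\sum_{s_{k+1},\dots,s_n}Q[\mathbf{S}](\mathbf{v})$, and Tian's identity gives $Q[\mathbf{S}_i](\mathbf{v})=\prod_{k:\,S_k\in\mathbf{S}_i}\frac{Q[\mathbf{H}^{(k)}](\mathbf{v})}{Q[\mathbf{H}^{(k-1)}](\mathbf{v})}$. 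Thus each $Q[\mathbf{S}_i]$ is a \emph{fixed} functional of $Q[\mathbf{S}]$ --- the same functional for every model --- \emph{provided} all the denominators are nonzero. This is exactly where positivity is used: if $\M\in\mathbb{M}^+(\G)$ then $Q[\mathbf{S}](\mathbf{v})=P^{\M}_{\mathbf{v}\setminus\mathbf{s}}(\mathbf{s})>0$ by the positivity remark in Section~\ref{sec: pre}, and since $Q[\mathbf{H}^{(k)}](\mathbf{v})$ is a marginal sum of strictly positive numbers it is also positive, so the ratios are well defined.

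With (i) and (ii) in hand the proof is immediate. Let $\M_1,\M_2\in\mathbb{M}^+(\G)$ satisfy $Q^{\M_1}[\A_j]=Q^{\M_2}[\A_j]$ for all $j$. For the ``if'' direction, assume each $Q[\mathbf{S}_i]$ is g-identifiable, so $Q^{\M_1}[\mathbf{S}_i]=Q^{\M_2}[\mathbf{S}_i]$ for every $i$; then (i) gives $Q^{\M_1}[\mathbf{S}]=\prod_i Q^{\M_1}[\mathbf{S}_i]=\prod_i Q^{\M_2}[\mathbf{S}_i]=Q^{\M_2}[\mathbf{S}]$, and since this holds for every such pair, $Q[\mathbf{S}]$ is g-identifiable. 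For the ``only if'' direction, assume $Q[\mathbf{S}]$ is g-identifiable, so $Q^{\M_1}[\mathbf{S}]=Q^{\M_2}[\mathbf{S}]$; applying the fixed functional of (ii) to both sides yields $Q^{\M_1}[\mathbf{S}_i]=Q^{\M_2}[\mathbf{S}_i]$ for every $i$, so each $Q[\mathbf{S}_i]$ is g-identifiable. The passage between ``causal effect of $\V\setminus\mathbf{S}$ on $\mathbf{S}$'' statements and $Q[\cdot]$ statements is absorbed by Definition~\ref{def: gid} and needs nothing further.

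I expect the only delicate point to be ingredient~(ii): one must make sure the division-based inversion of the c-component factorization is legitimate, i.e. that on $\mathbb{M}^+(\G)$ all the relevant conditioning events have positive probability --- precisely the subtlety the paper is built around. By contrast, ingredient~(i), and hence the whole ``if'' direction, requires no positivity at all; it is only the ``only if'' direction that genuinely leans on it. Everything else is bookkeeping with Definition~\ref{def: gid}.
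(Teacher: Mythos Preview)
Your proposal is correct and follows essentially the same approach as the paper's proof: the ``if'' direction uses the c-component factorization $Q[\mathbf{S}]=\prod_i Q[\mathbf{S}_i]$ from \cite{tian2003ID}, and the ``only if'' direction uses Tian's inverse formula (what the paper cites as Lemma~4, Equations~(71)--(72) in \cite{tian2003ID}) to recover each $Q[\mathbf{S}_i]$ from $Q[\mathbf{S}]$. Your explicit discussion of where positivity is needed (namely, to ensure the denominators $Q[\mathbf{H}^{(k-1)}]$ in the ratio formula are nonzero) is a useful elaboration that the paper leaves implicit, but the underlying argument is the same.
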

    Proofs are provided in Appendix \ref{sec: apd_proof}. 
    Proposition \ref{prp: 3} allows us to solve the gID problem for $P_{\x}(\Y)$ by solving an equivalent problem for $Q[\mathbf{S}]$, where $\mathbf{S}$ is given in the same proposition. 
    Proposition \ref{prp: 4} shows that the g-identifiability of $Q[\mathbf{S}]$ from $(\mathbb{A}, \G)$ is equivalent to g-identifiability of its single c-components. 
    The following result provides a method for solving g-identifiability of $Q[\mathbf{S}]$ when $\textbf{S}$ is a single c-component.
    \begin{theorem} \label{thm: main}
        Suppose $\mathbf{S} \subseteq \V$ is a single c-component. 
        $Q[\mathbf{S}]$ is g-identifiable from $(\mathbb{A}, \G)$ if and only if there exists $\A \in \mathbb{A}$ such that $\mathbf{S} \subseteq \A$ and $Q[\mathbf{S}]$ is identifiable from $\G[\A]$.
    \end{theorem}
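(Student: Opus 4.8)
We treat the two implications separately. The ``if'' direction reduces to the soundness of classical identification (Algorithm~\ref{algo: ID}) applied inside a sub-diagram. The ``only if'' direction is a counterexample construction that, unlike the one in \citep{lee2019general}, must be carried out with a strictly positive observational distribution; this is where the real work lies.

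\textbf{``If''.} Suppose $\A\in\mathbb{A}$ satisfies $\mathbf{S}\subseteq\A$ and $Q[\mathbf{S}]$ is identifiable from $\G[\A]$. The distribution $Q[\A]$ is among the inputs. The key observation is that, by the factorization~\eqref{eq: factorization 2}, $Q[\A]$ is the observational distribution of a causal model $\M_\A$ whose diagram is $\G[\A]$: its equations are those of $\M$ restricted to $\A$, with the unobserved variables having both children in $\A$ kept as confounders, the parents of $\A$ outside $\A$ treated as fixed source inputs, and the remaining unobserved variables absorbed into fresh independent noise; moreover $Q^{\M_\A}[\mathbf{S}]=Q^{\M}[\mathbf{S}]$ once the fixed coordinates are matched. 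Since $Q^{\M}[\A](\mathbf{v})=P^{\M}_{\mathbf{v}\setminus\mathbf{a}}(\mathbf{a})>0$ for $\M\in\mathbb{M}^{+}(\G)$ (by \eqref{eq: Q} and the remark on positivity of interventional distributions), we get $\M_\A\in\mathbb{M}^{+}(\G[\A])$. Hence if $\M_1,\M_2\in\mathbb{M}^{+}(\G)$ agree on all $Q[\A_i]$, the corresponding $\M_{1,\A},\M_{2,\A}\in\mathbb{M}^{+}(\G[\A])$ agree on $P(\A)$, so by identifiability of $Q[\mathbf{S}]$ from $\G[\A]$ --- equivalently, soundness of Algorithm~\ref{algo: ID} on $\G[\A]$ --- they agree on $Q[\mathbf{S}]$; therefore $Q^{\M_1}[\mathbf{S}]=Q^{\M_2}[\mathbf{S}]$, i.e.\ $Q[\mathbf{S}]$ is g-identifiable from $(\mathbb{A},\G)$. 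The only routine point is carrying the values of the parents of $\A$ outside $\A$ as context throughout.

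\textbf{``Only if'' (contrapositive).} Assume that for every $\A\in\mathbb{A}$ with $\mathbf{S}\subseteq\A$, $Q[\mathbf{S}]$ is \emph{not} identifiable from $\G[\A]$; we construct $\M_1,\M_2\in\mathbb{M}^{+}(\G)$ with $Q^{\M_1}[\A_i]=Q^{\M_2}[\A_i]$ for all $i$ but $Q^{\M_1}[\mathbf{S}]\neq Q^{\M_2}[\mathbf{S}]$. For each $\A_i\supseteq\mathbf{S}$, the failure of \textbf{ID\_Single} in $\G[\A_i]$ (equivalently, the hedge criterion of \citep{shpitser2006identification,huang2008completeness}) produces a set $\mathbf{H}_i$ with $\mathbf{S}\subseteq\mathbf{H}_i\subseteq\A_i$, $\G_{\mathbf{H}_i}$ a single c-component, and $\mathbf{H}_i=\Anc{\mathbf{S}}{\G_{\mathbf{H}_i}}$; for each $\A_i\not\supseteq\mathbf{S}$ we record a vertex of $\mathbf{S}$ lying outside $\A_i$. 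The plan is to glue these local obstructions into one subgraph $\mathbf{H}$, with $\mathbf{S}\subseteq\mathbf{H}\subseteq\V$, that is simultaneously (i) a non-identifiability witness for $Q[\mathbf{S}]$ inside every $\G[\A_i]$ with $\mathbf{S}\subseteq\A_i$ and (ii) not contained in any $\A_i$ with $\mathbf{S}\not\subseteq\A_i$; the single-c-component hypothesis on $\mathbf{S}$ is used here in an essential way, forcing the c-forests rooted near $\mathbf{S}$ to merge rather than split across the $\A_i$. On $\mathbf{H}$ we then build $\M_1,\M_2$ in the spirit of Example~2: the variables of $\mathbf{H}$ are binary and generated by \emph{noisy} parity/copy equations (returning the intended value with probability $\tfrac23$ and its complement with probability $\tfrac13$ rather than deterministically), every variable outside $\mathbf{H}$ is an independent fair coin, and $\M_1,\M_2$ differ only in the parity of a single unobserved source confined to $\mathbf{H}$. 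By design the effect of that one bit cancels after the marginalizations and interventions defining each $Q[\A_i]$, so $Q^{\M_1}[\A_i]=Q^{\M_2}[\A_i]$, while it persists in $Q[\mathbf{S}]$; since all equations are bounded away from $0$ and $1$ and all outside variables are fair coins, $P^{\M_j}(\mathbf{v})>0$ for every $\mathbf{v}$, so $\M_1,\M_2\in\mathbb{M}^{+}(\G)$.

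\textbf{Main obstacle.} The whole difficulty is in the ``only if'' direction, and it has two parts. The combinatorial part is merging the per-$\A_i$ hedges (and the missing-vertex conditions for $\A_i\not\supseteq\mathbf{S}$) into a \emph{single} obstruction $\mathbf{H}$ serving all $\A_i$ at once --- this is where single-c-component-ness of $\mathbf{S}$ is indispensable and where a naive argument would break. The probabilistic part --- precisely the point on which \citep{lee2019general} fails --- is keeping $P(\mathbf{v})$ strictly positive: replacing deterministic copies by noisy ones restores positivity, but one must then verify that the gap in $Q[\mathbf{S}]$ does not vanish (by continuity in the noise level, or by an explicit choice of parameters as in Example~2) while the equalities $Q^{\M_1}[\A_i]=Q^{\M_2}[\A_i]$ remain exact.
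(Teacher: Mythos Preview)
Your high-level strategy matches the paper's: the ``if'' direction is soundness of classical ID inside $\G[\A]$, and for ``only if'' you combine per-$\A_i$ c-forests/hedges into one witness subgraph and then build two positive SEMs on it. The paper does exactly this (graph modification via Lemma~\ref{lem: c-forest existence}, reduction via Lemma~\ref{lem: simplify}, then an explicit construction).

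The genuine gap is in your proposed construction on $\mathbf{H}$. You assert that with binary noisy-parity equations and models ``differing only in the parity of a single unobserved source'', the effect of that one bit \emph{cancels} in every $Q[\A_i]$ but \emph{survives} in $Q[\mathbf{S}]$. This is precisely the step that fails to go through with a single binary bit once several overlapping hedges are glued together: you now need the cancellation to hold \emph{exactly} and \emph{simultaneously} for all $i\in[0{:}m]$, across hedges that may share variables in complicated ways, while the same perturbation must not cancel in $Q[\mathbf{S}]$. Your ``by design'' and ``continuity in the noise level'' do not address this; continuity only helps for the inequality, not for the family of exact equalities.

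The paper resolves this by a different mechanism. The two models share all conditional distributions and differ only in the \emph{marginal} $P(U_0)$ of one unobserved variable, with $\dom{U_0}$ taken large (size $d=(\kappa+1)2^{\alpha(U_0)-1}$, not $2$). The requirements ``$Q^{\M_1}[\A_i]=Q^{\M_2}[\A_i]$ for all $i$'' and ``$Q^{\M_1}[\mathbf{S}]\neq Q^{\M_2}[\mathbf{S}]$'' become a linear system in the unknown vector $(P^{\M_2}(U_0=\gamma_j))_j$ (equations~\eqref{eq: linear system 1}--\eqref{eq: linear system 2}), solvable iff a certain vector $\eta(\mathbf{v}_0)$ is linearly independent of the span of the $\theta_i(\mathbf{v})$'s (Lemma~\ref{lem: lin indep}). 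The elaborate domain choices (variables in $\mathbf{S}$ valued in $[0{:}\kappa]$ with mod-$(\kappa{+}1)$ arithmetic; variables in $\mathbf{T}$ valued in $\{0,1\}^{\alpha(\cdot)}$, one coordinate per c-forest they belong to) are engineered so that Lemmas~\ref{lem: theta index}--\ref{lem: eta index} force this independence. Positivity (Lemma~\ref{lemma: valid model}) comes from the $\epsilon$-smoothing inside $\mathbf{S}$, and from the rescaling in \eqref{eq: system to system} that keeps $P^{\M_2}(U_0)$ in $(0,1)$. Your one-binary-bit flip is not rich enough to solve the linear system in general; the enlargement of $\dom{U_0}$ and the linear-algebra reduction are the missing ideas.
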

    \begin{algorithm}[t]
        \caption{g-identifiability}
        \label{algo: GID}
        \begin{algorithmic}[1]
            \STATE \textbf{Function GID}($\X,\Y,\mathbb{A}=\{\A_i\}_{i=0}^m ,\G$)
            \STATE \textbf{Output:} True, if the causal effect of $\X$ on $\Y$ is g-identifiable from $(\mathbb{A},\G)$. 
            \STATE $\mathbf{S} \gets \Anc{\Y}{\G_{\V \setminus \X}}$
            \STATE $\{\mathbf{S}_1,\dots,\mathbf{S}_l\} \gets$ c-components of $\mathbf{S}$
            \FOR{$i$ from $1$ to $l$}
                \IF{\textbf{GID\_Single}($\mathbf{S}_i,\mathbb{A}=\{\A_i\}_{i=0}^m ,\G$) = False}
                    \STATE \textbf{Return} False
                \ENDIF
            \ENDFOR
            \STATE \textbf{Return} True
        \end{algorithmic}
        \hrulefill
        \begin{algorithmic}[1]
            \STATE \textbf{Function GID\_Single}($\mathbf{S}, \mathbb{A}=\{\A_i\}_{i=0}^m, \G$)
            \STATE \textbf{Output:} True, if $Q[\mathbf{S}]$ is identifiable from $(\mathbb{A},\G)$, where $\mathbf{S}$ is a single c-component.
            \FOR{$i$ from $0$ to $m$}
                \IF{$\mathbf{S} \subseteq \A_i$ and \textbf{ID\_Single}($\mathbf{S}, \G[\A_i]$) = True}
                    \STATE \textbf{Return} True
                \ENDIF
            \ENDFOR
            \STATE \textbf{Return} False
        \end{algorithmic}
    \end{algorithm}

    A proof for Theorem \ref{thm: main} is provided in Section \ref{sec: main}.
    Note that the equivalent condition provided in Theorem \ref{thm: main} is identifiability of a $Q[\cdot]$. This can be checked by function \textbf{ID\_Single} in Algorithm \ref{algo: ID}. 
    Therefore, when $\textbf{S}$ is a single c-component, in order to check whether $Q[\mathbf{S}]$ is g-identifiable from $(\mathbb{A}, \G)$, we need to check the identifiability of $Q[\mathbf{S}]$ from $\G[\A]$ for all $\A\in \mathbb{A}$ that $\mathbf{S}\subseteq \A$. 
    Algorithm \ref{algo: GID} summarizes the steps for solving g-identifiability of a causal effect given $(\mathbb{A}, \G)$.
    \begin{theorem}
        Algorithm \ref{algo: GID} is sound and complete. 
    \end{theorem}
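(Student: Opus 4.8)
The plan is to show that each branch of Algorithm \ref{algo: GID} faithfully mirrors one of the structural equivalences already established, so that soundness and completeness follow by composing them. Unwinding the definitions, the claim is equivalent to the single biconditional: \textbf{GID}$(\X,\Y,\mathbb{A},\G)$ returns \textbf{True} if and only if the causal effect of $\X$ on $\Y$ is g-identifiable from $(\mathbb{A},\G)$; soundness is the ``$\Rightarrow$'' direction and completeness is the ``$\Leftarrow$'' direction of this biconditional. To prove it, I would first establish the analogous biconditional for the subroutine: \textbf{GID\_Single}$(\mathbf{S},\mathbb{A},\G)$ returns \textbf{True} if and only if $Q[\mathbf{S}]$ is g-identifiable from $(\mathbb{A},\G)$, whenever $\mathbf{S}$ is a single c-component.

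For \textbf{GID\_Single}, Theorem \ref{thm: main} says that, for a single c-component $\mathbf{S}$, $Q[\mathbf{S}]$ is g-identifiable from $(\mathbb{A},\G)$ exactly when there is some $\A_i\in\mathbb{A}$ with $\mathbf{S}\subseteq\A_i$ for which $Q[\mathbf{S}]$ is identifiable from $\G[\A_i]$. The loop in \textbf{GID\_Single} ranges over precisely the indices $i$ with $\mathbf{S}\subseteq\A_i$, and for each it calls \textbf{ID\_Single}$(\mathbf{S},\G[\A_i])$. Since the ID algorithm, and hence \textbf{ID\_Single}, is sound and complete \citep{tian2003ID,shpitser2006identification,huang2008completeness}, \textbf{ID\_Single}$(\mathbf{S},\G[\A_i])$ returns \textbf{True} iff $Q[\mathbf{S}]$ is identifiable from $\G[\A_i]$; so the loop returns \textbf{True} iff a witnessing $\A_i$ exists, which by Theorem \ref{thm: main} is iff $Q[\mathbf{S}]$ is g-identifiable. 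Here one only needs the small observation that $\mathbf{S}$ remains a single c-component of $\G[\A_i]$ whenever $\mathbf{S}\subseteq\A_i$: the bidirected edges among vertices of $\mathbf{S}$ are identical in $\G$ and in $\G[\A_i]$, because a latent variable with both children in $\mathbf{S}\subseteq\A_i$ survives into $\G[\A_i]$ by construction. The degenerate case where no $\A_i$ contains $\mathbf{S}$ is consistent as well: \textbf{GID\_Single} returns \textbf{False}, no witness for Theorem \ref{thm: main} exists, and $Q[\mathbf{S}]$ is not g-identifiable.

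For \textbf{GID} itself, line 3 sets $\mathbf{S}=\Anc{\Y}{\G_{\V\setminus\X}}$, and Proposition \ref{prp: 3} reduces g-identifiability of the effect of $\X$ on $\Y$ to g-identifiability of $Q[\mathbf{S}]$. Line 4 decomposes $\mathbf{S}$ into its c-components $\mathbf{S}_1,\dots,\mathbf{S}_l$, and Proposition \ref{prp: 4} says $Q[\mathbf{S}]$ is g-identifiable iff every $Q[\mathbf{S}_i]$ is. Each $\mathbf{S}_i$ is by definition a single c-component, so the previous paragraph applies: the loop returns \textbf{True} iff \textbf{GID\_Single} succeeds on every $\mathbf{S}_i$, i.e.\ iff every $Q[\mathbf{S}_i]$ is g-identifiable, i.e.\ iff $Q[\mathbf{S}]$ is g-identifiable, i.e.\ iff the effect of $\X$ on $\Y$ is g-identifiable. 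This is exactly the desired biconditional, and it covers the empty cases (if $\Y=\varnothing$ then $\mathbf{S}=\varnothing$, the loop is vacuous, and \textbf{GID} correctly returns \textbf{True}).

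I do not expect a genuine obstacle in this argument: all the substantive content resides in Propositions \ref{prp: 3} and \ref{prp: 4}, Theorem \ref{thm: main}, and the known soundness and completeness of \textbf{ID\_Single}; what remains is the routine verification that the pseudocode branches line up with those statements, together with the two minor points noted above, namely the preservation of single-c-component structure under $\mathbf{S}\mapsto\G[\A_i]$ and the correct handling of the vacuous loops.
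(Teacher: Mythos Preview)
Your proposal is correct and follows exactly the paper's own approach: the paper's proof is the single sentence ``It directly follows from Propositions \ref{prp: 3} and \ref{prp: 4} and Theorem \ref{thm: main},'' and your write-up is precisely the careful unpacking of that sentence, together with the (routine) observation that \textbf{ID\_Single} is sound and complete on each $\G[\A_i]$.
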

    \begin{proof}
        It directly follows from Propositions \ref{prp: 3} and \ref{prp: 4} and Theorem \ref{thm: main}.
    \end{proof}
    \begin{remark}
        Under the relaxed positivity assumption, the algorithm is still sound and complete because Algorithm \ref{algo: GID} is based on the rules of do-calculus, and these rules are both sound and complete under the relaxed positivity assumption.
    \end{remark}
    Suppose Algorithm \ref{algo: GID} determines that the causal effect of $\X$ on $\Y$ is g-identifiable from $(\mathbb{A}, \G)$. 
    Analogous to the method in \cite{tian2003ID}, we can derive a formula for $P_{\x}(\Y)$ as follows.
    For each $\mathbf{S}_i \in \{\mathbf{S}_1,\cdots,\mathbf{S}_l\}$, we can derive a formula for $Q[\mathbf{S}_i]$ using \textbf{ID\_Single} function in line $4$ of \textbf{GID\_Single}. 
    This allows us to compute $Q[\mathbf{S}]$ using 
    \begin{equation*}
        Q[\mathbf{S}] = \prod_{i=1}^l Q[\mathbf{S}_i].
    \end{equation*}
    Finally, the expression for $P_{\x}(\Y)$ will be
    \begin{equation*}
        P_{\x}(\Y) = \sum_{\mathbf{S}\setminus \Y} Q[\mathbf{S}].
    \end{equation*}

\section{Main result: Theorem 1} \label{sec: main}
    In this section, we present the main steps of the proof of Theorem \ref{thm: main}.
    The technical lemmas in this section are proved in Appendix \ref{sec: apd_proof}. 

    \paragraph{Sufficient part:}
    This part is straightforward: if $Q[\mathbf{S}]$ is identifiable from $\G[\A]$ for some $\A \in \mathbb{A}$ such that $\mathbf{S} \subseteq \A$, then $Q[\mathbf{S}]$ is uniquely computable from $Q[\A]$, and therefore, $Q[\mathbf{S}]$ is g-identifiable from $(\mathbb{A},\G)$.
    
    \paragraph{Necessary part:}
    Suppose $\mathbf{S}$ is a single c-component and  $Q[\mathbf{S}]$ is not identifiable from $\G[\A_i]$ for all $\A_i \in \mathbb{A}$ such that $\mathbf{S} \subseteq \A_i$. 
    We need to show that $Q[\mathbf{S}]$ is not g-identifiable from $(\mathbb{A},\G)$.
    Recall that $\mathbb{A}=\{\A_i\}_{i=0}^m$.
    To this end, we will introduce two SEMs $\M_1$ and $\M_2$ in $\mathbb{M^+(\G)}$ such that for each $i\in [0:m]$ and any $\mathbf{v} \in \dom{\V}$,
    \begin{equation} \label{eq: same Q over A}
        Q^{\M_1}[\mathbf{A}_i](\mathbf{v}) = Q^{\M_2}[\mathbf{A}_i](\mathbf{v}), 
    \end{equation}
    but there exists $\mathbf{v}_0 \in \dom{\V}$ such that 
    \begin{equation} \label{eq: diff Q over S}
        Q^{\M_1}[\mathbf{S}](\mathbf{v}_0) \neq Q^{\M_2}[\mathbf{S}](\mathbf{v}_0).
    \end{equation}
    This shows that $Q[\mathbf{S}]$ cannot be uniquely computed from $\{Q[\A_i]\}_{i=0}^m$.
    
    For sake of space, we assume that there exists at least one $i\in [0, m]$ such that $\mathbf{S}\subset \mathbf{A}_i$. 
    In this case, without loss of generality, we assume that there exists $k \in [0:m]$ such that  $\mathbf{S}\subset \A_i$ for $i\in [0:k]$ and $\mathbf{S}\nsubseteq \A_i$ for $i\in [k+1:m]$.
    A proof for the case in which $\mathbf{S}$ is not a subset of $\A_i$ for all $i\in [0, m]$ is provided in Appendix \ref{sec: apd_second case}.
    
    We first modify $\G$ by deleting some nodes and edges and show that it is enough to prove Theorem \ref{thm: main} for the modified graph. 
    Then, we provide our method for constructing $\M_2$ from $\M_1$ by introducing a system of linear equations.

    \paragraph{Graph modification:}\label{sec: graph-mod}
    Since $\mathbf{S}$ is single c-component, the bidirected edges in $\G_{\mathbf{S}}$ form a connected graph over $\mathbf{S}$. 
    Let $\F^{\mathbf{S}}$ be a minimal (in terms of edges) spanning subgraph of $\G[\mathbf{S}]$ such that $\F^{\mathbf{S}}_{\mathbf{S}}$ is single c-component. 
    Thus, $\F^{\mathbf{S}}_{\mathbf{S}}$ has no directed edges, and its bidirected edges form a spanning tree.
    \begin{lemma}[\cite{shpitser2006identification}] \label{lem: c-forest existence}
        Suppose \mbox{$\mathbf{S} \subseteq\A \subseteq \V$}. 
        $Q[\mathbf{S}]$ is not identifiable from $\G[\A]$ if and only if there exists at least one $\mathbf{S}$-rooted c-forest  $\mathcal{F}$ with the set of observed variables $\B$ such that $\mathbf{S}\subsetneq \B \subseteq \A$, the bidirected edges of $\F_{\B}$ form a spanning tree, and the induced subgraph of $\mathcal{F}$ over $\mathbf{S}$ is $\mathcal{F}^{\mathbf{S}}$, i.e.,  $\mathcal{F}^{\mathbf{S}}= \mathcal{F}[\mathbf{S}]$.
    \end{lemma}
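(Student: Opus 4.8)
The plan is to establish the two implications separately; throughout, recall (as in the surrounding discussion) that the statement is only meaningful when $\mathbf{S}$ is a single c-component, so that $\F^{\mathbf{S}}$ exists. For the forward implication, suppose $Q[\mathbf{S}]$ is not identifiable from $\G[\A]$. I would run the routine \textbf{ID\_Single}$(\mathbf{S},\G[\A])$ of Algorithm \ref{algo: ID}, which initializes $\Y$ with the observed vertex set of $\G[\A]$, namely $\A$. Since this routine is sound and complete \citep{tian2003ID,shpitser2006identification,huang2008completeness}, it must return False, i.e., its while loop halts at some $\Y$ with $\Y_{new}=\Y$. Reading off the body of that loop, this says precisely that $\mathbf{S}\subsetneq\Y\subseteq\A$, that the bidirected edges of $\G[\A]_{\Y}$ form a connected graph over $\Y$ (so $\G[\A]_{\Y}$ is a single c-component), and that $\Y=\Anc{\mathbf{S}}{\G[\A]_{\Y}}$. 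From $\Y$ I would carve the required c-forest in two steps. First, since the bidirected edges of $\F^{\mathbf{S}}_{\mathbf{S}}$ form an acyclic subgraph of the connected bidirected graph on $\Y$, extend them to a spanning tree $T$ of that graph, set $\B:=\Y$, and declare the bidirected edges of $\F$ to be exactly those of $T$. Second, for each $v\in\Y\setminus\mathbf{S}$ select one outgoing directed edge $v\to c(v)$ of $\G[\A]_{\Y}$ that strictly decreases the directed-path distance to $\mathbf{S}$ (available because $v\in\Anc{\mathbf{S}}{\G[\A]_{\Y}}$), and declare these to be the directed edges of $\F$. Then every vertex of $\Y\setminus\mathbf{S}$ has exactly one child in $\F$ and every vertex of $\mathbf{S}$ has none, so $\F$ is an $\mathbf{S}$-rooted c-forest with observed set $\B=\Y$, $\F_{\B}$ is a single c-component whose bidirected edges form a spanning tree, and --- because $T$ adds no bidirected edge with both ends in $\mathbf{S}$ beyond those of $\F^{\mathbf{S}}_{\mathbf{S}}$, and no directed edge with both ends in $\mathbf{S}$ was selected --- the induced subgraph satisfies $\F[\mathbf{S}]=\F^{\mathbf{S}}$. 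This last equality is exactly what makes the lemma usable as input to the proof of Theorem \ref{thm: main}.

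For the reverse implication, assume such an $\F$ is given, with observed set $\B$, $\mathbf{S}\subsetneq\B\subseteq\A$, $\F_{\B}$ a bidirected spanning tree, and $\F[\mathbf{S}]=\F^{\mathbf{S}}$. I would use the standard two-model (``hedge'') construction of \citep{shpitser2006identification} applied to the hedge formed by $\F$ and $\F^{\mathbf{S}}$: two binary SEMs, agreeing outside $\F$, in which the vertices of $\F$ obey XOR-type equations arranged so that the roots $\mathbf{S}$ receive, in $\M_1$, an incoming signal carried by the latent edges of $\F$ linking $\mathbf{S}$ to $\B\setminus\mathbf{S}$ and, in $\M_2$, the redundant copy of that signal carried by the directed edges of $\F$ entering $\mathbf{S}$ from $\B\setminus\mathbf{S}$. (Such a directed edge exists, since $\B\setminus\mathbf{S}$ is nonempty, each of its vertices has a child in $\F$, and the roots are exactly $\mathbf{S}$.) By construction the two incoming signals agree in the observational regime, so marginalizing the latents via the factorization \eqref{eq: factorization 1} gives $P^{\M_1}(\mathbf{a})=P^{\M_2}(\mathbf{a})$ for every $\mathbf{a}\in\dom{\A}$; but the intervention $do(\V\setminus\mathbf{S})$ defining $Q[\mathbf{S}]$ severs the directed channels while leaving the latent channels intact, so by the factorization \eqref{eq: factorization 2} the quantity $Q^{\M_1}[\mathbf{S}]$ is a fixed distribution on $\dom{\mathbf{S}}$ whereas $Q^{\M_2}[\mathbf{S}]$ still depends on the intervened values, and therefore $Q^{\M_1}[\mathbf{S}]\neq Q^{\M_2}[\mathbf{S}]$, refuting identifiability of $Q[\mathbf{S}]$ from $\G[\A]$ by Definition \ref{def: id}. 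The only thing I would add to \citep{shpitser2006identification} is that these witnesses must lie in $\mathbb{M}^{+}(\G[\A])$, not merely $\mathbb{M}(\G[\A])$: I would replace each exact XOR-copy by a noisy one and rebalance the flip probabilities along $\F$ so that the observational conditionals remain identically equal (in the simplest instance, a copy with flip probability $\epsilon$ is matched by a copy with flip probability $2\epsilon(1-\epsilon)$), which is the same kind of linear-system bookkeeping that Section \ref{sec: main} develops for Theorem \ref{thm: main}.

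The step I expect to be the main obstacle is precisely this positivity upgrade in the reverse direction: the classical hedge models are deliberately degenerate, and pushing them into $\mathbb{M}^{+}(\G[\A])$ while keeping their observational distributions over $\A$ exactly equal is the delicate part, and is of course the very issue this paper is about. The forward direction is by contrast purely graph-combinatorial; its one subtlety is to produce not just some hedge but one whose restriction to $\mathbf{S}$ is the fixed graph $\F^{\mathbf{S}}$, which the two-step carving handles by seeding the spanning tree with $\F^{\mathbf{S}}_{\mathbf{S}}$ and never orienting an edge with both ends in $\mathbf{S}$.
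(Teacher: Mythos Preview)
The paper does not supply its own proof of this lemma: it is stated with the citation \citep{shpitser2006identification} and then used as a black box in the proof of Theorem~\ref{thm: main}. So there is no in-paper argument to compare against; you have written a proof where the paper gives none.

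That said, your sketch is largely sound. The forward direction is correct and is the natural argument: when \textbf{ID\_Single}$(\mathbf{S},\G[\A])$ stalls, the current set $\Y$ satisfies $\mathbf{S}\subsetneq\Y\subseteq\A$, is a single c-component in $\G[\A]_{\Y}$, and equals $\Anc{\mathbf{S}}{\G[\A]_{\Y}}$; seeding the bidirected spanning tree with the edges of $\F^{\mathbf{S}}_{\mathbf{S}}$ (which do survive in $\G[\A]_{\Y}$ since $\mathbf{S}\subseteq\Y\subseteq\A$) and then giving each $v\in\Y\setminus\mathbf{S}$ a single child toward $\mathbf{S}$ yields exactly the required $\mathbf{S}$-rooted c-forest with $\F[\mathbf{S}]=\F^{\mathbf{S}}$. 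The one subtlety you handle correctly is that extending a tree on $\mathbf{S}$ to a spanning tree on $\Y$ cannot add a bidirected edge with both endpoints in $\mathbf{S}$.

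For the reverse direction, your high-level plan (invoke the hedge construction of \citep{shpitser2006identification}) is the right one, but your description of which signal goes where in $\M_1$ versus $\M_2$ does not match the standard construction: in Shpitser--Pearl both models make every vertex the parity of \emph{all} its parents in $\F$, and the two models differ only in whether one designated root bit is complemented. Your version, splitting the latent and directed channels between the two models, would not obviously give equal observational distributions. On positivity, you are right that the original hedge witnesses are degenerate, but this is already handled in the classical ID literature (either via perturbation or via the relaxed positivity assumption of \citep{shpitser2006identification}); you do not need the linear-system machinery of Section~\ref{sec: main} for this lemma. The delicate positivity issue this paper is about arises only for the gID construction of \citep{lee2019general}, not for the classical hedge argument invoked here.
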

    Recall that for each $i \in [0:k]$, $\mathbf{S}\subset \A_i$ and $Q[\mathbf{S}]$ is not identifiable from $Q[\A_i]$. 
    Hence, Lemma \ref{lem: c-forest existence} implies that for each $i\in [0:k]$, there exists a $\mathbf{S}$-rooted c-forest $\mathcal{F}_i$ over a set of observed variables $\B_i$ such that $\mathbf{S}\subsetneq \B_i \subseteq \A_i$, the bidirected edges of $(\mathcal{F}_i)_{\mathbf{S}}$ form a spanning tree, and $\mathcal{F}^{\mathbf{S}} = \mathcal{F}_i[\mathbf{S}]$. 
    Next, we use $\{\mathcal{F}_i\}_{i=0}^k$ to modify $\G$.

    We define $\G'$ to be the union of all the subgraphs in $\{\mathcal{F}_i\}_{i=0}^k$ with the observed variables $\V' := \bigcup_{i=0}^k \B_i$ and unobserved variables $\U'$.
    Furthermore, let $\mathbb{A}':= \{\A_i':=\A_i \cap \V'\}_{i=0}^m$. 
    Because for each $i\in [0:k]$, $\mathcal{F}_i$ is a $\mathbf{S}$-rooted c-forest in $\G'$, Lemma \ref{lem: c-forest existence} implies that $Q[\mathbf{S}]$ is not identifiable from $\G'[\A_i']$.

    Next result establishes the connection between non g-identifiability of $Q[\mathbf{S}]$ from $(\mathbb{A}, \G)$ and non g-identifiability of $Q[\mathbf{S}]$ from $(\mathbb{A}', \G')$.
    \begin{lemma} \label{lem: simplify}
        If $Q[\mathbf{S}]$ is not g-identifiable from $(\mathbb{A}', \G')$, then $Q[\mathbf{S}]$ is not g-identifiable from $(\mathbb{A}, \G)$.
    \end{lemma}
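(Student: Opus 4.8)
The plan is to lift a witnessing pair of SEMs from $\G'$ back to $\G$. By hypothesis $Q[\mathbf{S}]$ is not g-identifiable from $(\mathbb{A}',\G')$, so there are $\M_1',\M_2'\in\mathbb{M}^{+}(\G')$ that induce the same distribution $Q[\A_i']$ for every $i\in[0:m]$ but with $Q^{\M_1'}[\mathbf{S}](\mathbf{v}_0')\neq Q^{\M_2'}[\mathbf{S}](\mathbf{v}_0')$ for some $\mathbf{v}_0'\in\dom{\V'}$. Recall $\G'$ is a subgraph of $\G$ on vertex set $\V'\cup\U'$ with $\U'\subseteq\U$, so $\Pa{X}{\G'}\subseteq\Pa{X}{\G}$ for every $X\in\V'$. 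For $j\in\{1,2\}$ I would build $\M_j\in\mathbb{M}(\G)$ by (i) keeping, for each vertex in $\V'\cup\U'$, its exogenous variable and mechanism from $\M_j'$, letting each $X\in\V'$ simply ignore the parents in $\Pa{X}{\G}\setminus\Pa{X}{\G'}$, and (ii) giving every vertex $X\in(\V\setminus\V')\cup(\U\setminus\U')$ a fresh exogenous variable uniform on $\dom{X}$ and setting $X$ equal to it (ignoring all its parents). With all exogenous variables mutually independent, $\G$ is a causal graph for each $\M_j$, and the crucial point is that the padding in (ii) and the parent-dropping in (i) are chosen identically for $j=1$ and $j=2$, so $\M_1$ and $\M_2$ can differ only through their $\M_j'$ components.

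I would then verify the three requirements with the $Q$-factorization \eqref{eq: factorization 2}. The key identity is that, for every $\W\subseteq\V$ and with $\mathbf{v}'$ the restriction of $\mathbf{v}$ to $\V'$,
\[
Q^{\M_j}[\W](\mathbf{v}) \;=\; Q^{\M_j'}[\W\cap\V'](\mathbf{v}')\cdot\!\!\prod_{X\in\W\setminus\V'}\!\!\frac{1}{|\dom{X}|}.
\]
This follows by splitting $\prod_{S\in\W}$ over $\W\cap\V'$ and $\W\setminus\V'$ — a vertex of the latter contributes the constant factor $1/|\dom{X}|$, a vertex of the former contributes $P^{\M_j'}(x\mid\Pa{X}{\G'})$ since its $\G$-mechanism ignores the extra parents — and by splitting $\sum_{\U}=\sum_{\U'}\sum_{\U\setminus\U'}$; since no padded unobserved variable is a $\G'$-parent of a $\V'$-vertex, the inner sum over $\U\setminus\U'$ factors out and equals $1$, leaving $Q^{\M_j'}[\W\cap\V']$. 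Specializing to $\W=\V$ (where $Q[\V]=P(\V)$) gives $P^{\M_j}(\mathbf{v})=P^{\M_j'}(\mathbf{v}')\prod_{X\in\V\setminus\V'}1/|\dom{X}|>0$, hence $\M_j\in\mathbb{M}^{+}(\G)$; to $\W=\A_i$ (with $\A_i\cap\V'=\A_i'$), together with $Q^{\M_1'}[\A_i']=Q^{\M_2'}[\A_i']$, gives \eqref{eq: same Q over A}; and to $\W=\mathbf{S}\subseteq\V'$ (empty product) gives $Q^{\M_j}[\mathbf{S}]=Q^{\M_j'}[\mathbf{S}]$, so \eqref{eq: diff Q over S} holds at any $\mathbf{v}_0\in\dom{\V}$ restricting to $\mathbf{v}_0'$. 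Thus $\M_1,\M_2$ witness that $Q[\mathbf{S}]$ is not g-identifiable from $(\mathbb{A},\G)$.

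The main obstacle is conceptual rather than computational. First, the construction relies on the convention that a structural mechanism $f_X$ may be constant in some of its $\Pa{X}{\G}$ coordinates; this is precisely what it means for $\G$ to be \emph{a} causal graph for $\M$ in this paper, and it is what lets each $X\in\V'$ absorb the edges of $\G$ absent from $\G'$ without altering its conditional law. Second, one must check that the $Q$-factorization over $\G$ decomposes cleanly into the $\G'$-factorization times the padded constant; this rests on $\Pa{X}{\G'}\subseteq\Pa{X}{\G}$ for $X\in\V'$ and $\U'\subseteq\U$ (both immediate from $\G'$ being a subgraph of $\G$ on vertex set $\V'\cup\U'$), which together imply that the $\V'$-part of $\M_j$ reproduces $\M_j'$ exactly while the remaining, independently generated vertices contribute only a constant factor to every $Q[\W]$. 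Once these points are settled, the rest is routine bookkeeping with \eqref{eq: factorization 2}.
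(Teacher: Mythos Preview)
Your proposal is correct and follows essentially the same approach as the paper: both arguments lift the two witnessing SEMs from $\G'$ to $\G$ by letting each $X\in\V'$ ignore its extra parents $\Pa{X}{\G}\setminus\Pa{X}{\G'}$ and by padding the vertices in $(\V\setminus\V')\cup(\U\setminus\U')$ with independent random variables, then verifying the $Q$-factorizations match. The only cosmetic difference is the padding choice---the paper uses degenerate one-point domains $\dom{X}=\{0\}$ for $X\in\V\setminus\V'$, whereas you use uniform laws on arbitrary finite domains---and your explicit identity $Q^{\M_j}[\W](\mathbf{v})=Q^{\M_j'}[\W\cap\V'](\mathbf{v}')\cdot\prod_{X\in\W\setminus\V'}|\dom{X}|^{-1}$ makes the bookkeeping slightly more transparent than the paper's one-line justification.
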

    To complete the proof using Lemma \ref{lem: simplify}, it is enough to show that $Q[\mathbf{S}]$ is not g-identifiable from $(\mathbb{A}', \G')$.
    
    \paragraph{From g-identifiability to a system of linear equations:}\label{sec: linearequations}
    To show  that $Q[\mathbf{S}]$ is not g-identifiable from $(\mathbb{A}', \G')$, we introduce two models in $\mathbb{M^+(\G')}$ such that equations \eqref{eq: same Q over A} and \eqref{eq: diff Q over S} are satisfied. 
    That is, $Q[\mathbf{S}]$ cannot be uniquely computed from $\{Q[\A'_i]\}_{i=0}^m$.
    
    Note that to define a SEM $\M$ over a causal graph $\G'$, it suffices to define the domains $\dom{X}$ and either the conditional distributions $P^{\M}(X|\Pa{X}{\G'})$ or the corresponding equation in the SEM for all  $X\in \V'\cup \U'$, where $\V'$ and $\U'$ denote the observed and unobserved variables in $\G'$. 
    We define the domains of all variables to be finite, i.e., $|\dom{X}|<\infty$ for all $X\in\V'\cup \U'$.
    Let $U_0 \in \U'$ be a fixed unobserved variable (we will discuss later how to select $U_0$) with domain $\dom{U_0} := \{\gamma_1,\cdots,\gamma_d\}$.
    We define both models $\M_1$ and $\M_2$ to have similar distributions over all variables except variable $U_0$ (We will specify these distributions in Section \ref{sec: M1}.)
    More specifically, for all $V \in \V'$,
    \begin{equation} \label{eq: same dist 1}
        P^{\M_1}(V \mid \Pa{V}{\G'}) = P^{\M_2}(V \mid \Pa{V}{\G'}),
    \end{equation}
    and for all $U \in \U' \setminus \{U_0\}$,
    \begin{equation} \label{eq: same dist 2}
        P^{\M_1}(U) = P^{\M_2}(U)=\frac{1}{|\dom{U}|}.
    \end{equation}
    As the distributions in Equations \eqref{eq: same dist 1} and \eqref{eq: same dist 2} are the same for both models, for the sake of brevity, we drop the superscripts $\M_1$ and $\M_2$ from here on.
    For $j\in[1:d]$, We define $P^{\M_1}(U_0=\gamma_j)=1/d$ and $P^{\M_2}(U_0 = \gamma_j)=p_j$, where we will specify $\{p_j\}_{j=1}^d$ later such that $\M_2\in\mathbb{M}^+(\G')$ and both Equations \eqref{eq: same Q over A} and \eqref{eq: diff Q over S} hold.
    
    For $\mathbf{v} \in \dom{\V'}$, $i\in [0:m]$, and $j\in [1:d]$, we define
    \begin{equation*}
        \theta_{i, j}(\mathbf{v}) :=\! \sum_{\U' \setminus \{U_0\}} \prod_{X \in \A_i'} P(x \mid \Pa{X}{\G'})\! \prod_{U\in \U' \setminus \{U_0\}}\! P(u),
    \end{equation*}
    \begin{equation*}
        \eta_j(\mathbf{v}) :=\! \sum_{\U'\setminus \{U_0\}} \prod_{X \in \mathbf{S}} P(x \mid \Pa{X}{\G'})\! \prod_{U\in \U'\setminus \{U_0\}}\! P(u),
    \end{equation*}
    where the index $j$ indicates that $U_0=\gamma_j$ in the factorizations. 
    Using these definitions, we can write $\{Q[\A'_i]\}_{i=0}^m$ and $Q[\textbf{S}]$ for both models $\M_1$ and $\M_2$ as follows:
    \begin{equation} \label{eq: Q[A] with c}
    \begin{split}
        Q^{\M_1}[\mathbf{A}'_i](\mathbf{v}) 
        = \sum_{j=1}^d \frac{1}{d} \theta_{i,j}(\mathbf{v}), \\
        Q^{\M_2}[\mathbf{A}'_i](\mathbf{v}) 
        = \sum_{j=1}^d p_j \theta_{i,j}(\mathbf{v}),
    \end{split}
    \end{equation}
    and
    \begin{equation} \label{eq: Q[S] with c}
    \begin{split}
        Q^{\M_1}[\mathbf{S}](\mathbf{v}) 
        &= \sum_{j=1}^d \frac{1}{d} \eta_j(\mathbf{v}), \\
        Q^{\M_2}[\mathbf{S}](\mathbf{v}) 
        &= \sum_{j=1}^d p_j \eta_j(\mathbf{v}).
    \end{split}
    \end{equation}
    As we mentioned, we need to define $\{p_j\}_{j=1}^d$ such that $\M_2 \in \mathbb{M^+(\G')}$ and both Equations \eqref{eq: same Q over A} and \eqref{eq: diff Q over S} hold.
    Substituting Equations \eqref{eq: Q[A] with c} and \eqref{eq: Q[S] with c} into \eqref{eq: same Q over A} and \eqref{eq: diff Q over S} yield the following set of equations.
    \begin{equation} \label{eq: linear system 1}
    \begin{split}
        &\sum_{j=1}^d (p_j - \frac{1}{d}) \theta_{i,j}(\mathbf{v}) =0, \hspace{0.2cm}\forall \mathbf{v} \in \dom{\V'}, i\in [0,m],\\
        &\sum_{j=1}^d (p_j - \frac{1}{d}) \eta_j(\mathbf{v}_0) \neq 0, \hspace{0.2cm} \exists \mathbf{v}_0 \in \dom{\V;},\\
        & \sum_{j=1}^d p_j = 1,\\
        & 0<p_j<1, \hspace{0.2cm} \forall j \in [1:d].
    \end{split}
    \end{equation}
    Note that the last inequalities ensure that $\M_2 \in \mathbb{M^+(\G')}$.
    The system of linear equations in \eqref{eq: linear system 1} is solvable with respect to  $\{p_j\}_{j=1}^d$ if and only if the following system of linear equations is solvable with respect to $\{\beta_j\}_{j=1}^d$.
    \begin{equation} \label{eq: linear system 2}
    \begin{split}
        &\sum_{j=1}^d \beta_j \theta_{i,j}(\mathbf{v}) =0, \hspace{0.2cm}\forall \mathbf{v} \in \dom{\V'}, i \in [0:m]\\
        &\sum_{j=1}^d \beta_j \eta_j(\mathbf{v}_0) \neq 0, \hspace{0.2cm} \exists \mathbf{v}_0 \in \dom{\V'}\\
        & \sum_{j=1}^d \beta_j=0.
    \end{split}
    \end{equation}
    
    \begin{remark}
        If $\{\beta_j^*\}_{j=1}^d$ is a solution for \eqref{eq: linear system 2}, then
        \begin{equation} \label{eq: system to system}
            p_j^* := \frac{1}{d} + \frac{\beta^*_j}{2h d},
        \end{equation}
        is a solution for \eqref{eq: linear system 1}, where $h = \underset{j \in [1:d]}{max} |\beta^*_j|$. 
        Note that the division by $2h$ in Equation \eqref{eq: system to system} ensures that $0< p_j^* <1$ for each $j \in [1:d]$.
    \end{remark}
    A solution to the system of linear equations in \eqref{eq: linear system 2} will specify the distribution of $U_0$ in model $\M_2$. Clearly, existence of a solution to \eqref{eq: linear system 2} depends on the choices of $\{\theta_{i,j}(\mathbf{v})\}$ and $\{\eta_j(\mathbf{v})\}$. 
    The following result presents a sufficient condition under which \eqref{eq: linear system 2} admits a solution.
    
    For $\mathbf{v}\in \dom{\V'}$ and $i\in [0:m]$, let $\theta_i(\mathbf{v})$ and $\eta(\mathbf{v})$ denote the vectors $(\theta_{i,1}(\mathbf{v}),...,\theta_{i,d}(\mathbf{v}))$  and $(\eta_1(\mathbf{v}),...,\eta_d(\mathbf{v}))$ in $\mathbb{R}^d$, respectively.
    \begin{lemma}\label{lem: lin indep}
        Consider the following set of vectors in $\mathbb{R}^d$
         \begin{equation} \label{eq: lin vectors}
            \mathbf{\Omega}:=\{\theta_{i}(\mathbf{v}):\ i\in [0:m], \mathbf{v}\in \dom{\V'}\} \cup \{\mathds{1}_d\},
        \end{equation}
        where $\mathds{1}_d$ denotes the all-ones vector in $\mathbb{R}^d$. 
        If there exists $\mathbf{v}_0 \in \dom{\V'}$ such that $\eta(\mathbf{v}_0)$ is linearly independent from all the vectors in $\mathbf{\Omega}$, then the system of linear equations in \eqref{eq: linear system 2} admits a solution. 
    \end{lemma}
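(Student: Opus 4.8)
The plan is to recast \eqref{eq: linear system 2} as a purely linear-algebraic statement about orthogonal complements in $\mathbb{R}^d$ and then invoke the double-complement identity. First I would observe that a vector $\beta=(\beta_1,\dots,\beta_d)\in\mathbb{R}^d$ satisfies the equality constraints of \eqref{eq: linear system 2} — namely $\sum_{j} \beta_j \theta_{i,j}(\mathbf{v})=0$ for all $i\in[0:m]$ and $\mathbf{v}\in\dom{\V'}$, together with $\sum_j \beta_j = 0$ — if and only if $\beta$ is orthogonal to every vector in $\mathbf{\Omega}$; indeed $\sum_j\beta_j=0$ is exactly $\langle\beta,\mathds{1}_d\rangle=0$ and $\mathds{1}_d\in\mathbf{\Omega}$, while the remaining constraints are $\langle\beta,\theta_i(\mathbf{v})\rangle=0$. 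Writing $W:=\mathrm{span}(\mathbf{\Omega})\subseteq\mathbb{R}^d$, the set of $\beta$'s meeting all the equality constraints is therefore exactly $W^{\perp}$, and solving \eqref{eq: linear system 2} reduces to exhibiting some $\beta\in W^{\perp}$ with $\langle\beta,\eta(\mathbf{v}_0)\rangle\neq 0$.

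Next I would argue that such a $\beta$ exists precisely because $\eta(\mathbf{v}_0)\notin W$, which is the content of the hypothesis that $\eta(\mathbf{v}_0)$ is linearly independent from all vectors in $\mathbf{\Omega}$ (read as: not in their span). Suppose, toward a contradiction, that every $\beta\in W^{\perp}$ had $\langle\beta,\eta(\mathbf{v}_0)\rangle=0$; this says $\eta(\mathbf{v}_0)\in(W^{\perp})^{\perp}$. Since $\mathbb{R}^d$ is finite-dimensional, $(W^{\perp})^{\perp}=W$, so $\eta(\mathbf{v}_0)\in W$, contradicting the hypothesis. Hence there is $\beta^{*}\in W^{\perp}$ with $\langle\beta^{*},\eta(\mathbf{v}_0)\rangle\neq 0$, and by construction $\beta^{*}$ satisfies all three lines of \eqref{eq: linear system 2}. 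Together with the preceding Remark — which converts any solution of \eqref{eq: linear system 2} into a solution $\{p_j^{*}\}$ of \eqref{eq: linear system 1} with $0<p_j^{*}<1$ for all $j$, i.e.\ into a legitimate distribution for $U_0$ in $\M_2\in\mathbb{M}^{+}(\G')$ — this is exactly what is needed to produce $\M_1,\M_2\in\mathbb{M}^{+}(\G')$ realizing \eqref{eq: same Q over A} and \eqref{eq: diff Q over S}.

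I do not expect a genuine obstacle in this lemma: its entire content is the elementary identity $(W^{\perp})^{\perp}=W$ in finite dimensions, and the only points requiring care are (i) checking that $\mathds{1}_d\in\mathbf{\Omega}$ correctly encodes the normalization constraint $\sum_j\beta_j=0$, so that membership in $W^{\perp}$ simultaneously captures \emph{all} equality constraints, and (ii) interpreting the hypothesis as $\eta(\mathbf{v}_0)\notin\mathrm{span}(\mathbf{\Omega})$ rather than as pairwise independence. The substantive work of the section lies elsewhere — in arranging the spanning c-forest $\F^{\mathbf{S}}$ and the choice of the distinguished unobserved variable $U_0$ so that the linear-independence premise of this lemma is actually met — but that is handled by the subsequent lemmas, not here.
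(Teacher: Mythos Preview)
Your argument is correct and is essentially the same as the paper's: both reduce \eqref{eq: linear system 2} to the elementary fact that if $\eta(\mathbf{v}_0)\notin\mathrm{span}(\mathbf{\Omega})$ then there exists $\beta$ orthogonal to every vector in $\mathbf{\Omega}$ but not to $\eta(\mathbf{v}_0)$. The paper packages this fact as a separate technical lemma (proved by extracting a basis of $\mathrm{span}(\mathbf{\Omega})$ and noting the resulting linear system has independent rows), whereas you invoke the finite-dimensional identity $(W^{\perp})^{\perp}=W$ directly; these are equivalent one-line proofs of the same linear-algebra statement.
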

    To summarize, so far, we have introduced two models for proving the necessary part of Theorem \ref{thm: main}. 
    In order to complete the proof, it remains to specify the conditional distributions in \eqref{eq: same dist 1} for all observed variables which consequently specify the vectors in $\mathbf{\Omega}$ in Equation \eqref{eq: lin vectors} and to find a realization $\mathbf{v}_0 \in \dom{\V'}$ such that $\eta(\mathbf{v}_0)$ is linearly independent from the set of the vectors in $\mathbf{\Omega}$.

    \paragraph{Constructing the conditional distributions:} \label{sec: M1}
    In order to specify the conditional distributions in \eqref{eq: same dist 1}, we first introduce the following definitions and notations. 
    
    Since $\B_0$ is a single c-component, the bidirected edges in $\G_{\B_0}'$ form a connected graph.
    Hence, there exists a bidirected edge between $\mathbf{S}$ and $\B_0 \setminus \mathbf{S}$.
    Accordingly, let $U_0$ be an unobserved variable in subgraph $\mathcal{F}_0$ that has one child in $\mathbf{S}$ and one child in $\T := \V' \setminus \mathbf{S}$.
    We denote the set of unobserved variables in $\G[\mathbf{S}]$ by $\U^{\mathbf{S}}$ and define  $\U^{\T} := \U' \setminus ( \U^{\mathbf{S}} \cup \{U_0\})$. 
    For $X \in \V' \cup \U'$, we define $\alpha(X)$ to denote the number of graphs in $\{\F_j\}_{j=0}^{k}$ that contains $X$.
    
    For each $i\in [0:k]$, let $T_i$ denotes a node in $\B_i \setminus \mathbf{S}$ such that $\Ch{T_i}{\mathcal{F}_i} \cap \mathbf{S} \neq \varnothing$. 
    Note that such variables exist because $\mathcal{F}_i$s are $\mathbf{S}$-rooted c-forest. 

    Now, we are ready to introduce the domains of all variables in $\V'\cup\U'$. Recall that $\V'=\textbf{S}\cup\T$ and $\U'=\U^{\mathbf{S}} \cup \U^{\T} \cup \{U_0\}$.
    \begin{align*}
        &\dom{X} := [0:\kappa], \quad \forall X \in \mathbf{S},\\
        &\dom{X} := \{0, 1\}^{\alpha(T)}, \quad \forall X \in \mathbf{T},\\
        &\dom{X} := [0:\kappa], \quad \forall X \in \U^{\mathbf{S}},\\
        &\dom{X} := \{0, 1\}^{\alpha(U)}, \quad \forall X \in \mathbf{U}^{\T},\\
        &\dom{U_0} := [0:\kappa]\times\{0, 1\}^{\alpha(U_0)-1}.
    \end{align*}
    In the above definition, $\kappa$ is an arbitrary odd integer greater than $4$. 
    Note that the number of elements in $\dom{U_0}$ is $d=(\kappa+1)2^{\alpha(U_0)-1}$.

    According to the above definitions, for each $X\in \T\cup\U^{\T}\cup \{U_0\}$, its domain $\dom{X}$ is a subset of  $\mathbb{R}^{\alpha(X)}$ and it belongs to exactly $\alpha(X)$ number of subgraphs in $\{\F_i\}_{i=0}^k$.
    Suppose $X$ belongs to $\F_{i_1},\cdots, \F_{i_{\alpha(X)}}$, where $i_1 < \cdots < i_{\alpha(X)}$.
    Thus, we denote $X$ by a vector $ (X[i_1],\cdots,X[i_{\alpha(X)}])$ of length $\alpha(X)$. Next, we construct the conditional distributions of the observed variables by specifying their functional dependencies to their parents.
        
    When $X\in \T$, we define the entries of its corresponding vector as
    \begin{equation*}
        X[i_j] \equiv \left(\sum_{Y \in \Pa{X}{\F_{i_j}}} Y[i_j]\right) \pmod{2},
    \end{equation*}
    where $j\in[1:\alpha(X)]$.
    
    We now construct the variables in $\mathbf{S}$. 
    Recall that $U_0$ has one child in $\mathbf{S}$ which we denote it by $S_0$.
    For each $S \in \mathbf{S}\setminus \{S_0\}$ and any realization of $\Pa{S}{\G'}$, we define $\mathds{I}(S)$ to be one if there exists $i\in [0:k]$ such that
    \begin{enumerate}
        \item $T_i \in \Pa{S}{\G'}$ and $T_i[i]=0$, or
        \item there exists $X \in \Pa{S}{\G'} \setminus (\U^{\mathbf{S}}\cup \{T_i\})$ such that $\F_i$ contains $X$ and $X[i]=1$,
    \end{enumerate}
    and zero, otherwise.
    Note that according to the definition of $T_i$, it belongs to $\mathcal{F}_i$ and therefore, $T_i[i]$ exists. 
    Analogously, we define $\mathds{I}(S_0)$ to be one if there exists $i\in [0:k]$ such that
    \begin{enumerate}
        \item $T_i \in \Pa{S}{\G'}$ and $T_i[i]=0$, or
        \item $i\neq0$, $\mathcal{F}_i$ contains $U_0$, and $U_0[i]=1$, or
        \item there exists $X \in \Pa{S}{\G'} \setminus (\U^{\mathbf{S}}\cup \{T_i,U_0\})$ such that $\F_i$ contains $X$ and $X[i]=1$.
    \end{enumerate}
    
    Now, for each $S\in \mathbf{S}$ and $s \in [0:\kappa]$, we define $P(S=s \mid \Pa{S}{\G'})$ as
    \begin{equation*}
    \begin{cases} 
        \frac{1}{\kappa+1} & \text{ if } \mathds{I}(S)=1\\
        1-\kappa\epsilon &  \text{ if } \mathds{I}(S)=0  \text{ and } s \equiv M(S) \pmod{\kappa+1}, \\
        \epsilon &  \text{ if }  \mathds{I}(S)=0  \text{ and } s\not\equiv M(S) \pmod{\kappa+1},
    \end{cases}
    \end{equation*}
    where $0<\epsilon<\frac{1}{\kappa}$ and 
    \begin{equation*}
    M(S):=
    \begin{cases} 
        \sum_{x\in \Pa{S}{\G'[\mathbf{S}]}}x & \text{, if } S\in \mathbf{S}\setminus \{S_0\}, \\
        u_0[0]+\sum_{x\in \Pa{S}{\G'[\mathbf{S}]}}x & \text{, if $S=S_0$ }.
    \end{cases}
    \end{equation*}
    Note that $M(S)$ is an integer number. This is because $\Pa{S}{\G'[\mathbf{S}]}\subseteq \U^{\mathbf{S}}$ and thus all terms in the above definition belong to $[0:\kappa]$. 
    \begin{lemma} \label{lemma: valid model}
        The SEM constructed above belongs to $\mathbb{M}^+(\G')$. 
    \end{lemma}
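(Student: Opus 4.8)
The plan is to discharge two obligations: that the recipe defines a bona fide SEM with causal graph $\G'$, and that the resulting distribution is strictly positive on $\dom{\V'}$. The first is routine. Every variable is assigned a conditional that depends only on its $\G'$-parents, so it suffices to check that the $\mathbf{S}$-conditionals are probability vectors: because $[0:\kappa]$ is a complete residue system modulo $\kappa+1$, exactly one $s\in[0:\kappa]$ satisfies $s\equiv M(S)\pmod{\kappa+1}$, so for $\mathds{I}(S)=0$ the masses sum to $(1-\kappa\epsilon)+\kappa\epsilon=1$ and for $\mathds{I}(S)=1$ they sum to $(\kappa+1)\tfrac1{\kappa+1}=1$, while all of $\tfrac1{\kappa+1},\,1-\kappa\epsilon,\,\epsilon$ lie in $(0,1)$ since $0<\epsilon<\tfrac1\kappa$; the $\T$-variables are deterministic functions of parents and the unobserved variables are uniform, so the model lies in $\mathbb{M}(\G')$.

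For positivity I would argue pointwise using the factorization \eqref{eq: factorization 1}. Fix $\mathbf{v}=(\mathbf{s},\mathbf{t})\in\dom{\V'}$; it is enough to produce one $\mathbf{u}'\in\dom{\U'}$ for which the summand $\big(\prod_{S\in\mathbf{S}}P(s\mid\Pa{S}{\G'})\big)\big(\prod_{T\in\T}P(t\mid\Pa{T}{\G'})\big)\prod_{U\in\U'}P(u)$ is positive. Two of the factors cause no trouble: $\prod_{U}P(u)>0$ for every $\mathbf{u}'$ (each unobserved variable has full support on a finite domain — literally so in $\M_1$, and in $\M_2$ once the later constraints $0<p_j<1$ are imposed), and $\prod_{S\in\mathbf{S}}P(s\mid\Pa{S}{\G'})>0$ for \emph{every} $\mathbf{u}'$ and every $\mathbf{s}$, since each $\mathbf{S}$-conditional equals one of the three strictly positive values above. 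So everything comes down to choosing $\mathbf{u}'$ so that each $\T$-variable realizes the value dictated by $\mathbf{t}$, i.e. so that the parity equations $t[i]\equiv\sum_{Y\in\Pa{T}{\F_i}}y[i]\pmod 2$ hold for every $T\in\T$ and every $\F_i$ containing $T$.

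The crux — and the step I expect to be the real obstacle — is proving these parity constraints are jointly satisfiable by some $\mathbf{u}'$. First, the variables of $\U^{\mathbf{S}}$ are parents only of $\mathbf{S}$-vertices and never occur in the constraints, so I set them arbitrarily. The remaining constraints decouple over $i\in[0:k]$, and for each $i$ they form a linear system over $\mathbb{F}_2$ whose unknowns are the coordinates $U[i]$ of the confounders of $\F_i$ incident to at least one vertex of $\T$ (in the $i=0$ system only $U_0[0]\bmod 2$ enters, and $U_0[0]$ is afterwards lifted to an element of $[0:\kappa]$ of that parity). I would prove solvability from the c-forest structure: the bidirected edges of $\F_i$ form a spanning tree $\mathcal{T}_i$ on $\B_i$ whose restriction to $\mathbf{S}$ is already a spanning tree of $\mathbf{S}$ (since $\F_i[\mathbf{S}]=\F^{\mathbf{S}}$), so contracting $\mathbf{S}$ to a single vertex $\ast$ turns $\mathcal{T}_i$ into a tree on $\{\ast\}\cup(\B_i\setminus\mathbf{S})$; rooting it at $\ast$ gives a bijection between the vertices of $\B_i\setminus\mathbf{S}$ and the $\T$-incident confounders of $\F_i$, pairing each vertex with the confounder on its parent edge. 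Ordering the $\T$-vertices by depth in this rooted tree, the coefficient matrix (rows $=$ $\T$-vertices, columns $=$ $\T$-incident confounders) becomes triangular with $1$'s on the diagonal, because the confounder on the parent edge of $T'$ is a parent in $\F_i$ only of $T'$ and of the parent of $T'$ in the rooted tree. A triangular $\mathbb{F}_2$-matrix with unit diagonal is invertible, so the system is solvable; the solution together with the arbitrary choices on $\U^{\mathbf{S}}$ is the desired $\mathbf{u}'$, giving $P^{\M}(\mathbf{v})>0$ and hence $\M\in\mathbb{M}^+(\G')$.

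Beyond the bookkeeping, the points that will need genuine care are: (i) confirming the $\mathbf{S}$-factor and the unobserved-factor are truly independent of $\mathbf{u}'$, so that a single good $\mathbf{u}'$ is enough; (ii) that every observed vertex of $\F_i$ is incident to some confounder of $\F_i$, so no parity equation degenerates to a constant that could contradict $\mathbf{t}$ — this follows from $\mathcal{T}_i$ being a connected spanning tree; and (iii) that the number of $\T$-incident confounders of $\F_i$ equals the number of parity equations, so the $\mathbb{F}_2$-system is square, and that the depth ordering genuinely triangularizes it.
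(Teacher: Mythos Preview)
Your proof is correct and follows the paper's strategy: reduce positivity to exhibiting a single $\hat{\mathbf{u}}$ that satisfies the deterministic parity equations for the $\T$-variables (the $\mathbf{S}$-conditionals and the unobserved priors being strictly positive regardless), and then use the bidirected spanning-tree structure of each $\F_i$ to solve those equations coordinate by coordinate. The only difference is in how solvability is argued---the paper flips the confounders along the unique bidirected path from a given $X\in\T\cap\B_i$ to $\mathbf{S}$, which toggles the satisfaction of exactly one equation, whereas you root the $\mathbf{S}$-contracted tree at $\ast$ and observe that the resulting $\mathbb{F}_2$-coefficient matrix is upper triangular with unit diagonal; these are the same back-substitution along the tree, phrased combinatorially versus linear-algebraically.
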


    \paragraph{Existence of realization $\textbf{v}_0$:} \label{sec: complete}
    Herein, we show that for the aforementioned conditional distributions, there exists a realization $\textbf{v}_0$ such that $\eta(\mathbf{v}_0)$ is linearly independent from the set of the vectors in $\mathbf{\Omega}$ (in Equation \eqref{eq: lin vectors}).  
    Consider the following subset of $\dom{U_0}=\{\gamma_1,...,\gamma_d\}$ with $\frac{\kappa+1}{2}$ elements:
    \begin{equation*}
        \mathbf{\Gamma} := \Big\{(2x,0,\cdots,0)\!:\: x\in [0:\frac{\kappa-1}{2}]\Big\}.
    \end{equation*}
    Recall that for $\mathbf{v}\in \dom{\V'}$ and $i\in[0:m]$, $\theta_i(\mathbf{v})$ and $\eta(\mathbf{v})$ are two vectors in $\mathbb{R}^d$ with $j$-th entry corresponds to $U_0=\gamma_j$. 
    Suppose that $\mathbf{\Gamma}=\{\gamma_{j_1},...,\gamma_{j_{\frac{\kappa+1}{2}}}\}$. Next result shows that in our constructed models, all entries of $\theta_i(\textbf{v})$ with indices in $\{j_1,...,j_{\frac{\kappa+1}{2}}\}$ are equal.
    \begin{lemma} \label{lem: theta index}
        For any $\mathbf{v} \in \dom{\V'}$ and $i\in[0:m]$,
        \begin{equation*}
            \theta_{i,j_1}(\mathbf{v}) = \theta_{i,j_2}(\mathbf{v}) = \cdots= \theta_{i,j_{\frac{\kappa+1}{2}}}(\mathbf{v}).
        \end{equation*}
    \end{lemma}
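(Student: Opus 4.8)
The plan is to track precisely how the value $\gamma_j$ of $U_0$ can enter $\theta_{i,j}(\mathbf{v})$. In $\G'$ the variable $U_0$ has exactly two children, $S_0\in\mathbf{S}$ and the variable $T^{\ast}\in\T$ which is its other child in $\mathcal{F}_0$, so $\gamma_j$ influences $\theta_{i,j}(\mathbf{v})$ only through the factors $P(v_{S_0}\mid\Pa{S_0}{\G'})$ and $P(v_{T^{\ast}}\mid\Pa{T^{\ast}}{\G'})$, the latter also feeding into the factors of the $\T$-descendants of $T^{\ast}$. I first dispose of the $\T$-side: every variable in $\T$ is, by construction, a coordinatewise sum modulo $2$ of coordinates of its parents, and for $\gamma_j\in\mathbf{\Gamma}$ every coordinate of $U_0$ is even (its first coordinate is $2x$ and the rest are $0$), so $U_0$ contributes $0$ to every such sum modulo $2$. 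Hence the value of $T^{\ast}$, and inductively of every $\T$-variable, regarded as a function of the other variables, is identical for all $\gamma_j\in\mathbf{\Gamma}$, so the $\T$-factors carry no $j$-dependence within $\mathbf{\Gamma}$. Thus $\theta_{i,j}(\mathbf{v})$ can vary over $\mathbf{\Gamma}$ only through the factor $P(v_{S_0}\mid\Pa{S_0}{\G'})$, and only when $S_0\in\A_i'$.

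Next I analyze this single factor. The indicator $\mathds{I}(S_0)$ is itself constant over $\mathbf{\Gamma}$, since its only $U_0$-dependent defining clause (its second one) requires $U_0[i]=1$ for some $i\neq0$, which fails for every $\gamma_j\in\mathbf{\Gamma}$. If $\mathds{I}(S_0)=1$ the factor equals $\frac{1}{\kappa+1}$, independent of $j$, and we are done. Otherwise $\mathds{I}(S_0)=0$ and the factor depends on whether $v_{S_0}\equiv M(S_0)\pmod{\kappa+1}$ with $M(S_0)=u_0[0]+\sum_{U\in\Pa{S_0}{\G'[\mathbf{S}]}}u$; as $u_0[0]=2x$ runs over $\frac{\kappa+1}{2}$ distinct residues while $\gamma_j$ ranges over $\mathbf{\Gamma}$, this factor does vary for a fixed configuration of the other variables, so the point is to show that it washes out after summing over $\U^{\mathbf{S}}$.

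To prove the wash-out I fix the configuration of all unobserved variables outside $\{U_0\}\cup\U^{\mathbf{S}}$, so that every $\mathds{I}(S)$ with $S\in\mathbf{S}$ and every $\T$-factor is determined, and I apply discrete Fourier analysis on $\mathbb{Z}_{\kappa+1}$ to $\sum_{\U^{\mathbf{S}}}\prod_{S\in\mathbf{S}\cap\A_i'}P(v_S\mid\Pa{S}{\G'})\prod_{U\in\U^{\mathbf{S}}}\frac{1}{\kappa+1}$, using that $\U^{\mathbf{S}}$ indexes the edges of a spanning tree on $\mathbf{S}$ and that $\Pa{S}{\G'[\mathbf{S}]}\subseteq\U^{\mathbf{S}}$. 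Summing out each uniform $u_U$ forces, at a tree edge $U$ whose two endpoints both contribute a factor $P(v_S\mid\cdot)$ with $S\in\A_i'$ and $\mathds{I}(S)=0$, the two character exponents at those endpoints to be negatives of each other, and forces the exponent to be $0$ at an endpoint if only that endpoint contributes such a factor. Propagating these relations along the tree, all exponents at the $\mathbf{S}$-vertices contributing such a factor are determined by the exponent at $S_0$, and collapse to $0$ the moment this set of vertices meets a vertex outside it. Hence the exponent at $S_0$ is forced to $0$ — so the sum does not depend on $u_0[0]$ and $\theta_{i,j}(\mathbf{v})$ is constant over $\mathbf{\Gamma}$ — unless $\{S\in\mathbf{S}:\mathds{I}(S)=0\}\cap\A_i'=\mathbf{S}$; in particular this is automatic for every $i>k$, where $\mathbf{S}\not\subseteq\A_i'$.

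The remaining configuration, possible only for $i\le k$, is $\mathbf{S}\subseteq\A_i'$ with $\mathds{I}(S)=0$ for all $S\in\mathbf{S}$; this is where I expect the main work. There the Fourier computation leaves a one-parameter family of exponents $t_S=(-1)^{\operatorname{dist}(S,S_0)}t$, and the $u_0[0]$-dependent part of the sum is a generically nonzero multiple of $\sum_{t\neq0}\omega^{\,t\,(u_0[0]-w)}$, where $\omega=e^{2\pi i/(\kappa+1)}$ and $w=\sum_{S\in\mathbf{S}}(-1)^{\operatorname{dist}(S,S_0)}v_S$; this quantity is constant over $u_0[0]=2x$, $x\in[0:\frac{\kappa-1}{2}]$, exactly when $w$ is odd, because $\kappa+1$ is even and hence no even residue $2x$ can equal $w\bmod(\kappa+1)$. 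So it suffices to show $w$ is odd in this configuration, and for that I would use the c-forest $\mathcal{F}_i$: being $\mathbf{S}$-rooted it contains $T_i$ with a child $S'\in\mathbf{S}$, so $T_i\in\Pa{S'}{\G'}$, and then $\mathds{I}(S')=0$ together with its first defining clause forces $T_i[i]=1$; tracing this parity down the directed paths of $\mathcal{F}_i$ toward its roots $\mathbf{S}$ — each internal node of $\mathcal{F}_i$ being a sum modulo $2$ of its parents — should pin down the parity of the relevant signed sum of the $v_S$, i.e.\ of $w$. I expect this parity-tracking through $\mathcal{F}_i$, along with the degenerate case $|\mathbf{S}|=1$, to be the only delicate point, the rest being bookkeeping.
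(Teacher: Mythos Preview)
Your Fourier route is a genuine alternative to the paper's argument and is correct through the case $\{S\in\mathbf S:\mathds I(S)=0\}\cap\A_i'\subsetneq\mathbf S$, where the tree constraints on the characters indeed force $t_{S_0}=0$. The paper handles that same case not by characters but by an explicit bijection on $\dom{\U^{\mathbf S}}$: given a vertex $S$ with $\mathds I(S)=1$ or $S\notin\A_i'$, it alternately adds and subtracts $2$ (mod $\kappa+1$) to the $u_U$'s along the tree-path from $S_0$ to $S$, matching each $u_0[0]=2x$ configuration with a $u_0[0]=2x+2$ configuration having identical $\mathbf S$-factors. Your character computation is essentially the Fourier dual of that bijection.

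The gap is in your residual case ($i\le k$, all $\mathds I(S)=0$). You aim to show $w=\sum_{S}(-1)^{\operatorname{dist}(S,S_0)}v_S$ is odd, but this is unprovable: each $\mathds I(S)$ depends only on realizations of variables in $\T\cup\U^{\T}\cup\{U_0\}$, so the hypothesis places no constraint whatsoever on the values $v_S\in[0{:}\kappa]$; shifting $v_{S_0}$ by $1$ keeps you in the same case while flipping the parity of $w$. Your proposed parity-tracing through $\mathcal F_i$ never reaches the $v_S$'s either, since $\mathbf S$ is the root set of $\mathcal F_i$ and hence no $S\in\mathbf S$ appears as a parent in any $\T$-equation. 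What that tracing actually yields---and this is exactly the paper's key step---is that the residual configuration is \emph{impossible} whenever the $\T$-factors are all $1$: summing the mod-$2$ equations over $T\in\B_i\setminus\mathbf S$ gives $\sum_{W\in\mathbf\Lambda}w[i]\equiv0\pmod2$ for $\mathbf\Lambda:=\Pa{\mathbf S}{\mathcal F_i}\setminus\Pa{\mathbf S}{\mathcal F_i[\mathbf S]}$, while ``all $\mathds I(S)=0$'' forces $T_i[i]=1$ and every other $w[i]=0$ on this set, a contradiction. Hence in your residual case the product of $\T$-factors (which you already factored out) must vanish, the contribution for that $\mathbf u^{\T}$ is $0$ for every $j$, and your proof closes. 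The pipeline is sound once you replace the target ``$w$ is odd'' by ``this configuration never contributes''.
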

    An immediate consequence of this result is that any linear combination of the vectors in $\mathbf{\Omega}$ will have equal entries at the indices in $\{j_1,...,j_{\frac{\kappa+1}{2}}\}$.
    Next, we show there exists a realization $\textbf{v}_0$ for which $\eta(\mathbf{v}_0)$ does not follow this pattern and thus it is linearly independent of all vectors in $\mathbf{\Omega}$.
    \begin{lemma}\label{lem: eta index}
        There exists $0<\epsilon<\frac{1}{\kappa}$ for which there exists $\mathbf{v}_0 \in \dom{\V'}$ and $1\leq r <t\leq \frac{\kappa+1}{2}$ such that 
        \begin{equation*}
            \eta_{j_r}(\mathbf{v}_0) \neq \eta_{j_t}(\mathbf{v}_0). 
        \end{equation*}
    \end{lemma}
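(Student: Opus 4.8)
The plan is to exhibit $\mathbf{v}_0\in\dom{\V'}$, indices $1\le r<t\le\frac{\kappa+1}{2}$, and a value $\epsilon\in(0,\frac1\kappa)$ for which $\eta_{j_r}(\mathbf{v}_0)\neq\eta_{j_t}(\mathbf{v}_0)$, by reducing the claim to the non-vanishing of a polynomial in $\epsilon$ and then choosing $\mathbf{v}_0$ so that this polynomial is nontrivial.

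First I isolate the dependence on $\gamma_j$. Let $S_0$ be the unique child of $U_0$ in $\mathbf{S}$; it is the only node of $\mathbf{S}$ having $U_0$ among its parents, so $\gamma_j$ enters $\eta_j(\mathbf{v})$ only through the factor $P(s\mid\Pa{S_0}{\G'})$. For $\gamma_j=(2x,0,\dots,0)\in\mathbf{\Gamma}$ all coordinates $U_0[i]$ with $i\ge1$ vanish, so the clause of $\mathds{I}(S_0)$ requiring $U_0[i]=1$ for some $i\neq0$ is never active and $\mathds{I}(S_0)$ does not depend on such $j$; moreover, when $\mathds{I}(S_0)=0$ the factor equals $1-\kappa\epsilon$ if $s_0\equiv u_0[0]+c\pmod{\kappa+1}$ and $\epsilon$ otherwise, where $c:=\sum_{x\in\Pa{S_0}{\G'[\mathbf{S}]}}x$ depends only on the values of $\U^{\mathbf{S}}$. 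Since every unobserved variable other than $U_0$ is uniform, $\eta_j(\mathbf{v})=W\sum_{\mathbf{u}}\prod_{S\in\mathbf{S}}P(s\mid\Pa{S}{\G'})$ for a constant $W>0$, the sum running over the joint values $\mathbf{u}$ of $\U'\setminus\{U_0\}$, and the summands with $\mathds{I}(S_0)=1$ contribute identically for all $j\in\mathbf{\Gamma}$. Hence
\[
    \eta_{j_r}(\mathbf{v})-\eta_{j_t}(\mathbf{v})=W\bigl(1-(\kappa+1)\epsilon\bigr)\bigl(\Phi_r(\epsilon)-\Phi_t(\epsilon)\bigr),
\]
where $\Phi_r(\epsilon):=\sum_{\mathbf{u}}\mathds{1}\{\mathds{I}(S_0)=0,\ s_0\equiv 2(r-1)+c\}\prod_{S\in\mathbf{S}\setminus\{S_0\}}P(s\mid\Pa{S}{\G'})$ is a polynomial in $\epsilon$, nonnegative on $(0,\frac1\kappa)$. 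It thus suffices to make $\Phi_r-\Phi_t$ a nonzero polynomial for some choice of $\mathbf{v}_0$ and $r\neq t$.

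I choose $\mathbf{v}_0$ as follows. Fix $\mathbf{v}_0|_{\T}$ so that every $S\in\mathbf{S}$ has $\mathds{I}(S)=0$ under the \emph{reference} assignment of unobserved variables whose ``crossing'' coordinates—those of unobserved $X\neq U_0$ having a child in $\mathbf{S}$—all equal $0$: for each observed parent $T$ of a node of $\mathbf{S}$ and each $i\in[0:k]$ with $T\in\F_i$, set $T[i]=1$ if $T=T_i$ and $T[i]=0$ otherwise; these are distinct coordinates, so the assignment is consistent, and it deactivates the clauses involving $T_i[i]=0$ and those involving $X[i]=1$. Because the bidirected edges of $\G'_{\mathbf{S}}$ form a spanning tree—one unobserved variable per edge, a parent of exactly its two endpoints—the system $\{\,s\equiv M(S)\pmod{\kappa+1}:S\in\mathbf{S}\setminus\{S_0\}\,\}$, read as $|\mathbf{S}|-1$ equations in the $|\mathbf{S}|-1$ unknowns $\U^{\mathbf{S}}$, can be solved recursively from the leaves of the tree toward $S_0$ and has a unique solution for any prescribed targets. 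Pick an arbitrary assignment $\mathbf{u}^{*}$ of $\U^{\mathbf{S}}$, let $\mathbf{v}_0|_{\mathbf{S}\setminus\{S_0\}}$ be the targets $M(S)$ it produces, put $c^{*}:=c(\mathbf{u}^{*})$, and choose $s_0=\mathbf{v}_0|_{S_0}$ with the same parity as $c^{*}$, so that $s_0-c^{*}\equiv 2(r_0-1)\pmod{\kappa+1}$ for some $r_0\in[1:\frac{\kappa+1}{2}]$ (since $\kappa$ is odd, $\{0,2,\dots,\kappa-1\}$ is exactly the set of even residues mod $\kappa+1$). Then the term of $\Phi_{r_0}$ indexed by $\mathbf{u}^{*}$ together with the reference assignment of $\U^{\T}$ contributes $1$ to the constant term, so $\Phi_{r_0}(0)\ge1>0$. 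In the base case $|\mathbf{S}|=1$ this already finishes the proof: here $c^{*}\equiv0$, $\Phi_1\equiv\#\{\mathbf{u}^{\T}:\mathds{I}(S_0)=0\}\ge1$ and $\Phi_2\equiv0$, so $\eta_{j_1}(\mathbf{v}_0)-\eta_{j_2}(\mathbf{v}_0)\neq0$ for every $\epsilon\in(0,\frac1\kappa)\setminus\{\frac1{\kappa+1}\}$.

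To finish in general I compare $\Phi_{r_0}$ with $\Phi_t$ for a suitable $t\neq r_0$ and argue $\Phi_{r_0}-\Phi_t$ is a nonzero polynomial in $\epsilon$; then $\eta_{j_{r_0}}(\mathbf{v}_0)-\eta_{j_t}(\mathbf{v}_0)$ is a nonzero polynomial after cancelling the benign factor $1-(\kappa+1)\epsilon$, and any $\epsilon\in(0,\frac1\kappa)$ outside its finite zero set works. I expect this last step to be the main obstacle: some node of $\mathbf{S}$ may have a crossing unobserved parent other than $U_0$, so for assignments of $\U^{\T}$ with nonzero crossing coordinates the indicators $\mathds{I}(S)$ can equal $1$, and then the uniqueness of the $\U^{\mathbf{S}}$-solution no longer forces the constant terms of the corresponding summands to vanish; one must therefore refine the choice of $\mathbf{v}_0$—exploiting the freedom in the (unconstrained) targets at the nodes with $\mathds{I}(S)=1$ and in $s_0$—so that $\Phi_t(0)<\Phi_{r_0}(0)$ for some $t$, or else compare the two polynomials at the lowest order in $\epsilon$ at which they disagree, which they must since $P(s\mid\Pa{S_0}{\G'})$ genuinely varies with $u_0[0]$ whenever $\mathds{I}(S_0)=0$.
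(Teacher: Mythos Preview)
Your proposal stalls at the decisive step. The factorization through $S_0$ is fine, and reducing to ``$\Phi_{r_0}-\Phi_t$ is a nonzero polynomial in $\epsilon$'' is a reasonable reformulation, but you then stop---you explicitly flag the remaining step as ``the main obstacle'' and offer only vague suggestions. The final clause (``which they must since $P(s\mid\Pa{S_0}{\G'})$ genuinely varies with $u_0[0]$ whenever $\mathds{I}(S_0)=0$'') is not an argument: that one factor varies with $u_0[0]$ does not by itself prevent the full sum over $\U^{\mathbf{S}}$ from absorbing the variation and producing identical $\Phi_r,\Phi_t$.

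The paper closes exactly this gap with a combinatorial device you are missing. Fix $\gamma_{j_r}=(0,0,\dots,0)$ and $\gamma_{j_t}=(2,0,\dots,0)$, and set \emph{every} coordinate of $\mathbf{v}_0|_{\mathbf{S}}$ to $0$ (rather than the arbitrary target vector your $\mathbf{u}^*$ produces). Now two-color the spanning tree $\G'_{\mathbf{S}}$ so that adjacent vertices receive different colors, with color classes $\mathbf{S}_1\ni S_0$ and $\mathbf{S}_2$. Each $U\in\U^{\mathbf{S}}$ has exactly one child in each class, so summing the equations $0\equiv M(S)\pmod{\kappa+1}$ over each class gives
\[
0\equiv u_0[0]+\sum_{U\in\U^{\mathbf{S}}}u,\qquad 0\equiv\sum_{U\in\U^{\mathbf{S}}}u,
\]
and subtraction forces $u_0[0]\equiv0$. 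Thus for $u_0[0]=0$ the assignment $\U^{\mathbf{S}}=0$ satisfies all the equations and yields a summand $(1-\kappa\epsilon)^{|\mathbf{S}|}$ in $\eta_{j_r}$, while for $u_0[0]=2$ \emph{no} assignment of $\U^{\mathbf{S}}$ can make all the $M(S)$-equations hold, so every summand of $\eta_{j_t}$ picks up at least one $\epsilon$-factor; at $\epsilon=0$ this yields $\eta_{j_t}(\mathbf{v}_0)=0<\eta_{j_r}(\mathbf{v}_0)$ and the polynomial argument finishes. Your choice of an arbitrary $\mathbf{u}^*$ (and the induced $\mathbf{v}_0|_{\mathbf{S}}$) discards precisely this global parity constraint, which is why you could not control $\Phi_t(0)$.
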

    Lemma \ref{lem: eta index} implies that there exist $\M_1$ and $\epsilon$ for which there exists $\mathbf{v}_0 \in \dom{\V'}$ such that $\eta(\mathbf{v}_0)$ is linearly independent from the set of vectors in $\mathbf{\Omega}$.
    As we discussed before, this completes our proof for Theorem \ref{thm: main}.

\section{Conclusion}
    We revisited the problem of general identifiability and showed that the positivity assumption of observational distributions is crucial for the soundness of do-calculus rules.
    This assumption was ignored in previous work. 
    We presented a novel algorithm for g-identifiability, which is provably sound and complete considering the positivity assumption.

\bibliography{bibliography}

\begin{thebibliography}{13}
\providecommand{\natexlab}[1]{#1}
\providecommand{\url}[1]{\texttt{#1}}
\expandafter\ifx\csname urlstyle\endcsname\relax
  \providecommand{\doi}[1]{doi: #1}\else
  \providecommand{\doi}{doi: \begingroup \urlstyle{rm}\Url}\fi

\bibitem[Bareinboim and Pearl(2012)]{bareinboim2012causal}
Elias Bareinboim and Judea Pearl.
\newblock Causal inference by surrogate experiments: Z-identifiability.
\newblock In \emph{Proceedings of the Twenty-Eighth Conference on Uncertainty
  in Artificial Intelligence}, page 113–120, Arlington, Virginia, USA, 2012.
  AUAI Press.

\bibitem[Bareinboim and Pearl(2014)]{bareinboim2014transportability}
Elias Bareinboim and Judea Pearl.
\newblock Transportability from multiple environments with limited experiments:
  Completeness results.
\newblock \emph{Advances in neural information processing systems}, 27, 2014.

\bibitem[Bareinboim and Tian(2015)]{bareinboim2015recovering}
Elias Bareinboim and Jin Tian.
\newblock Recovering causal effects from selection bias.
\newblock In \emph{Proceedings of the AAAI Conference on Artificial
  Intelligence}, volume~29, 2015.

\bibitem[Huang and Valtorta(2006)]{huang2006identifiability}
Yimin Huang and Marco Valtorta.
\newblock Identifiability in causal bayesian networks: A sound and complete
  algorithm.
\newblock In \emph{AAAI}, pages 1149--1154, 2006.

\bibitem[Huang and Valtorta(2008)]{huang2008completeness}
Yimin Huang and Marco Valtorta.
\newblock On the completeness of an identifiability algorithm for
  semi-markovian models.
\newblock \emph{Annals of Mathematics and Artificial Intelligence}, 54\penalty0
  (4):\penalty0 363--408, 2008.

\bibitem[Lee et~al.(2019)Lee, Correa, and Bareinboim]{lee2019general}
Sanghack Lee, Juan~D Correa, and Elias Bareinboim.
\newblock General identifiability with arbitrary surrogate experiments.
\newblock In \emph{Uncertainty in Artificial Intelligence}, pages 389--398.
  PMLR, 2019.

\bibitem[Mokhtarian et~al.(2022)Mokhtarian, Jamshidi, Etesami, and
  Kiyavash]{mokhtarian2022causal}
Ehsan Mokhtarian, Fateme Jamshidi, Jalal Etesami, and Negar Kiyavash.
\newblock Causal effect identification with context-specific independence
  relations of control variables.
\newblock In \emph{International Conference on Artificial Intelligence and
  Statistics}, pages 11237--11246. PMLR, 2022.

\bibitem[Pearl(1995)]{pearl1995causal}
Judea Pearl.
\newblock Causal diagrams for empirical research.
\newblock \emph{Biometrika}, 82\penalty0 (4):\penalty0 669--688, 1995.

\bibitem[Pearl(2009)]{pearl2009causality}
Judea Pearl.
\newblock \emph{Causality}.
\newblock Cambridge university press, 2009.

\bibitem[Shpitser and Pearl(2006{\natexlab{a}})]{shpitser2006identification}
Ilya Shpitser and Judea Pearl.
\newblock Identification of joint interventional distributions in recursive
  semi-markovian causal models.
\newblock In \emph{Proceedings of the National Conference on Artificial
  Intelligence}, volume~21, page 1219, 2006{\natexlab{a}}.

\bibitem[Shpitser and Pearl(2006{\natexlab{b}})]{shpitser2012identification}
Ilya Shpitser and Judea Pearl.
\newblock Identification of conditional interventional distributions.
\newblock \emph{Proceedings of the 22nd Conference on Uncertainty in Artificial
  Intelligence}, 2006{\natexlab{b}}.

\bibitem[Tian and Pearl(2003)]{tian2003ID}
Jin Tian and Judea Pearl.
\newblock On the identification of causal effects.
\newblock Technical report, Department of Computer Science, University of
  California, 2003.

\bibitem[Tikka et~al.(2021)Tikka, Hyttinen, and Karvanen]{tikka2019causal}
Santtu Tikka, Antti Hyttinen, and Juha Karvanen.
\newblock Causal effect identification from multiple incomplete data sources: A
  general search-based approach.
\newblock \emph{Journal of Statistical Software}, 2021.

\end{thebibliography}

\onecolumn
\appendix
\begin{center}
    \bfseries\Large Appendix
\end{center}

\begin{figure}[b]
    \centering
    \begin{subfigure}[b]{0.3\linewidth}
            \centering
            \begin{tikzpicture}[
            roundnode/.style={circle, draw=black!60,, fill=white, thick, inner sep=1pt},
            dashednode/.style = {circle, draw=black!60, dashed, fill=white, thick, inner sep=1pt},
            ]
            \node[roundnode]        (t1)        at (0, 0)                   {$T_1$};
            \node[roundnode]        (t2)        at (0, 1.5)                 {$T_2$};
            \node[roundnode]        (t3)        at (0, 3)                   {$T_3$};
            \node[roundnode]        (r)         at (0, -1.5)                {$R$};
            \node[dashednode]       (u1)        at (0.75, 0)                {$U_1$};
            \node[dashednode]       (u2)        at (-0.75, 1.75)            {$U_2$};
            \node[dashednode]       (u3)        at (-1.25, 1)               {$U_3$};
            
            \draw[-latex] (t1.south) -- (r.north) ;
            \draw[-latex] (t2.south) -- (t1.north);
            \draw[-latex] (t3.south) -- (t2.north);
            \draw[latex-, dashed] (t1.west) -- (u2.south);
            \draw[-latex, dashed] (u2.north) -- (t3.west);
            \draw[latex-, dashed] (r.east) .. controls +(right:5mm) and +(up:2mm) .. (u1.south);
            \draw[latex-, dashed] (t2.east) .. controls +(right:5mm) and +(down:2mm) .. (u1.north);
            \draw[latex-, dashed] (r.west) .. controls +(left:10mm) and +(up:2mm) .. (u3.south);
            \draw[latex-, dashed] (t3.west) .. controls +(left:10mm) and +(down:2mm) .. (u3.north);
                
            \end{tikzpicture}
            \caption{Thicket $\mathcal{J}$}
            \label{subfig: thicket exmple 2 }
    \end{subfigure}
    \begin{subfigure}[b]{0.3\linewidth}
            \centering
            \begin{tikzpicture}[
            roundnode/.style={circle, draw=black!60,, fill=white, thick, inner sep=1pt},
            dashednode/.style = {circle, draw=black!60, dashed, fill=white, thick, inner sep=1pt},
            ]
            \node[roundnode]        (t1)        at (0, 0)                   {$T_1$};
            \node[roundnode]        (t2)        at (0, 1.5)                 {$T_2$};
            \node[roundnode]        (t3)        at (0, 3)                   {$T_3$};
            \node[roundnode]        (r)         at (0, -1.5)                {$R$};
            \node[dashednode]       (u2)        at (-0.75, 1.75)            {$U_2$};
            \node[dashednode]       (u3)        at (-1.25, 1)               {$U_3$};
            
            \draw[-latex] (t1.south) -- (r.north) ;
            \draw[-latex] (t2.south) -- (t1.north);
            \draw[-latex] (t3.south) -- (t2.north);
            \draw[latex-, dashed] (t1.west) -- (u2.south);
            \draw[-latex, dashed] (u2.north) -- (t3.west);
            \draw[latex-, dashed] (r.west) .. controls +(left:10mm) and +(up:2mm) .. (u3.south);
            \draw[latex-, dashed] (t3.west) .. controls +(left:10mm) and +(down:2mm) .. (u3.north);
                
            \end{tikzpicture}
            \caption{Hedgelet $\mathcal{H}_1$}
            \label{subfig: hedgelet 1 exmpl 2}
    \end{subfigure}
    \begin{subfigure}[b]{0.3\linewidth}
            \centering
            \begin{tikzpicture}[
            roundnode/.style={circle, draw=black!60,, fill=white, thick, inner sep=1pt},
            dashednode/.style = {circle, draw=black!60, dashed, fill=white, thick, inner sep=1pt},
            ]
            \node[roundnode]        (t1)        at (0, 0)                   {$T_1$};
            \node[roundnode]        (t2)        at (0, 1.5)                 {$T_2$};
            \node[roundnode]        (r)         at (0, -1.5)                {$R$};
            \node[dashednode]       (u1)        at (0.75, 0)                {$U_1$};
            
            \draw[-latex] (t1.south) -- (r.north) ;
            \draw[-latex] (t2.south) -- (t1.north);
            \draw[latex-, dashed] (r.east) .. controls +(right:5mm) and +(up:2mm) .. (u1.south);
            \draw[latex-, dashed] (t2.east) .. controls +(right:5mm) and +(down:2mm) .. (u1.north);

            \end{tikzpicture}
            \caption{Hedgelet $\mathcal{H}_2$}
            \label{subfig: hedgelet 2 exmpl 2}
    \end{subfigure}
    \caption{(a) Thicket is formed for the causal effect of $\{T_1, T_2, T_3\}$ on $\{R\}$} in  Example 2; (b) and (c) are the hedgelets formed by the thicket $\mathcal{J}$
    \label{fig:exmpl 2}
\end{figure}

\section{On the positivity assumption} \label{sec: apd_pos}
    We first present some definitions and notations from \citep{lee2019general} including their  illustrations using the causal graph $\G$ from Example 2 (Figure \ref{fig:countr_exmpl_complex}).
    
    \subsection{Notation} \label{apn: A notation}
        \begin{definition}[\citep{lee2019general}]
            Assume that $\mathbf{R}$ is a subset of observed variables $\V$. A hedge is a pair of $\mathbf{R}$-rooted c-forests $\langle \F, \F' \rangle$ such that $\F'$ is a subgraph of $\F$.
        \end{definition}
        \textbf{In Figure \ref{fig:countr_exmpl_complex}:} Subgraphs $\F = \G[\{R, T_1, T_2, T_3\}]$ and $\F' = \G[\{R\}]$ form a hedge $\langle \F, \F' \rangle$.
        
        Denote by $\mathcal{C}(\G)=\{\textbf{W}_i\}_{i=1}^{k}$, the set of c-components that partition observed variables in $\G$ such that each $\W_i$ is a maximal c-component. Maximal in the sense of number of nodes that is there is no $\W \in \V$ such that $\W_i \subsetneq \W$ and $\W$ is a c-component in $\G$. 
        Assume that $\T$ is the set of all observed variables in $\F$ but not in $\F'$. 
        We define $\F'':=\F[\textbf{T}]$. 
        
        \textbf{In Figure \ref{fig:countr_exmpl_complex}:} $\mathcal{C}(\G[\{T_1, T_2, T_3\}]) = \{ \{T_1, T_3\}, \{T_2\}\}$. Additionally, $\F'' = \G[\{T_1, T_2, T_3\}]$ for the hedge constructed before.

        \begin{definition}[\cite{lee2019general}]
            Given a hedge $\langle \F, \F' \rangle$. Denote by $\V'$ a set of all observed variables of $\F'$. The hedgelet decomposition of a hedge $\langle \F, \F' \rangle$ is a collection of hedgelets $\{\F(\W)\}_{\W \in \mathcal{C}(\F'')}$ where each hedgelet $\F(\W)$ is a subgraph of $\F$ made of (i) $\F[\W\cup \V']$ and (ii) $\F[De_\mathcal{F}(\W)]$ without bidirected edges, that is all observed descendants of $\W$ and all directed edges between them. 
            Let $\mathbb{H}_{\F} := \{\F(\W)\}_{\W \in \mathcal{C}(\F'')}$ be the set of hedgelets of $\langle \F, \F' \rangle$.
        \end{definition}

        \textbf{In Figure \ref{fig:countr_exmpl_complex}:} For the hedge $\langle \F, \F' \rangle$, where $\F = \G[\{R, T_1, T_2, T_3\}]$ and $\F' = \G[\{R\}]$, there are two hedgelets $\mathcal{H}_1, \mathcal{H}_2$ displayed in Figures (\ref{subfig: hedgelet 1 exmpl 2})-(\ref{subfig: hedgelet 2 exmpl 2}). Moreover, we have $\mathbb{H}_{\F} = \{\mathcal{H}_1, \mathcal{H}_2\}$.
        
        \begin{definition}[\citep{lee2019general}]
            Let $\mathbf{R}$ be a non-empty set of variables and $\mathbb{Z}$ be a collection of sets of variables in $\G$. A thicket $\mathcal{J}$ is a subgraph of $\G$ which is an $\mathbf{R}$-rooted c-component consisting of a minimal c-component over $\mathbf{R}$ and hedges
            \begin{equation*}
                \mathbb{F}_{\mathcal{J}} := \{\langle \F_{\Z}, \mathcal{J}[\R]\rangle \mid \F_{\Z} \subseteq \G[\V\setminus\Z], \Z\cap \R=\varnothing\}_{\Z\in \mathbb{Z}}.
            \end{equation*}
        \end{definition}
        Let $\X$ and $\Y$ be disjoint sets of observed variables in $\G$. A thicket $\mathcal{J}$ is said to be formed for $P_{\mathbf{x}}(\mathbf{y})$ in $\G$ with respect to $\mathbb{Z}$ if $\R \subseteq \Anc{\Y}{\G[\V \setminus \X]}$ and every hedgelet of each hedge $\langle \F_{\Z}, \mathcal{J}[\R]\rangle$ intersects with $\X$.
        
        \textbf{In Figure \ref{fig:countr_exmpl_complex}:} This graph is a thicket, also displayed in Figure \ref{subfig: thicket exmple 2 }.  Let $\mathbb{F_{\mathcal{J}}}$ be
        \begin{equation*}
            \mathbb{F_{\mathcal{J}}} = \{\langle \F, \F'\rangle \},
        \end{equation*}
        where $\F = \G[\{R, T_1, T_2, T_3\}]$ and $\F' = \G[\{R\}]$.
        One can observe that thicket $\mathcal{J}$ is formed for the causal effect $\X = \{T_1, T_2, T_3\}$ on $\Y = \{R\}$.
        
        Denote by $\T$ all observed variables in thicket $\mathcal{J}$ outside of subgraph $\mathcal{J}[\R]$. Let $\mathbb{H} = \bigcup_{\{\langle \F, \F'\rangle \}\in \mathbb{F}_{\mathcal{J}}}\mathbb{H}_{\F}$, that is, a collection of all hedgelets induced by the hedges of $\mathcal{J}$.
        
        \textbf{In Figure \ref{fig:countr_exmpl_complex}:} $\T = \{T_1, T_2, T_3\}$ and $\mathbb{H} = \{\mathcal{H}_1, \mathcal{H}_2\}$.

    \subsection{On the positivity assumption} \label{apn: pos assumption simple}
        Given the above definitions, we can state Lemma 3 in \citep{lee2019general}.
        \begin{lemma*}
            Let $\T'\subsetneq \T$ such that there exists a hedgelet $\mathcal{H} \in \mathbb{H}\setminus\mathbb{H}(\T')$, where $\mathbb{H}(\T')$ is a set of hedgelets from $\mathbb{H}$ which contain at least one variable from $\textbf{T}'$. Then, under the intervention $do(\mathbf{t}')$, there exists $R \in \R$, for any instantiation of $\U$, such that $r=0$ in both models.
        \end{lemma*}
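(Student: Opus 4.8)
The plan is to prove the lemma directly from the two SEMs $\M_1,\M_2$ built in the completeness argument of \citep{lee2019general}. Recall that in that construction every variable is binary, every unobserved variable is uniform, every non-root observed variable equals the parity (XOR) of its parents, and the two models share all of these equations; they differ only at the roots, where $\M_2$ sets each $R\in\R$ identically to $0$, while in $\M_1$ each root $R$ is a conjunction of indicator tests, arranged so that every hedgelet $\mathcal{H}'\in\mathbb{H}$ contributes a sub-conjunction $\mathds{1}_{a_{\mathcal{H}'}=0}\wedge\mathds{1}_{b_{\mathcal{H}'}=1}$, with $a_{\mathcal{H}'}$ an observed variable of $\mathcal{H}'$ feeding $R$ and $b_{\mathcal{H}'}$ an unobserved variable of $\mathcal{H}'$. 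Since $\M_2$ trivially satisfies the conclusion, it suffices to exhibit one $R\in\R$ whose $\M_1$-equation collapses to $0$ under $do(\mathbf{t}')$ for every instantiation of $\U$.

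First I would take the hedgelet $\mathcal{H}\in\mathbb{H}\setminus\mathbb{H}(\T')$ supplied by the hypothesis and fix any $R\in\R$; every hedgelet contains all of $\R$, since it retains $\F[\W\cup\V']$ and $\R\subseteq\V'$. Because $\T$ is disjoint from $\R$ and $\T'\subseteq\T$, the intervention $do(\mathbf{t}')$ leaves $R$'s equation intact, so in $\M_1$ the variable $R$ is still the conjunction above. Because $\mathcal{H}\notin\mathbb{H}(\T')$, no vertex of $\mathcal{H}$ lies in $\T'$, hence $do(\mathbf{t}')$ clamps none of the variables on which $a_{\mathcal{H}}$ depends inside $\mathcal{H}$; in particular it does not clamp the $\X$-vertex that $\mathcal{H}$ is guaranteed to contain by the thicket property, which is precisely the vertex one would need to clamp in order to break $\mathcal{H}$'s sub-conjunction. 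I would then unwind the XOR equations along the directed chain of $\mathcal{H}$ that carries $a_{\mathcal{H}}$ toward $R$, replacing each observed vertex on the chain by the parity of its own parents; by the way \citep{lee2019general} routes the exogenous bits across a thicket, every auxiliary bit occurs an even number of times and cancels modulo $2$, so $a_{\mathcal{H}}$ evaluates, for every instantiation of $\U$, to exactly $b_{\mathcal{H}}$ (the same cancellation that collapses the $\M_1$-definition of $R$ in the example of Figure~\ref{fig:countr_exmpl_complex}). Hence $\mathds{1}_{a_{\mathcal{H}}=0}\wedge\mathds{1}_{b_{\mathcal{H}}=1}=\mathds{1}_{b_{\mathcal{H}}=0}\wedge\mathds{1}_{b_{\mathcal{H}}=1}=0$ identically in $\U$, so the full conjunction defining $R$ in $\M_1$ is $0$; with $R\equiv 0$ in $\M_2$, this proves the lemma.

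The step I expect to be the main obstacle is this telescoping. One must track precisely how \citep{lee2019general} assigns domains and routes the unobserved variables across the hedgelets of a thicket, and verify that when no vertex of $\mathcal{H}$ is clamped the unobserved variable $b_{\mathcal{H}}$ and the propagated observed variable $a_{\mathcal{H}}$ are governed by the same exogenous bit — equivalently, that $\mathcal{H}$'s sub-conjunction becomes satisfiable only after some vertex of $\mathcal{H}$ is clamped, i.e., only when $\mathcal{H}\in\mathbb{H}(\T')$. Once this structural property of the construction is established, the remainder is the short parity computation sketched above, and the $\M_2$ half of the statement is immediate.
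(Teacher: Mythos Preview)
The paper does not prove this lemma. It is quoted verbatim as ``Lemma 3 in \citep{lee2019general}'' and invoked only to argue that the models of \citep{lee2019general} violate positivity (by taking $\T'=\varnothing$ and reading off that some $R\in\R$ satisfies $P(R=1)=0$). There is consequently no proof in this paper to compare your proposal against; both the lemma and its proof live entirely in \citep{lee2019general}.

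On the substance of your sketch: for the one thicket the paper actually spells out (Figure~\ref{fig:countr_exmpl_complex}), your telescoping is exactly what occurs --- e.g.\ in hedgelet $\mathcal{H}_1$ one has $T_{1,1}=T_{2,1}\oplus U_2=T_3\oplus U_2=(U_2\oplus U_3)\oplus U_2=U_3$, so the conjunct $\mathds{1}_{T_{1,1}=0}\wedge\mathds{1}_{U_3=1}$ is identically $0$ when no vertex of $\mathcal{H}_1$ is clamped. Two caveats worth flagging. First, you write ``fix any $R\in\R$'', but the lemma only asserts existence of one such $R$; whether every root's $\M_1$-equation draws a conjunct from every hedgelet is a property of the general \citep{lee2019general} construction that you would need to verify there, not here. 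Second, ``every variable is binary'' is not quite right even in the example: an observed variable appearing in several hedgelets carries one bit per hedgelet (here $T_1,T_2\in\{0,1\}^2$), and your parity computation is really done coordinate-wise on the bit indexed by $\mathcal{H}$. Neither point breaks your outline, but both are precisely the structural facts you yourself identify as the main obstacle, and they can only be discharged against \citep{lee2019general}, not against the present paper.
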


        Note that by the construction in \citep{lee2019general}, $R$ in the above Lemma is a binary random variable. 
        In the above Lemma, let $\T' = \varnothing$. 
        Based on this Lemma, for any instantiation of unobserved variables $\U$, $P(\V=\mathbf{v})=0$, where $\textbf{v}$ is a realization for observed variables in which $r=1$. 
        This clearly shows that the constructed models in \citep{lee2019general} violate the positivity assumption.

    \subsection{On the relaxed positivity assumption }

        Herein, we study Figure \ref{fig:countr_exmpl_complex} in more details and show that the models in \citep{lee2019general} violate the relaxed positivity assumption. 
        To this end, we present the models $\M_1$ and $\M_2$ constructed in \citep{lee2019general} for the thicket $\mathcal{J}$ which is defined for this case in Appendix \ref{apn: A notation}. 
        By the construction, each variable from $\{U_1, U_2, U_3, T_3\}$ is a binary number, i.e., $\{0, 1\}$ and each variable from $\{T_1, T_2\}$ is a vector of length two, because each variable from $\{U_1, U_2, U_3, T_3\}$ appears in only one hedgelet and each variables in $\{T_1, T_2\}$ appears in exactly two different hedgelets. 
        Thus, $T_1=(T_{1, 1}, T_{1, 2})$ and $T_2 = (T_{2, 1}, T_{2, 2})$, where $T_{1, 1}, T_{1, 2}, T_{2, 1}, T_{2, 2}$ are binary numbers. The first coordinate captures some properties of the hedgelet $\mathcal{H}_1$ while the second coordinate captures some properties of the hedgelet $\mathcal{H}_2$. 
        \cite{lee2019general} define both models $\M_1, \M_2$ for the hedgelet $\mathcal{H}_1$ as
        \begin{align*}
            & T_3 = U_2 \oplus U_3, \quad T_{2, 1} = T_3, \quad T_{1, 1} = T_{2, 1} \oplus U_2,
        \end{align*}
        and for the hedgelet $\mathcal{H}_2$ as
        \begin{align*}
            & T_{2, 2} = U_1, \quad T_{1, 2} = T_{2, 2}, \quad T_{2, 2} = U_1.
        \end{align*}
        Additionally, in model $\M_1$, variable $R$ is defined by
        \begin{equation*}
            R = \mathds{1}_{T_{1, 1} = 0} \wedge \mathds{1}_{T_{1, 2}=0} \wedge \mathds{1}_{U_3=1} \wedge \mathds{1}_{U_{1} = 1},
        \end{equation*}
        and in model $\M_2$, it is defined to be zero, i.e., $R = 0$.

\section{Technical proofs} \label{sec: apd_proof}

    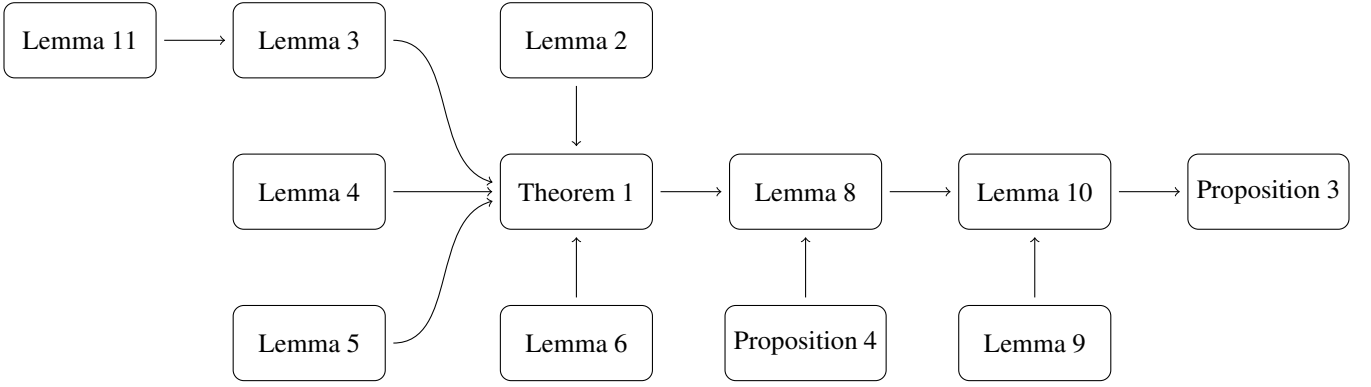
\begin{figure}[t]
        \centering
        \begin{tikzpicture}[block/.style={rounded corners, minimum width=2cm, minimum height=1cm, draw}]
            \node[block] (1) {Theorem \ref{thm: main}};
            \node[block, right=1 of 1] (3) {Lemma \ref{lemma: prp3 - 1}};
            \node[block, below=1 of 3] (2) {Proposition \ref{prp: 4}};
            \node[block, right=1 of 3] (4) {Lemma \ref{lemma: prp3 - 3}};
            \node[block, below=1 of 4] (5) {Lemma \ref{lemma: prp3 - 2}};
            \node[block, right=1 of 4] (6) {Proposition \ref{prp: 3}};
            \node[block, above=1 of 1] (7) {Lemma \ref{lem: simplify}};
            \node[block, left=1.5 of 1] (9) {Lemma \ref{lemma: valid model}};
            \node[block, above=1 of 9] (8) {Lemma \ref{lem: lin indep}};
            \node[block, below=1 of 9] (10) {Lemma \ref{lem: theta index}};
            \node[block, below=1 of 1] (11) {Lemma \ref{lem: eta index}};
            \node[block, left=1 of 8] (12) {Lemma \ref{lemma: lin indep formal}};
        \begin{scope}[->, shorten >=1mm, shorten <=1mm]
            \draw (1) -- (3);
            \draw (2) -- (3);
            \draw (3) -- (4);
            \draw (5) -- (4);
            \draw (4) -- (6);
            \draw (7) -- (1);
            \draw (8) to[out=0,in=165]([yshift=1mm]1.west);
            \draw (9) -- (1);
            \draw (10) to[out=0,in=195]([yshift=-1mm]1.west);
            \draw (11) -- (1);
            \draw (12) -- (8);
        \end{scope}
        \end{tikzpicture}
        \caption{Logical order of our proofs.}
        \label{fig: logic order}
    \end{figure}

    In this section, we first present some technical lemmas which we use throughout our proofs.
    The proofs of the lemmas and propositions within the main text are provided in Subsections \ref{sec: proof sec algorithm} and  \ref{sec: proof sec main}. 
    
    The logical order of our proofs is depicted in Figure \ref{fig: logic order}. For instance, we use Theorem \ref{thm: main} to prove Lemma \ref{lemma: prp3 - 1}. 
    Also note that the proof of Theorem \ref{thm: main} is provided in the main text using Lemmas \ref{lem: simplify}, \ref{lem: lin indep}, \ref{lemma: valid model}, \ref{lem: theta index}, and \ref{lem: eta index}. 
    
    \begin{definition}[Ancestral]
        We say a subset $\X$ of observed variables $\V$ is ancestral in $\G$, if $\X = \Anc{\X}{\G_{\V}}$.
    \end{definition}

    \subsection{Technical Lemmas}
        \begin{lemma}[\citep{tian2003ID}]\label{lemma: Q-marginal}
            Let $\W \subseteq \C \subseteq \V$, $\T = \C \setminus \W$, $\mathbf{S}=\V \setminus \T$. 
            If $\W$ is an ancestral set in $\G[\C]$, then:
            \begin{equation*}
                Q[\W] = \sum_{\C \setminus \W}Q[\C].
            \end{equation*}
        \end{lemma}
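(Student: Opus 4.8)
The plan is to prove Lemma \ref{lemma: Q-marginal}, which states that if $\W$ is ancestral in $\G[\C]$ for $\W \subseteq \C \subseteq \V$, then $Q[\W] = \sum_{\C \setminus \W} Q[\C]$. The natural approach is to work directly with the factorization formula in Equation \eqref{eq: factorization 2}, which expresses $Q[\mathbf{T}](\mathbf{v}) = \sum_{\U} \prod_{X \in \mathbf{T}} P(x \mid \Pa{X}{\G}) \prod_{U \in \U} P(u)$ for any set $\mathbf{T}$ of observed variables. First I would write out $Q[\C] = \sum_{\U} \prod_{X \in \C} P(x \mid \Pa{X}{\G}) \prod_{U \in \U} P(u)$ and then compute $\sum_{\C \setminus \W} Q[\C]$ by pushing the summation over the variables in $\C \setminus \W$ inside. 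The key observation is that because $\W$ is ancestral in $\G[\C]$, no variable in $\C \setminus \W$ is an ancestor (within $\G[\C]$) of any variable in $\W$; equivalently, for every $X \in \W$, the parents of $X$ that lie in $\C$ all lie in $\W$. This means the factors $P(x \mid \Pa{X}{\G})$ for $X \in \W$ do not depend on the realizations of the variables in $\C \setminus \W$ in a way that obstructs the marginalization.

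The main technical step is to argue that $\sum_{\C \setminus \W} \prod_{X \in \C} P(x \mid \Pa{X}{\G}) = \prod_{X \in \W} P(x \mid \Pa{X}{\G})$, carried out for each fixed realization of $\U$ and of $\V \setminus \C$. To do this cleanly I would order the variables in $\C \setminus \W$ according to a reverse topological order of $\G[\C]$ (children before parents restricted to $\C$), and sum them out one at a time. When summing out a variable $X$ that is a sink within $\C \setminus \W$ (no children in $\C \setminus \W$), I claim that $X$ appears only in its own factor $P(x \mid \Pa{X}{\G})$ among the remaining factors: its factor is being summed, and it cannot appear in the conditioning set of any factor $P(x' \mid \Pa{X'}{\G})$ with $X' \in \W$ (ancestrality) nor of any remaining $X' \in \C \setminus \W$ (by the chosen order). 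Hence $\sum_{x} P(x \mid \Pa{X}{\G}) = 1$ and the factor disappears. Iterating eliminates all of $\C \setminus \W$, leaving exactly $\prod_{X \in \W} P(x \mid \Pa{X}{\G})$, and then resumming over $\U$ gives $Q[\W](\mathbf{v})$ by Equation \eqref{eq: factorization 2} applied with $\mathbf{S} = \W$.

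I expect the main obstacle to be handling the interaction with unobserved variables and with variables outside $\C$ carefully — specifically, making sure that when I peel off a variable $X \in \C \setminus \W$ and sum over its domain, I am not also implicitly changing factors $P(x' \mid \Pa{X'}{\G})$ for observed variables $X' \notin \C$ that might have $X$ as a parent. Since $Q[\C]$ only contains the factors for $X \in \C$ (not for observed variables outside $\C$), this is actually automatic from the form of Equation \eqref{eq: factorization 2}, but it is worth stating explicitly. A second subtlety is that a variable $X \in \C \setminus \W$ may be a parent of some $U \in \U$ — but this cannot happen since $\G$ is semi-Markovian and unobserved variables have no parents. So the only thing one really needs is: (i) the ancestrality condition translates to "$\Pa{\W}{\G} \cap \C \subseteq \W$", and (ii) a reverse-topological peeling argument on $\C \setminus \W$. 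Once these two points are in place, the identity follows by a straightforward induction on $|\C \setminus \W|$.
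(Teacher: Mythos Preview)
Your argument is correct and is essentially the standard derivation of this identity: write $Q[\C]$ via the factorization in Equation~\eqref{eq: factorization 2}, use ancestrality of $\W$ in $\G[\C]$ to conclude that no $X\in\C\setminus\W$ is a parent of any $X'\in\W$ within $\C$, and then marginalize out the variables of $\C\setminus\W$ one at a time in reverse topological order so each summed variable appears only in its own conditional factor, which integrates to $1$. The two subtleties you flagged (factors for observed variables outside $\C$ are absent from $Q[\C]$, and unobserved variables have no parents in a semi-Markovian graph) are handled correctly.

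As for comparison with the paper: there is nothing to compare. The paper does not prove Lemma~\ref{lemma: Q-marginal}; it merely cites it from \citep{tian2003ID}. Your proof is exactly the argument one finds in Tian's work, so it is the ``paper's'' proof by reference.
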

        
        \begin{lemma} \label{lemma: prp3 - 1}
        Consider a causal graph $\G$ with observed variables $\V$.
            Suppose $\X \subseteq \V$ and $e:=(X_1,Z)$ is a directed edge such that $X_1\in\X$.
            $Q[\X]$ is g-identifiable from $(\mathbb{A},\G)$ if and only if $Q[\X]$ is g-identifiable from $(\mathbb{A}, \mathcal{H})$, where $\mathcal{H}$ is the graph obtained by deleting $e$ from $\G$.
        \end{lemma}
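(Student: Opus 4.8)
The plan is to prove both directions by relating $Q[\X]$ in $\G$ to $Q[\X]$ in $\mathcal{H}$ through a careful look at the factorization in Equation \eqref{eq: factorization 2}, and then observing that the same relation holds for every $Q[\A_i]$ with $\A_i \in \mathbb{A}$. The key structural fact I would first establish is that deleting the edge $e = (X_1, Z)$ with $X_1 \in \X$ changes $Q[\X]$ only in a controlled way. There are two cases: either $Z \in \X$ or $Z \notin \X$. If $Z \notin \X$, then $X_1$ is not a parent of any variable in $\X$ through $e$, and in fact the term $P(z \mid \Pa{Z}{\G})$ does not appear in the factorization of $Q[\X]$ at all (since $Z \notin \X$); removing $e$ does not affect any factor $P(s \mid \Pa{S}{\G})$ for $S \in \X$ either, because $X_1 \in \X$ means $X_1$ is still present and $e$ is an outgoing edge from $X_1$, not an incoming one. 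Hence $Q^{\G}[\X] = Q^{\mathcal{H}}[\X]$ identically, and the same argument applied to each $\A_i$ (noting $X_1 \in \X \subseteq \A_i$ whenever $\X \subseteq \A_i$, and if $X_1 \notin \A_i$ the edge is irrelevant to $Q[\A_i]$ anyway — though one must be a little careful here) gives $Q^{\G}[\A_i] = Q^{\mathcal{H}}[\A_i]$, so g-identifiability transfers trivially in both directions.

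The substantive case is $Z \in \X$. Here deleting $e$ removes $X_1$ from $\Pa{Z}{\G}$, so the factor $P(z \mid \Pa{Z}{\G})$ becomes $P(z \mid \Pa{Z}{\mathcal{H}}) = P(z \mid \Pa{Z}{\G} \setminus \{X_1\})$, and $Q^{\mathcal{H}}[\X]$ is genuinely different from $Q^{\G}[\X]$. The strategy I would use is the standard one for do-calculus Rule 3 type arguments: show that, under the positivity assumption, $Q[\X]$ in $\G$ is a deterministic function of $Q[\X]$ in $\mathcal{H}$ and vice versa, by exhibiting the conditional $P(z \mid \Pa{Z}{\G})$ and $P(z \mid \Pa{Z}{\mathcal{H}})$ as computable from the respective $Q[\cdot]$ objects, or alternatively by reducing directly to Theorem \ref{thm: main} together with Propositions \ref{prp: 3} and \ref{prp: 4}. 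Indeed, since Figure \ref{fig: logic order} indicates this lemma is proved \emph{using} Theorem \ref{thm: main}, the cleanest route is: apply Proposition \ref{prp: 4} to reduce to single c-components of $\X$, and for each such c-component $\mathbf{S}$, use Theorem \ref{thm: main} to reduce g-identifiability from $(\mathbb{A},\G)$ to classical identifiability of $Q[\mathbf{S}]$ from $\G[\A]$ for $\A \ni \mathbf{S}$; then invoke the known edge-deletion soundness of the \emph{classical} ID criterion (Algorithm \ref{algo: ID}), which is insensitive to the outgoing edge $e$ of $X_1 \in \mathbf{S}$ in precisely the way just described, to conclude that $Q[\mathbf{S}]$ is identifiable from $\G[\A]$ iff from $\mathcal{H}[\A]$. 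Matching up the c-components of $\X$ in $\G_\X$ and $\mathcal{H}_\X$ (which coincide, since deleting a \emph{directed} edge does not change the bidirected structure $\G_\X$) closes the argument.

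The main obstacle I anticipate is bookkeeping around whether deleting $e$ can alter the c-component decomposition or the set of ancestors that intervene elsewhere in the reduction — in particular, one must verify that $\G_{\A_i}$ and $\mathcal{H}_{\A_i}$ have the same c-components (true, since $e$ is directed) and that the inducing subgraphs $\G[\A_i]$ and $\mathcal{H}[\A_i]$ differ only by the directed edge $e$ when both endpoints lie in $\A_i$. A second subtlety is the case $Z \notin \X$ but $Z \in \A_i$ for some $i$: then $Q[\A_i]$ genuinely changes, and one needs the same "computable from each other under positivity" argument there too; I would handle this uniformly by the reduction to Theorem \ref{thm: main} rather than case-splitting on the location of $Z$. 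Everything else is routine manipulation of the factorizations \eqref{eq: factorization 2} and does not need to be spelled out in detail.
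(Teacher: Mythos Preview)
Your proposal is correct and, once you pivot away from the direct factorization argument to the ``cleanest route,'' it is essentially the paper's proof: decompose $Q[\X]$ into its c-components via Proposition~\ref{prp: 4} (noting these coincide in $\G$ and $\mathcal{H}$ since $e$ is directed), apply Theorem~\ref{thm: main} to reduce g-identifiability of each $Q[\X_i]$ to classical identifiability from $\G[\A]$ versus $\mathcal{H}[\A]$, and then invoke the Huang--Valtorta result that deleting an outgoing edge of a vertex in $\X_i$ does not affect classical identifiability of $Q[\X_i]$. Your anticipated bookkeeping obstacles are real but routine, and the paper simply cites \cite{huang2008completeness} for the classical edge-deletion step rather than spelling it out.
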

        \begin{proof}
            $\X$ has the same c-components in $\G$ and $\mathcal{H}$ since $\G_{\V}$ and $\mathcal{H}_{\V}$ have the same undirected edges. 
            Let $\X_1,\cdots,\X_l$ be the c-components of $\X$. 
            For any $i\in [1:l]$ and $\A \in \mathbb{A}$ such that $\X_i \subseteq \A$, \cite{huang2008completeness} showed that $Q[\X_i]$ is identifiable from $\G[\A]$ if and only if $Q[\X_i]$ is identifiable from $\mathcal{H}[\A]$. 
            Hence, Theorem \ref{thm: main} implies that $Q[\X_i]$ is g-identifiable from $(\mathbb{A},\G)$ if and only if $Q[\X_i]$ is g-identifiable from $(\mathbb{A},\mathcal{H})$.
            In this case, Proposition \ref{prp: 4} implies that $Q[\X]$ is g-identifiable from $(\mathbb{A},\G)$ if and only if $Q[\X]$ is g-identifiable from $(\mathbb{A},\mathcal{H})$.
        \end{proof}
    
        \begin{lemma}\label{lemma: prp3 - 2}
            Suppose that $\X$ and $\Y$ are disjoint subsets of $\V$. Let $(Y_1,Y_2)$ (i.e., $Y_1\rightarrow Y_2$) denotes a directed edge in $\G$, where $Y_1,Y_2 \in \Y$. Let $\G'$ denotes the resulting graph after removing edge $(Y_1,Y_2)$ from $\G$. If the causal effect of $\X$ on $\Y$ is not g-identifiable from $(\mathbb{A}, \G')$, then the causal effect of $\X$ on $\Y \setminus \{Y_1\}$ is not g-identifiable from $(\mathbb{A}, \G)$.
        \end{lemma}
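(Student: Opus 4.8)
The plan is to prove the reduction by lifting explicit counterexample models. Suppose the causal effect of $\X$ on $\Y$ is not g-identifiable from $(\mathbb{A}, \G')$. Then, over a common finite domain, there exist $\M_1', \M_2' \in \mathbb{M}^+(\G')$ with $Q^{\M_1'}[\A_i] = Q^{\M_2'}[\A_i]$ for every $i\in[0:m]$, yet $P^{\M_1'}_{\x_0}(\y_0) \neq P^{\M_2'}_{\x_0}(\y_0)$ for some $\x_0 \in \dom{\X}$ and $\y_0 \in \dom{\Y}$. I would lift these to SEMs $\M_1, \M_2 \in \mathbb{M}^+(\G)$ that witness the non-g-identifiability of the causal effect of $\X$ on $\Y \setminus \{Y_1\}$ from $(\mathbb{A}, \G)$. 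The key idea is to let $Y_2$ in $\G$ carry a \emph{noisy} copy of $Y_1$ in an extra coordinate, so that marginalizing out $Y_1$ — exactly what distinguishes target $\Y\setminus\{Y_1\}$ from target $\Y$ — preserves the discrepancy, while this extra coordinate influences each $Q[\A_i]$ only through a model-independent factor.

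Concretely, keep all domains and structural equations of $\M_b'$ unchanged except at $Y_2$: set $\dom{Y_2} := \dom{Y_2}^{\G'} \times \dom{Y_1}$, let the first coordinate of $Y_2$ be generated by $f_{Y_2}^{\M_b'}$ (ignoring the new parent $Y_1$), and let the second coordinate equal $Y_1$ with probability $1-\delta$ and be drawn uniformly on $\dom{Y_1}$ otherwise, independently of everything else; every variable that had $Y_2$ as a parent now reads the first coordinate of $Y_2$. Here $\delta \in (0,1)$ is a small parameter pinned down at the end. Since $\M_b' \in \mathbb{M}^+(\G')$ and the garbage branch assigns positive mass to every value of the second coordinate, $\M_b \in \mathbb{M}^+(\G)$ (and the edge $Y_1 \to Y_2$ is genuinely used).

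For the matching property $Q^{\M_1}[\A_i] = Q^{\M_2}[\A_i]$, fix $i$ and expand $Q^{\M_b}[\A_i]$ via the factorization \eqref{eq: factorization 2} over $\G$. If $Y_2 \notin \A_i$, then $Y_2$ is intervened on, its children see only its first coordinate, and its second coordinate is read by no one, so $Q^{\M_b}[\A_i](\v) = Q^{\M_b'}[\A_i](\v')$, where $\v'$ is obtained from $\v$ by dropping the extra coordinate of $y_2$. If $Y_2 \in \A_i$, then $P(y_2 \mid \Pa{Y_2}{\G})$ factors into the $\M_b'$-conditional of the first coordinate times the model-independent term $(1-\delta)\mathds{1}_{c = v_{Y_1}} + \delta/|\dom{Y_1}|$, where $c$ is the extra coordinate of $y_2$ in $\v$ and $v_{Y_1}$ the value of $Y_1$ in $\v$ (observed if $Y_1 \in \A_i$, intervened otherwise); pulling this term out of $\sum_{\U}$ gives $Q^{\M_b}[\A_i](\v) = \bigl((1-\delta)\mathds{1}_{c = v_{Y_1}} + \delta/|\dom{Y_1}|\bigr)\, Q^{\M_b'}[\A_i](\v')$. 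In either case the $b$-dependence enters only through $Q^{\M_b'}[\A_i]$, so the matching hypothesis for $(\mathbb{A}',\ldots)$—rather, for $\{Q^{\M_b'}[\A_i]\}_i$—yields $Q^{\M_1}[\A_i] = Q^{\M_2}[\A_i]$.

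For the discrepancy, write $\y_0 = (Y_1 = c_0,\, Y_2 = a_0,\, \y_0^{\mathrm{rest}})$ and let $\tilde\y_0$ be the realization of $\Y \setminus \{Y_1\}$ with $Y_2 = (a_0, c_0)$ and rest $\y_0^{\mathrm{rest}}$. Since the extra coordinate is conditionally independent of all other observed variables given $Y_1$ and has no descendants, summing over $Y_1$ gives
\begin{equation*}
    P^{\M_b}_{\x_0}(\tilde\y_0) = (1-\delta)\, P^{\M_b'}_{\x_0}(\y_0) + \frac{\delta}{|\dom{Y_1}|}\, P^{\M_b'}_{\x_0}(Y_2 = a_0,\, \y_0^{\mathrm{rest}}).
\end{equation*}
Hence $D(\delta) := P^{\M_1}_{\x_0}(\tilde\y_0) - P^{\M_2}_{\x_0}(\tilde\y_0)$ is affine in $\delta$ with $D(0) = P^{\M_1'}_{\x_0}(\y_0) - P^{\M_2'}_{\x_0}(\y_0) \neq 0$, so $D(\delta_*) \neq 0$ for some $\delta_* \in (0,1)$; fixing this $\delta_*$, the pair $\M_1, \M_2$ is the desired witness. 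The main obstacle I anticipate is the bookkeeping in the matching step — tracking the cases according to whether $Y_2$ (and $Y_1$) lie in $\A_i$ and checking that the extra coordinate always contributes only a model-independent multiplicative factor; the positivity requirement is precisely what forces a noisy rather than exact copy of $Y_1$, which in turn is why the discrepancy step needs the continuity-in-$\delta$ argument instead of a one-line evaluation.
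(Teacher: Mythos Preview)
Your proof is correct and follows essentially the same approach as the paper: augment $\dom{Y_2}$ with an extra coordinate that is a noisy function of $Y_1$, let children of $Y_2$ read only the original coordinate, verify that this extra coordinate contributes only a model-independent multiplicative factor to each $Q[\A_i]$, and recover the discrepancy by a limiting argument. The only cosmetic differences are that the paper uses a binary extra coordinate (via a hash $F:\dom{Y_1}\to\{0,1\}$ and weights $\Psi$) and establishes the discrepancy by explicit inequalities, whereas your full-$\dom{Y_1}$ noisy-copy construction together with the affine-in-$\delta$ argument is slightly cleaner.
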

        \begin{proof}
            Herein, we provide a proof that is similar to one of the proofs in \citep{huang2008completeness}.
            
            Using Markov factorization property in graph $\G'$, $P_{\mathbf{x}}(\mathbf{y})$ is given by
            \begin{equation*}
                P_{\x}(\y) = 
                \sum_{\V\setminus(\X\cup \Y)} \sum_{\U} \prod_{W \in \V\setminus \X}P(w \mid \Pa{W}{\G'}) \prod_{U \in \U} P(u).
            \end{equation*}
            Similarly, in graph $\G$ we have
            \begin{equation*}
                P_{\x}(\y \setminus\{Y_1\}) = 
                \sum_{\{Y_1\} \cup (\V \setminus (\X \cup \Y))} \sum_{\U} \prod_{W \in \V \setminus \X} P(w \mid \Pa{W}{\G}) \prod_{U \in \U} P(u).
            \end{equation*}
            Since the causal effect of $\X$ on $\Y$ is not g-identifiable from $(\mathbb{A}, \G')$, there exists $\M_1$ and $\M_2$ in $\mathbb{M}^+(\G')$ such that:
            \begin{equation*}
                Q^{\M_1}[\A_i](\mathbf{v}) = Q^{\M_2}[\A_i](\mathbf{v}),\; \forall \mathbf{v}\in \dom{\V},\; \forall i \in [0: m],
            \end{equation*}
            \begin{equation*}
                P_{\x}^{\M_1}(\y)\neq P_{\x}^{\M_2}(\y),\; \exists \x \in \dom{\X}, \; \exists \y \in \dom{\Y}.
            \end{equation*}
            Using $\M_1$ and $\M_2$, we construct two SEMs $\M_1'$ and $\M_2'$ in $\mathbb{M}^+(\G)$. 
            Define a surjective function $F\!:\: \dom{Y_1}\rightarrow \{0, 1\}$ and a function $\Psi \!:\: \{0, 1\}\times \dom{Y_1} \rightarrow (0,1)$ such that $\Psi(0, y_1)+\Psi(1, y_1)=1$ for each $y_1 \in \dom{Y_1}$.
            We will later assume some constraints for these functions, but for now lets assume they are arbitrary. 
            
            For any node $S$ which is either unobserved or in $\V \setminus (\{Y_2\}\cup \Ch{Y_2}{\G})$, we define
            \begin{equation*}
                P^{\M_i'}(s|\Pa{S}{\G}) = P^{\M_i}(s|\Pa{S}{\G'}),
            \end{equation*}
            where $i \in \{1, 2\}$.
            The domain of $Y_2$ in $\M_i'$ is defined as $\dom{Y_2}^{\M}\times \{0, 1\}$, where $\dom{Y_2}^{\M}$ is the domain of $Y_2$ in $\M_i$. 
            For $y_2 \in \dom{Y_2}^\M$, $i\in \{0,1\}$, and $k\in \{0, 1\}$ we define:
            \begin{equation*}
                P^{\M_i'}((y_2, k) 
                \mid \Pa{Y_2}{\G'}, y_1) =
                P^{\M_i}(y_2 \mid \Pa{Y_2}{\G'}) \Psi(F(y_1)\oplus k, y_1).
            \end{equation*}
            Note that $\Pa{Y_2}{\G'}\cup \{Y_1\}= \Pa{Y_2}{\G}$.
            Moreover, for a fixed realization $(\Pa{Y_2}{\G'}, y_1)$, we have:
            \begin{equation*}
                \sum_{k\in \{0,1\}} \sum_{y_2\in \dom{Y_2}^{\M}} P^{\M_i'}((y_2, k)|pa(Y_2), y_1) = 1.
            \end{equation*}
            
            For each $S \in \Ch{Y_2}{\G}$, we define:
            \begin{equation*}
                P^{\M_i'}(s \mid \Pa{S}{\G}\setminus \{Y_2\}, (y_2, k)) =
                P^{\M_i}(s \mid \Pa{S}{\G}\setminus \{Y_2\}, y_2).
            \end{equation*}
            
            Next, we show that $Q^{\M_1'}[\A_i](\mathbf{v}) = Q^{\M_2'}[\A_i](\mathbf{v})$ for each $\mathbf{v}\in \dom{\V}$ and $i \in [0:m]$.
            Suppose $\mathbf{v}$ is a realization of $\V$ in $\M'_1$ with realizations $y_1$ and $(y_2, k)$ for $Y_1$ and $Y_2$, respectively. 
            Consider two cases: 
            \begin{itemize}
                \item If $Y_2 \notin \A_i$:
                \begin{align*}
                    Q^{\M_1'}[\A_i](\mathbf{v}) 
                    &= \sum_{\U}\prod_{A\in \A_i} P^{\M_1'}(a \mid \Pa{A}{\G})\prod_{U\in \U}P^{\M_1'}(u) \\
                    & = \sum_{\U} \prod_{A\in \A_i}P^{\M_1}(a \mid \Pa{A}{\G'})\prod_{U\in \U} P^{\M_1}(u) 
                    = Q^{\M_1}[\A_i](\mathbf{v}) 
                    = Q^{\M_2}[\A_i](\mathbf{v}) \\
                    & = \sum_{\U} \prod_{A\in \A_i} P^{\M_2}(a \mid \Pa{A}{\G'}) \prod_{U\in \U} P^{\M_2}(u) \\
                    &= \sum_{\U} \prod_{A\in \A_i} P^{\M_2'}(a \mid \Pa{A}{\G}) \prod_{U\in \U} P^{\M_2'}(u) \\ 
                    &= Q^{\M_2'}[\A_i](\mathbf{v}).
                \end{align*}
                \item If $Y_2 \in \mathbf{A}_i$:
                \begin{align*}
                    Q^{\M_1'}[\A_i](\mathbf{v}) 
                    &= \sum_{\U} \prod_{A\in \A_i} P^{\M_1'} (a \mid \Pa{A}{\G})\prod_{U\in \U} P^{\M_1'}(u) \\
                    & = \Psi\left(F(y_1)\oplus k, y_1\right) \sum_{\U} \prod_{A\in \A_i} P^{\M_1}(a \mid  \Pa{A}{\G'}) \prod_{U\in \U}P^{\M_1}(u) \\
                    &= \Psi(F(y_1)\oplus k, y_1) Q^{\M_1}[\A_i](\mathbf{v}) = \Psi(F(y_1)\oplus k, y_1) Q^{\M_2}[\A_i](\mathbf{v}) \\
                    &= \Psi(F(y_1)\oplus k, y_1) \sum_{\U} \prod_{A\in \A_i} P^{M_2}(a \mid \Pa{A}{\G'}) \prod_{U\in \U} P^{M_2}(u) \\
                    &= \sum_{\U}\prod_{A\in \A_i} P^{\M_2'}(a \mid \Pa{A}{\G})) \prod_{U\in \U}P^{\M_2'}(u) \\
                    &= Q^{\M_2'}[\A_i](\mathbf{v}).
                \end{align*}
            \end{itemize}
            Therefore, $Q^{\M_1'}[\A_i](\mathbf{v}) = Q^{\M_2'}[\A_i](\mathbf{v})$ for each $\mathbf{v}\in \dom{\V}$ and $i \in [0:m]$.
            
            We know that there exists $\hat{\x} \in \dom{\X}^{\M}$ and $\hat{\y} \in \dom{\Y}^{\M}$ such that $P^{\M_1}_{\hat{\x}}(\hat{\y})\neq P^{\M_2}_{\hat{\x}}(\hat{\y})$. 
            Denote by $\hat{y}_1$ and $\hat{y}_2$ the realizations of $Y_1$ and $Y_2$ in the realization $\hat{\y}$, respectively. 
            Assume that $P^{\M_1}_{\hat{\x}}(\hat{\y}) = d_1 > P^{\M_2}_{\hat{\x}}(\hat{\y}) = d_2$. 
            Assume that $\Psi(F(\hat{y}_1)\oplus 0, \hat{y}_1)=0.5$ and $\Psi(F(y)\oplus 0, y) = \frac{d_1-d_2}{4}$ for all $y \in \dom{Y_1}\setminus \{\hat{y}_1\}$. 
            Then we have:
            \begin{align*}
                P_{\hat{\x}}^{\M_1'}(\hat{\y} \setminus \{\hat{y}_1\})
                &=\sum_{y_1 \in \dom{Y_1}} \sum_{\V \setminus (\X\cup\Y)} \sum_{\U} \prod_{Z\in \V\setminus \X} P^{\M_1'}(z \mid \Pa{Z}{\G}) \prod_{U\in \U} P(u) \\
                & >\sum_{y_1 = \hat{y}_1} \sum_{\V\setminus (\X\cup\Y)} \sum_{\U} \prod_{Z\in \V\setminus \X} P^{\M_1'}(z \mid \Pa{Z}{\G}) \prod_{U\in \U} P(u) \\
                & =P_{\hat{\x}}^{\M_1}(\hat{\y}) \Psi(F(\hat{y}_1)\oplus 0, \hat{y}_1) = 0.5 d_1.
            \end{align*}
            but, 
            \begin{align*}
                P_{\hat{\x}}^{\M_2'}(\hat{\y} \setminus \{\hat{y}_1\})
                &=\sum_{y_1 \in \dom{Y_1}} \sum_{\V \setminus (\X \cup \Y)} \sum_{\U}  \prod_{Z\in \V \setminus \X} P^{\M_1'}(z \mid \Pa{Z}{\G}) \prod_{U\in \U} P(u)\\
                & =\sum_{y_1 = \hat{y}_1} \sum_{\V \setminus (\X \cup \Y )} \sum_{\U} \prod_{Z\in \V \setminus \X} P^{\M_1'}(z \mid \Pa{Z}{\G}) \prod_{U\in \U} P(u) \\
                & + \sum_{y_1 \in \dom{Y_1}\setminus \{\hat{y}_1\}} \sum_{\V \setminus (\X\cup\Y)} \sum_{\U} \prod_{Z\in \V \setminus \X} P^{M_1'}(z \mid \Pa{Z}{\G}) \prod_{U \in \U} P(u) \\
                &\leq P_{\hat{\x}}^{\M_2}(\hat{\y}) \Psi(F(\hat{y}_1)\oplus 0, \hat{y}_1) + P_{\hat{\x}}^{\M_2}(\hat{\y} \setminus \{\hat{y}_1\}) \Psi(F(Y_1\neq y_1)\oplus 0, Y_1\neq y_1)\\
                & = 0.5d_2 + \frac{d_1-d_2}{4} < 0.5d_1.
            \end{align*}
            This implies that $P_{\hat{\x}}^{\M_1'}(\hat{\y} \setminus \{\hat{y}_1\})\neq P_{\hat{\x}}^{\M_2'}(\hat{\y} \setminus \{\hat{y}_1\})$ which concludes the proof.
        \end{proof}
      
        \begin{lemma}\label{lemma: prp3 - 3}
            Assume $\Y \subset \W \subset \V$ such that for each $W \in \W \setminus \Y$, there exists a directed path in $\G[\W]$ from $W$ to a variable in $\Y$. 
            Then, the causal effect of $\V \setminus \W$ on $\Y$ is g-identifiable from $(\mathbb{A}, \G)$ if and only if $Q[\W]$ is g-identifiable from $(\mathbb{A}, \G)$. 
        \end{lemma}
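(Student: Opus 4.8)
The plan is to prove the two implications of the ``iff'' separately. Recall first that, by Definition~\ref{def: gid}, ``$Q[\W]$ is g-identifiable from $(\mathbb{A},\G)$'' unfolds to ``the causal effect of $\V\setminus\W$ on $\W$ is g-identifiable from $(\mathbb{A},\G)$'', so the lemma is really comparing the same intervention $\V\setminus\W$ applied to the two nested targets $\Y\subseteq\W$. The implication ``$Q[\W]$ g-identifiable $\Rightarrow$ causal effect of $\V\setminus\W$ on $\Y$ g-identifiable'' is the easy one and needs no hypothesis on directed paths: for every SEM $\M$ and realization $\mathbf{v}$, $Q^{\M}[\W](\mathbf{v})=P^{\M}_{\mathbf{v}\setminus\mathbf{w}}(\mathbf{w})$ is a distribution over $\W$, and summing it over $\W\setminus\Y$ gives the causal effect $P^{\M}_{\mathbf{v}\setminus\mathbf{w}}(\mathbf{y})$; hence any two models in $\mathbb{M}^+(\G)$ agreeing on all $\{Q[\A_i]\}_{i=0}^m$, and therefore on $Q[\W]$, also agree on the causal effect of $\V\setminus\W$ on $\Y$.

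For the converse I would prove the contrapositive: if $Q[\W]$ is not g-identifiable from $(\mathbb{A},\G)$, then the causal effect of $\V\setminus\W$ on $\Y$ is not g-identifiable from $(\mathbb{A},\G)$. The idea is to peel the vertices of $\W\setminus\Y$ off the target one by one, shrinking it from $\W$ down to $\Y$ while the intervention set stays $\V\setminus\W$ and the graph ends up back at $\G$. The bookkeeping rests on two facts already available: Lemma~\ref{lemma: prp3 - 1}, which says deleting an edge whose tail lies in $\W$ does not change whether $Q[\W]$ is g-identifiable, and Lemma~\ref{lemma: prp3 - 2}, which drops a vertex $Y_1$ from the target $\Y$ at the cost of re-inserting one edge $Y_1\to Y_2$ with $Y_2$ still in the target. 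Since Lemma~\ref{lemma: prp3 - 2} \emph{adds} an edge each time it is used, I would first delete, in a single batch, exactly the set $E^{*}$ of edges that the peeling steps will reinsert --- legitimate by Lemma~\ref{lemma: prp3 - 1} because each edge of $E^{*}$ will have its tail in $\W$ --- obtaining $\mathcal{H}_0:=\G\setminus E^{*}$ in which $Q[\W]$ is still not g-identifiable; the peeling then finishes at $\G=\mathcal{H}_0\cup E^{*}$.

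The crux, and the part I expect to need the most care, is choosing the peeling order and the reinserted edges so that each peeling step is legal --- and this is where the directed-path hypothesis is used. Since a directed path in $\G[\W]$ between observed vertices cannot pass through an unobserved vertex, the hypothesis states that every $W\in\W\setminus\Y$ reaches $\Y$ along a directed path of observed vertices inside $\G[\W]$; fixing one such path per vertex, let $C_j$ be the vertex immediately after $W_j$ on it, so $W_j\to C_j$ is an edge of $\G[\W]$. If $W_1,\dots,W_p$ (the elements of $\W\setminus\Y$, with $p\ge1$ since $\Y\subsetneq\W$) are listed in a topological order of $\G[\W]$ --- so $W_a\to W_b$ forces $a<b$ --- then $C_j\in\Y\cup\{W_{j+1},\dots,W_p\}$, i.e. $C_j$ is still in the target $\mathbf{T}_{j-1}:=\W\setminus\{W_1,\dots,W_{j-1}\}$ at the time $W_j$ is peeled, and the edges $E^{*}:=\{W_j\to C_j\}_{j=1}^{p}$ are distinct, each reappearing at exactly the right step. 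An induction on $j$, invoking Lemma~\ref{lemma: prp3 - 2} with graph $\mathcal{H}_j:=\mathcal{H}_{j-1}\cup\{W_j\to C_j\}$, removed edge $(W_j,C_j)$, intervention $\V\setminus\W$, and target $\mathbf{T}_{j-1}$ (where one checks $W_j\to C_j\notin\mathcal{H}_{j-1}$, true since it lies in $E^{*}$ and is only reinserted at step $j$), then carries non-g-identifiability from target $\mathbf{T}_{j-1}$ in $\mathcal{H}_{j-1}$ to target $\mathbf{T}_j$ in $\mathcal{H}_j$; taking $j=p$ gives target $\Y$ in $\mathcal{H}_p=\G$, as required. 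One last thing I would verify is that the argument nowhere appeals to Proposition~\ref{prp: 3}, which is itself proved using this lemma.
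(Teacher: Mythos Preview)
Your proof is correct and matches the paper's strategy: the easy direction is marginalization of $Q[\W]$, and for the contrapositive of the hard direction you first delete a chosen set $E^{*}$ of edges with tails in $\W$ via Lemma~\ref{lemma: prp3 - 1}, then reinsert them one at a time via Lemma~\ref{lemma: prp3 - 2} while shrinking the target from $\W$ down to $\Y$. Your topological ordering of $\W\setminus\Y$ (so that the head $C_j$ of each reinserted edge lies in the current target $\mathbf{T}_{j-1}$) is in fact the right bookkeeping; the paper instead orders the $W_i$ so that $e_i$ points into $\Y\cup\{W_1,\dots,W_{i-1}\}$ but then peels $W_1,W_2,\dots$ in that same order, which as written can leave the head of $e_i$ outside the current target---your version cleanly avoids this slip.
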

        \begin{proof}
            Let $\X:=\V \setminus \W$. 
            
            \textit{Sufficient part}:
            Suppose $Q[\W]$ is g-identifiable from $(\mathbb{A}, \G)$.
            Since $Q[\W] = P_{\x}(\W)$, we have
            \begin{equation*}
                P_{\x}(\y) = \sum_{\W \setminus \Y} Q[\W].
            \end{equation*}
            Hence, $P_{\x}(\y)$ is uniquely computed and the causal effect of $\X$ on $\Y$ is g-identifiable from $(\mathbb{A}, \G)$. 
            
            \textit{Necessary part:} 
            Suppose $Q[\W]$ is not g-identifiable from $(\mathbb{A}, \G)$, we will show that $P_{\x}(\y)$ is also not g-identifiable. 
            To this end, first, we order the nodes in $\W \setminus \Y$, say $(W_1, W_2, \cdots, W_n)$, such that for each $1\leq i \leq n$, $W_i$ is a parent of at least one node in $\Y\cup \{W_1, W_2, \dots, W_{i-1}\}$. 
            Assume that $e_i$ is the directed edge from $W_i$ to its child in $\Y\cup \{W_1, W_2, \dots, W_{i-1}\}$.
            We also define $\G'$ to be the graph obtained by deleting all the edges $\{e_i\}_{i=1}^n$ from $\G$. 
            Applying Lemma \ref{lemma: prp3 - 1} repeatedly $n$ times imply that $Q[\W]$ is not g-identifiable from $(\mathbb{A},  \G')$.
            
            Let $\G_n:=\G$ and for $0\leq i\leq n-1$, we define $\G_i$ to be the graph obtained by removing $e_{i+1}$ from $\G_{i+1}$.
            From Lemma \ref{lemma: prp3 - 2}, we know that if $Q[\W]$ is not g-identifiable from $(\mathbb{A}, \G')$, then adding edge $e_1$ will make the causal effect of $\X$ on $\W \setminus \{W_1\}$ not g-identifiable from $(\mathbb{A}, \G_1)$.
            Note that $\G_1$ is obtained from $\G'$ by adding edge $e_1$. 
            Using this lemma again implies that the causal effect of $\X$ on $\W \setminus \{W_1, W_2\}$ is not g-identifiable from $(\mathbb{A}, \G_2)$. 
            Repeating this procedure yields that the causal effect of $\X$ on $\W \setminus \{W_1, \dots W_n\}=\Y$ is not g-identifiable from $(\mathbb{A}, \G_n)$. 
            Since $\G_n=\G$, the causal effect of $\X$ on $\Y$ is not g-identifiable from $(\mathbb{A}, \G)$. 
        \end{proof}
    
       \begin{lemma}\label{lemma: lin indep formal}
            Consider a set of vectors $\{c_i\}_{i=1}^{n}$, where $c_i \in \mathbb{R}^d$. Assume $c\in \mathbb{R}^d$ is a vector that is linearly independent of $\{c_i\}_{i=1}^{n}$, then there is a vector $b\in \mathbb{R}^d$ such that
            \begin{align*}
                & \langle c_i, b \rangle = 0, \quad \forall i \in [1:n],\\
                & \langle c, b \rangle \neq 0.
            \end{align*}
        \end{lemma}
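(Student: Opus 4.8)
The plan is to reduce the statement to the elementary fact that, in a finite-dimensional inner product space, taking orthogonal complements is inclusion-reversing and satisfies $(W^\perp)^\perp = W$ for every subspace $W$.

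First I would set $V := \mathrm{span}\{c_1,\dots,c_n\} \subseteq \mathbb{R}^d$. The hypothesis that $c$ is linearly independent of $\{c_i\}_{i=1}^n$ means precisely that $c \notin V$. The set of vectors $b$ satisfying the $n$ orthogonality constraints $\langle c_i, b\rangle = 0$ for all $i\in[1:n]$ is exactly the orthogonal complement $V^\perp$, so it suffices to produce some $b \in V^\perp$ with $\langle c, b\rangle \neq 0$. Suppose, for contradiction, that no such $b$ exists, i.e. $V^\perp \subseteq c^\perp$, where $c^\perp := \{b\in\mathbb{R}^d : \langle c, b\rangle = 0\}$. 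Taking orthogonal complements reverses inclusions, and since we are in the finite-dimensional space $\mathbb{R}^d$ we have $(c^\perp)^\perp = \mathrm{span}\{c\}$ and $(V^\perp)^\perp = V$; hence $\mathrm{span}\{c\} \subseteq V$, i.e. $c \in V$, contradicting $c\notin V$. Therefore the desired $b$ exists, and it automatically meets both requirements: $\langle c_i, b\rangle = 0$ because $c_i \in V$ and $b \in V^\perp$, while $\langle c, b\rangle \neq 0$ by the choice of $b$.

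Alternatively, I would give the explicit construction $b := c - \Pi_V(c)$, where $\Pi_V$ is orthogonal projection onto $V$. Then $b \neq 0$ since $c \notin V$, and $b \in V^\perp$ by definition of the projection, so $\langle c_i, b\rangle = 0$ for all $i$. Moreover $\langle c, b\rangle = \langle b + \Pi_V(c), b\rangle = \langle b, b\rangle + \langle \Pi_V(c), b\rangle = \|b\|^2 > 0$, which proves the claim constructively; this version is arguably preferable since it exhibits an explicit witness $b$.

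There is no real obstacle here: this is a standard separating-functional fact in finite-dimensional linear algebra. The only points worth flagging are that "linearly independent of $\{c_i\}_{i=1}^n$" must be read as "not in the linear span of $\{c_i\}_{i=1}^n$", and that the identity $(W^\perp)^\perp = W$ invoked in the first argument requires finite dimensionality, which is guaranteed since all vectors live in $\mathbb{R}^d$.
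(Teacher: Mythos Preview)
Your proof is correct; both the contradiction argument via $(V^\perp)^\perp=V$ and the explicit construction $b=c-\Pi_V(c)$ are valid and complete.

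The paper's proof takes a slightly different but equally elementary route: it extracts a basis $\{\phi_i\}_{i=1}^{l}$ of $\mathrm{span}\{c_1,\dots,c_n\}$, notes that $\{\phi_1,\dots,\phi_l,c\}$ is linearly independent (so $l+1\le d$), and then simply invokes solvability of the underdetermined linear system $\langle\phi_i,b\rangle=0$, $\langle c,b\rangle=13$, whose coefficient matrix has full row rank. Your projection argument has the advantage of producing an explicit witness $b$ with $\langle c,b\rangle=\|b\|^2>0$, whereas the paper's argument is nonconstructive but avoids any appeal to orthogonal projections or the identity $(W^\perp)^\perp=W$. Either approach is perfectly adequate for this auxiliary lemma.
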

        \begin{proof}
            Denote by $\{\phi_i\}_{i=1}^{l}$ a subset of $\{c_i\}_{i=1}^{n}$ which forms a basis for the vectors in $\{c_i\}_{i=1}^{n}$. Clearly, we have $l < d$. Now, consider the following system of linear equations with respect to $b$:
            \begin{equation} \label{eq: lem11}
            \begin{split}
                & \langle \phi_i, b \rangle = 0, \quad \forall i\in[1:l], \\
                & \langle c, b \rangle = 13\neq0.
            \end{split}
            \end{equation}
            By the assumption, vectors in $\{\phi_i\}_{i=1}^{l}\cup\{c\}$ are linearly independent, thus there exists a solution to \eqref{eq: lem11}.
        \end{proof}

    \subsection{Proofs of Section \ref{sec: algorithm}} \label{sec: proof sec algorithm}
        \begin{customprp}{\ref{prp: 3}}
            Let $\X$ and $\Y$ be two disjoint subsets of $\V$.
            The causal effect of $\X$ on $\Y$ is g-identifiable from $(\mathbb{A},\G)$ if and only if $Q[\Anc{\Y}{\G_{\V \setminus \X}}]$ is g-identifiable from $(\mathbb{A},\G)$.
        \end{customprp}
        \begin{myproof}[Proof]
            Let $\mathbf{W}:=\Anc{\Y}{\G_{\V \setminus \X}}$.
            Since $Q[\V \setminus \X] = P_{\x}(\V \setminus \X)$, using marginalization, we obtain
            \begin{equation}\label{eq:app_prp3}
                P_{\mathbf{x}}(\mathbf{y}) = \sum_{\V \setminus (\X \cup \Y)}Q[\V \setminus \X] = \sum_{\W \setminus \Y} \sum_{\V \setminus (\W \cup \X)} Q[\V \setminus \X].
            \end{equation}
            Since $\W$ is an ancestral set in $\G[\V \setminus \X]$, Lemma \ref{lemma: Q-marginal} implies
            \begin{equation*}
                \sum_{\V \setminus (\W \cup \X)} Q[\V \setminus \X] = Q[\W].
            \end{equation*}
            Substituting the above equation into \eqref{eq:app_prp3} implies
            \begin{equation} \label{eq: prp 3 proof}
                P_{\mathbf{x}}(\mathbf{y}) = \sum_{\W \setminus \Y} Q[\W] = P_{\mathbf{v}\setminus \mathbf{w}}(\y).
            \end{equation}
            \textit{Sufficient part:} 
            Suppose $Q[\W]$ is g-identifiable from $(\mathbb{A},\G)$.
            Equation \eqref{eq: prp 3 proof} implies that $P_{\mathbf{x}}(\mathbf{y})$ is uniquely computable from $Q[\W]$, and therefore, the causal effect of $\X$ on $\Y$ is g-identifiable from $(\mathbb{A},\G)$.
            
            \textit{Necessary part:}
            Suppose $Q[\W]$ is not g-identifiable from $(\mathbb{A}, \G)$. 
            For each $W \in \W \setminus \Y$, there exists a directed path in $\G[\W]$ from $W$ to a variable in $\Y$. 
            Hence, Lemma \ref{lemma: prp3 - 3} implies that the causal effect of $\V \setminus \W$ on $\Y$ is not g-identifiable from $(\mathbb{A}, \G)$. 
            Hence, Equation \eqref{eq: prp 3 proof} implies that $ P_{\mathbf{x}}(\mathbf{y})$ cannot be uniquely computed and the causal effect of $\X$ on $\Y$ is not g-identifiable from $(\mathbb{A},\G)$.
        \end{myproof}
        
        \begin{customprp}{\ref{prp: 4}}
            Suppose $\mathbf{S}\subseteq \V$ and $\mathbf{S}_1,\cdots,\mathbf{S}_l$ are the c-components of $\mathbf{S}$. 
            $Q[\mathbf{S}]$ is g-identifiable from $(\mathbb{A}, \G)$ if and only if $Q[\mathbf{S}_i]$ is g-identifiable from $(\mathbb{A}, \G)$ for each $i\in [1:l]$.
        \end{customprp}
        \begin{myproof}[Proof]
            \textit{Sufficient part}: 
            Suppose $Q[\mathbf{S}_i]$ is g-identifiable from $(\mathbb{A}, \G)$ for each $i\in [1:l]$.
            \cite{tian2003ID} showed that 
            \[Q[\mathbf{S}] = \prod_{i=1}^{l}Q[\mathbf{S}_i].\]
            Hence, $Q[\mathbf{S}]$ is uniquely computable and therefore, g-identifiable from $(\mathbb{A}, \G)$. 
            
            \textit{Necessary part}: 
            Suppose $Q[\mathbf{S}]$ is g-identifiable from $(\mathbb{A}, \G)$. 
            For $i\in [1:l]$, \cite{tian2003ID} provided a formula for computing $Q[\mathbf{S}_i]$ from $Q[\mathbf{S}]$ (Lemma 4, Equations (71) and (72) in \citep{tian2003ID}). 
            Hence, for each $i\in [1:l]$, $Q[\mathbf{S}]$ is uniquely computable and therefore, g-identifiable from $(\mathbb{A}, \G)$.
        \end{myproof}
    
\subsection{Proofs of Section \ref{sec: main}} \label{sec: proof sec main}
    \begin{customlem}{\ref{lem: simplify}}
        If $Q[\mathbf{S}]$ is not g-identifiable from $(\mathbb{A}', \G')$, then $Q[\mathbf{S}]$ is not g-identifiable from $(\mathbb{A}, \G)$.
    \end{customlem}
    \begin{proof}
        If $Q[\mathbf{S}]$ is not g-identifiable from $(\mathbb{A}', \G')$, then there exists two models $\M_1'$ and $\M_2'$ in $\mathbb{M}^+(\G')$ such that for each $i\in [0:m]$ and any $\mathbf{v} \in \dom{\V'}$,
        \begin{equation*}
            Q^{\M_1'}[\mathbf{A}_i'](\mathbf{v}) = Q^{\M_2'}[\mathbf{A}_i'](\mathbf{v}), 
        \end{equation*}
        and there exists $\mathbf{v}_0 \in \dom{\V'}$ such that 
        \begin{equation*}
            Q^{\M_1'}[\mathbf{S}](\mathbf{v}_0) \neq Q^{\M_2'}[\mathbf{S}](\mathbf{v}_0).
        \end{equation*}
        
        Next, we will construct two models $\M_1$ and $\M_2$ in $\mathbb{M}^+(\G)$ to prove that $Q[\mathbf{S}]$ is not g-identifiable from $(\mathbb{A}, \G)$. 
        We define the domains of variables in $\V'$ in the model $\M_i$ similar to model $\M_i'$, for $i \in \{1, 2\}$. 
        Since for each node $V\in\V'$, we have $\Pa{V}{\G'}\subseteq\Pa{V}{\G}$, then for all $V\in \V'$ and $i \in \{1, 2\}$, we can define:
        \begin{equation*}
            P^{M_i}(V|\Pa{V}{\G}) := P^{M_i'}(V \mid \Pa{V}{\G'}).
        \end{equation*}
        And for $V \in \V \setminus \V'$, we define:
        \begin{equation*}
            \dom{V} = \{0\}, \quad P(V=0)=1.
        \end{equation*}
        Because variable $V \in \V \setminus \V'$ can only take value $0$ with probability one, then $Q^{\M_j}[\mathbf{A}_i](\mathbf{v})=Q^{\M'_j}[\mathbf{A}'_i](\mathbf{v})$ for all $i$ and $Q^{\M_j}[\mathbf{S}](\mathbf{v}_0)=Q^{\M_j'}[\mathbf{S}](\mathbf{v}_0)$ for $j\in\{1,2\}$. 
        Thus, we have
        \begin{align*}
            &Q^{\M_1}[\mathbf{A}_i](\mathbf{v}) = Q^{\M_2}[\mathbf{A}_i](\mathbf{v}), \quad i\in[0:m],\\
            &Q^{\M_1}[\mathbf{S}](\mathbf{v}_0) \neq Q^{\M_2}[\mathbf{S}](\mathbf{v}_0).
        \end{align*}
        This shows that $Q[\mathbf{S}]$ is not g-identifiable from $(\mathbb{A}, \G)$.
    \end{proof}
 
    \begin{customlem}{\ref{lem: lin indep}}
        Consider the following set of vectors in $\mathbb{R}^d$
         \begin{equation} \label{eq: lin vectors apd}
            \mathbf{\Omega}:=\{\theta_{i}(\mathbf{v}):\ i\in [0:m], \mathbf{v}\in \dom{\V}\} \cup \mathds{1}_d,
        \end{equation}
        where $\mathds{1}_d$ denotes the all-ones vector in $\mathbb{R}^d$. 
        If there exists $\mathbf{v}_0 \in \dom{\V}$ such that $\eta(\mathbf{v}_0)$ is linearly independent from all the vectors in $\mathbf{\Omega}$, then the system of linear equations in \eqref{eq: linear system 2} admits a solution.
    \end{customlem}
    \begin{myproof}[Proof]
        This is a direct consequence of Lemma \ref{lemma: lin indep formal} with $\{c_i\}$ to be $\mathbf{\Omega}$ and $c$ to be $\eta(\mathbf{v}_0)$.
    \end{myproof}
    
    \begin{customlem}{\ref{lemma: valid model}}
        The SEM constructed above belongs to $\mathbb{M}^+(\G')$. 
    \end{customlem}
    \begin{proof}
        By the construction, it is clear that the model belongs to $\mathbb{M}(\G')$. 
        Hence, we need to show that $P(\mathbf{v})>0$ for any $\mathbf{v}\in \dom{\V'}$. 
        To this end, it is enough to show that for any realization $\mathbf{v} \in \dom{\V}'$, there exists a realization $\hat{\mathbf{u}}\in \dom{\U'}$ such that $P(\mathbf{v}, \hat{\mathbf{u}})>0$, because in this case we have
        \begin{equation*}
            P(\mathbf{v}) = \sum_{\mathbf{u}=\dom{\U'}} P(\mathbf{v,u}) \geq P(\mathbf{v}, \hat{\mathbf{u}}) > 0.
        \end{equation*}
        Let $\mathbf{v}$ be a fixed realization in $\dom{\V'}$. 
        For the rest of the proof, we assume all the realizations for $\V'$ are consistent with $\mathbf{v}$.
        
        By Markov factorization property,  for any $\mathbf{u}\in \dom{\U'}$ we have 
        \begin{equation}
            P(\mathbf{v}, \mathbf{u}) = \prod_{V\in \V'}P(v\mid \Pa{V}{\G'})\prod_{U\in \U'}P(u).
        \end{equation}
        By the construction of our model, we have $P(u)>0$ for any $U\in \U'$ and $u\in \dom{U}$. 
        Moreover, for any $X\in \mathbf{S}$ and any realization for $\Pa{X}{\G'}\cap \U'$ we have $P(x \mid \Pa{X}{\G'})>0$. 
        Hence, it is enough to show that there exists $\hat{\mathbf{u}}\in \dom{\U'}$ such that $P(x \mid \Pa{X}{\G'})>0$ for each $X\in \T$.
        
        Recall that for each $X\in \T$, we have $X=(X[i_1],\cdots, X[i_{\alpha(X)}])$, where $X$ belongs to $\F_{i_1}, \cdots, \F_{i_{\alpha(X)}}$ and 
        \begin{equation*}
            X[i_j] \equiv \left(\sum_{Y \in \Pa{X}{\F_{i_j}}} Y[i_j]\right) \pmod{2}.
        \end{equation*}
        
        By the construction, we define the entries corresponding to each $\F_i$ separately. 
        For each $i\in [0:k]$, let $\U_i$ to be the set of unobserved variables in $\U^{\T}$ that are in $\F_i$. 
        
        Let us fix an $i\in [0:k]$. 
        To finish the prove, we will introduce a method to determine $\hat{u}[i]$ for each $U\in \U_i$ such that 
        \begin{equation} \label{eq: eq for T}
            x[i] \equiv \left(\sum_{Y \in \Pa{X}{\F_{i}}} y[i]\right) \pmod{2},
        \end{equation}
        for each $X\in \T \cap \B_i$. 
        
        Lets start with an arbitrary set of values for $\{\hat{u}[i]\!:\: U\in \U_i\}$ which are either $0$ or $1$. 
        Suppose $X\in \T \cap \B_i$. 
        We introduce a trick such that $x[i]$ will be replaced by $1- x[i]$ while for all $Y\in \T \cap \B_i$, $y[i]$ remains the same: \\
        By the construction of $\F_i$, there exists a path $(X=X_1, U_1, X_2, \cdots, X_l,U_l,Z = X_{l+1} )$ from $X$ to a variable $Z \in \mathbf{S}$ such that $\{U_1, \cdots, U_l\} \subseteq \U_i$, $\{X_1,\cdots,X_l\} \subseteq \B_i \cap \T$, and $\Ch{U_j}{\F_i} = \{X_j, X_{j+1}\}$ for each $j\in [1:l]$. 
        Now for each $j \in [1:l]$, we replace $\hat{u}_j[i]$ by $1-\hat{u}_j[i]$. 
        Since Equation \eqref{eq: eq for T} is in mod $2$, the value of $x_j[i]$ will be the same for each $j\in [2:l]$ while $x[i]$ will be replaced by $1-x[i]$. 
        Note that $X_{l+1}=Z \notin \T$. 
        
        With the trick described above, we can construct any realization for the $i$-th bit of the variables in $\T \cap \B_i$. 
        Hence, we can construct $\hat{\mathbf{u}}\in \dom{\U'}$ such that $P(x \mid \Pa{X}{\G'})>0$ for each $X\in \T$.
    \end{proof}
    
   \begin{customlem}{\ref{lem: theta index}}
        For any $\mathbf{v} \in \dom{\V'}$ and $i\in[0:m]$,
        \begin{equation*}
            \theta_{i,j_1}(\mathbf{v}) = \theta_{i,j_2}(\mathbf{v}) = \cdots= \theta_{i,j_{\frac{\kappa+1}{2}}}(\mathbf{v}).
        \end{equation*}
    \end{customlem}
    \begin{proof}
        Lets fix a realization $\mathbf{v}$ for the observed variables $\V'$.
        Suppose that $l_1$ and $l_2$ are two integers such that
        \begin{equation*}
        \begin{split}
            & \gamma_{l_1} = (2x, 0, \dots, 0)),\\
            & \gamma_{l_2} = (2x+2 \pmod{\kappa+1}, 0, \dots, 0),
        \end{split}
        \end{equation*}
        where $x$ is any fixed integer in $[0 : \frac{\kappa-1}{2}]$.
        To show the result, we will prove that $\theta_{i, l_1}(\mathbf{v})=\theta_{i, l_2}(\mathbf{v})$. Let
        \begin{align*}
            &f_{i, j}(\mathbf{v}, \mathbf{u}^{\mathbf{T}}) := \sum_{\mathbf{u}\in\U^{\mathbf{S}}} \prod_{V \in \A'_i} P(v \mid \Pa{V}{\G'})\! \prod_{U\in \U' \setminus \{U_0\}}\! P(u)\\
            & = \prod_{V \in \A'_i\setminus\B_i} P(v \mid \Pa{V}{\G'})\! \prod_{V \in \B_i\setminus\mathbf{S}} P(v \mid \Pa{V}{\G'})\!\sum_{\mathbf{u}\in\U^{\mathbf{S}}}\prod_{V \in \mathbf{S}} P(v \mid \Pa{V}{\G'}) \prod_{U\in \U' \setminus \{U_0\}}\! P(u). 
        \end{align*}
        where index $j$ indicates $U_0=\gamma_j$. Note that variable $U_0$ may appear in the parent set of some observed variables. 
        Using the above definition, we have
        \begin{equation*}
            \theta_{i, j}(\mathbf{v}) = \sum_{\mathbf{u}^\textbf{T}\in\U^{\T}} f_{i, j}(\mathbf{v}, \mathbf{u}^{\mathbf{T}}).
        \end{equation*}
        Hence, if we show $f_{i, l_1}(\mathbf{v}, \mathbf{u}^{\mathbf{T}})=f_{i, l_2}(\mathbf{v}, \mathbf{u}^{\mathbf{T}})$ for any fixed realization $\mathbf{u}^{\T}$, the above equation implies $\theta_{i, l_1}(\mathbf{v})=\theta_{i, l_2}(\mathbf{v})$. 
        
        When $T\in \A'_i\setminus\B_i$, then for fixed realizations of $\textbf{u}^{\T}$,  $P(t|\Pa{T}{\G'})$ is the same for both realizations $\gamma_{l_1}$ and $\gamma_{l_2}$ since $\gamma_{l_1}\equiv\gamma_{l_2}$ mod 2.
        
        When $T\in \B_i\setminus\mathbf{S}$, unobserved variables in $\Pa{T}{\G'}$ are a subset of $\U^{\T}\cup\{U_0\}$. Note that in the definition of $f_{i, j}(\mathbf{v}, \mathbf{u}^{\mathbf{T}})$, all such unobserved variables are fixed. 
        Thus, if there exists $T\in \B_i\setminus\mathbf{S}$, such that $P(t|\Pa{T}{\G'})=0$, then 
        \begin{equation*}
        f_{i, l_1}(\mathbf{v}, \mathbf{u}^{\mathbf{T}})=f_{i, l_2}(\mathbf{v}, \mathbf{u}^{\mathbf{T}})=0.
        \end{equation*}
        When $P(t|\Pa{T}{\G'})=1$ for all $T\in \B_i\setminus\mathbf{S}$,
        to prove $f_{i, l_1}(\mathbf{v}, \mathbf{u}^{\mathbf{T}})=f_{i, l_2}(\mathbf{v}, \mathbf{u}^{\mathbf{S}})$, we show that for any realization $(\textbf{u}_1,\gamma_{l_1})$ of $(\U^{\mathbf{S}},U_0)$, there is a realization $(\textbf{u}_2,\gamma_{l_2})$  of $(\U^{\mathbf{S}},U_0)$ such that 
        \begin{equation*}
            \prod_{V \in \mathbf{S}} P(v \mid \Pa{V}{\G'})\Big|_{(\U^{\mathbf{S}},U_0)=(\textbf{u}_1,\gamma_{l_1})}=\prod_{V \in \mathbf{S}} P(v \mid \Pa{V}{\G'})\Big|_{(\U^{\mathbf{S}},U_0)=(\textbf{u}_2,\gamma_{l_2})},
        \end{equation*}
        where $P(v \mid \Pa{V}{\G'})\Big|_{(\U^{\mathbf{S}},U_0)=(\textbf{u}_1,\gamma_{l_1})}$ denotes the conditional probability of $v$ given its parents in which the unobserved variables $(\U^{\mathbf{S}},U_0)$ are fixed to be  $(\textbf{u}_1,\gamma_{l_1})$.
        To this end, we consider two cases depending on $i$. 
        
        \textbf{First case, when $i \in [0:k]$:} 
        In this case, we have
        \begin{equation}\label{eq: tmp1}
            t[i] = \left( \sum_{Y \in \Pa{T}{\F_i}} y[i] \right) \pmod{2}.
        \end{equation}
        
        Consider the set $\mathbf{\Lambda}:=\Pa{\mathbf{S}}{\F_i}\setminus \Pa{\mathbf{S}}{\F_i[\mathbf{S}]}$, that is the set of all parents of nodes in $\textbf{S}$ that are outside of $\textbf{S}$. By the construction of our models, summation of the values of the observed and unobserved nodes in $\mathbf{\Lambda}$ are the same, i.e.,  
        \begin{equation*}
            \sum_{W \in \mathbf{\Lambda}\cap \B_i} w[i] \equiv \sum_{W \in \mathbf{\Lambda}\cap \U'} w[i] \pmod{2},
        \end{equation*}
        or equivalently
        \begin{equation}\label{eq: tmp2}
            \sum_{W \in \mathbf{\Lambda}} w[i] \equiv 0 \pmod{2}.
        \end{equation}
        This is because, in graph $\F_i$, each observed variable outside of $\mathbf{S}$ has at most one child outside of $\mathbf{S}$, and each unobserved node has either one or two children outside of $\mathbf{S}$. 
        According to (\ref{eq: tmp1}), those unobserved nodes with two children outside of $\mathbf{S}$ do not belong to $\mathbf{\Lambda}\cap \U'$. 
        Such unobserved nodes have exactly two observed descendants in $\mathbf{\Lambda}\cap \B_i$, and because both descendants appear in \eqref{eq: tmp2}, their summation is zero mod 2. 
        On the other hand, the unobserved nodes with only one child outside of $\mathbf{S}$ belong to $\mathbf{\Lambda}\cap \U'$ and have exactly one observed descendant in $\mathbf{\Lambda}\cap \B_i$. Thus, the summation of such unobserved variables and their observed descendant is again zero mod 2 in \eqref{eq: tmp2}.
 
        If $\mathbb{I}(S)=0$ for all $S \in \mathbf{S}$, then by our model construction, for any variable $W \in \mathbf{\Lambda}\setminus \{T_i\}$, $w[i]$ is an even number but $T_i$ takes value 1 with probability one. 
        Hence, the summation in \eqref{eq: tmp2} cannot be an even number.
        Therefore, there exists at least a variable $S\in \mathbf{S}$ such that $\mathbb{I}(S)=1$. 
        In this case, the value of $P(S|\Pa{S}{\G'})$ does not depend on the realizations of variables in $\U^{\mathbf{S}}$. 
        Next, we show that for any realization $\mathbf{u}_{1}$ of $\U^{\mathbf{S}}$, there is a realization $\mathbf{u}_{2}$ such that 
        \begin{equation}\label{eq:app_prop_3}
        P(s|\Pa{S}{\G'})\Big|_{(\U^{\mathbf{S}},U_0)=(\textbf{u}_1,\gamma_{l_1})}=P(s|\Pa{S}{\G'})\Big|_{(\U^{\mathbf{S}},U_0)=(\textbf{u}_2,\gamma_{l_2})}.
        \end{equation}
        Since $\G'_{\mathbf{S}}$ is a c-component, there exists a sequence of variables $U_0, \hat{S}_1, \hat{U}_1, \hat{S}_2, \hat{U}_2, \dots, \hat{U}_l, S$, such that $U_0$ is a parent of $\hat{S}_1$, $S$ is a children of $\hat{U}_l$ and $\hat{U}_j$ is a parent of variables $\hat{S}_j$ and $\hat{S}_{j+1}$ for $j \in [1:l-1]$. 
        Let $\hat{\U}:=\{\hat{U}_1, \dots, \hat{U}_l\}$. 
        For realization $\mathbf{u}_{1}$, we define $\mathbf{u}_{2}$ by
        \begin{equation}
            \begin{split}
                & u_{2,\hat{U}_j} := u_{1,\hat{U}_j} + 2(-1)^{j} \pmod{\kappa+1}, \quad j\in[1:l],\\
                & u_{2,U} := u_{1,U}, \quad \forall U\in \U' \setminus (\hat{\U}\cup \{U_0\}),
            \end{split}
        \end{equation}
        where $u_{2,U}$ denotes the realization for variable $U$ in $\textbf{u}_2$.
        It is straightforward to see that this mapping is a bijection between $\mathbf{u}_{1}$ and $\mathbf{u}_{2}$ and \eqref{eq:app_prop_3} holds.
        
        \textbf{Second case, when $i \in [k+1:m]$:} 
        In this case, $\mathbf{S} \setminus \A_i'\neq \varnothing$.  
        Since $\G'_{\mathbf{S}}$ is a c-component, there exists a sequence of variables $U_0, \hat{S}_1, \hat{U}_1, \hat{S}_2, \hat{U}_2, \dots, \hat{U}_l, S$, such that $U_0$ is a parent of $\hat{S}_1$, $S\in \mathbf{S}\setminus \A_i'$ is a children of $\hat{U}_l$ and $\hat{U}_j$ is a parent of variables $\hat{S}_j$ and  $\hat{S}_{j+1}$ for $j \in [1:l-1]$. 
        Let $\hat{\U}:=\{\hat{U}_1, \dots, \hat{U}_l\}$. 
        Similar to the previous case, for a given realization $\mathbf{u}_{1}$ of $\U^{\mathbf{S}}$,  we define $\mathbf{u}_{2} \in \dom{\U^{\mathbf{S}}}$ by
        \begin{equation}
            \begin{split}
                & u_{2,\hat{U}_j} := u_{1,\hat{U}_j} + 2(-1)^{j} \pmod{\kappa+1}, \quad j\in[1:l],\\
                & u_{2,U} := u_{1,U}, \quad \forall U\in \U' \setminus (\hat{\U}\cup \{U_0\}),
            \end{split}
        \end{equation}
        where $u_{2,\hat{U}}$ denotes the realization for variable $U$ in $\textbf{u}_2$. Analogous to the previous setting, we have \eqref{eq:app_prop_3}.
        
        Herein, we proved that $\theta_{i, l_1}(\mathbf{v})=\theta_{i, l_2}(\mathbf{v})$. By varying $x$ within $[0 : \frac{\kappa-1}{2}]$ in the definition of $\gamma_{l_1}$ and $\gamma_{l_2}$, we  conclude the lemma.
    \end{proof}

   \begin{customlem}{\ref{lem: eta index}}
        There exists $0<\epsilon<\frac{1}{\kappa}$ such that there exists $\mathbf{v}_0 \in \dom{\V'}$ and $1\leq r <t\leq \frac{\kappa+1}{2}$ such that 
        \begin{equation*}
            \eta_{j_r}(\mathbf{v}_0) \neq \eta_{j_t}(\mathbf{v}_0).
        \end{equation*}
    \end{customlem}
    \begin{proof}
        Lets consider $r$ and $t$ such that $\gamma_r = (0, 0, \dots, 0)$ and $\gamma_t = (2, 0, \dots, 0)$. Recall that:
        \begin{align}
            \label{eq: tmp4.1}
            & \eta_r(\mathbf{v}) :=\! \sum_{\U'\setminus \{U_0\}} \prod_{X \in \mathbf{S}} P(x \mid \Pa{X}{\G'})\Big|_{U_0=\gamma_r}\! \prod_{U\in \U'\setminus \{U_0\}}\! P(u), \\
            \label{eq: tmp4.2}
            & \eta_t(\mathbf{v}) :=\! \sum_{\U'\setminus \{U_0\}} \prod_{X \in \mathbf{S}} P(x \mid \Pa{X}{\G'})\Big|_{U_0=\gamma_t}\! \prod_{U\in \U'\setminus \{U_0\}}\! P(u).
        \end{align}
        
        We choose $\textbf{v}_0$ as follows: set all variables in $\mathbf{S}$ to be zero and select a realization for variables in $\V' \setminus \mathbf{S}$ such that $\mathbb{I}(S)=0$ for all $S \in \mathbf{S}$. 
        Denote by $S_0$ a child of $U_0$ in $\mathbf{S}$.
        
        Note that there is a term in the summation of the right side of equation (\ref{eq: tmp4.1}) that is $(1-\kappa\epsilon)^{|\mathbf{S}|}$. For instance, this occurs when all realizations of unobserved variables in $\U^{\mathbf{S}}$ are zero.
         
        Next, we prove that there is no realization of unobserved variables $\U^{\mathbf{S}}$ such that $P(S|\Pa{S}{\G'})=1-\epsilon\kappa$ for all $S\in \mathbf{S}$ and $U_0=\gamma_t$.
        In other words, each term in the summation of \eqref{eq: tmp4.2} has at least a term $\epsilon$. 
        To do so, it suffices to show that there is no realization of $\U^{\mathbf{S}}$ such that:
        \begin{equation*}
            \begin{split}
                & s = \sum_{W \in \Pa{S}{\G'[\mathbf{S}]}} w, \quad S\in \mathbf{S}\setminus \{S_0\},\\
                & s_0 = u_0[0] + \sum_{W \in \Pa{S}{\G'[\mathbf{S}]}} w.
            \end{split}
        \end{equation*}
        Suppose there is a realization of $\U^{\mathbf{S}}$ such that the above equations hold. In this case, since $\G'_{\mathbf{S}}$ is a tree, we can color its nodes with two colors, red and black, such that connected nodes by biderected edges have different colors. 
        Suppose that $\mathbf{S}_1$ is the set of black variables and $\mathbf{S}_2$ is the set of red variables which (without loss of generality) contains $S_0 \in \mathbf{S}_1$.
        Then:
        \begin{equation*}
            \begin{split}
                & \sum_{W\in \mathbf{S}_1} w \equiv u_0[0] + \sum_{U\in \U^{\mathbf{S}}} u \pmod{\kappa+1},\\
                & \sum_{W\in \mathbf{S}_2} w \equiv  \sum_{U\in \U^{\mathbf{S}}} u \pmod{\kappa+1}.
            \end{split}
        \end{equation*}
        The left-hand sides of both above equations are zero because of our choice of $\textbf{v}_0$.
        However, the right-hand sides cannot be the same since $u_0[0]=2$.
        Hence, in Equation \eqref{eq: tmp4.2}, there exists a term in the summation with probability $\epsilon$.
        Therefore, in extreme case, when $\epsilon=0$, $\eta_{t}(\mathbf{v}')=0$. However, 
        $\eta_{r}(\mathbf{v}')\geq (1-\kappa\epsilon)^{|\mathbf{S}|}\prod_{U\in \U'\setminus \{U_0\}}\! P(u)>0$. 
        Since $\eta_{r}(v)$ and $\eta_{t}(v)$ are polynomial functions of $\epsilon$ and they are not equal at $\epsilon=0$, then there exists a small enough $0<\epsilon<\frac{1}{\kappa}$ such that $\eta_{r}(\mathbf{v}') \neq \eta_{t}(\mathbf{v}')$.
    \end{proof}
    
\section{A special case in the proof of Theorem \ref{thm: main}} \label{sec: apd_second case}
    In this section, we provide our proof for the necessary part of Theorem \ref{thm: main} when $\mathbf{S}\nsubseteq \A_i'$ for all $i\in [0:m]$. 
     
    We define $\F^{\mathbf{S}}$ to be a minimal (in terms of edges) spanning subgraph of $\G[\mathbf{S}]$ such that $\F^{\mathbf{S}}_{\mathbf{S}}$ is a single c-component. 
    In this case, we can assume $\V' = \mathbf{S}$, $\G'$ is $\F^{\mathbf{S}}$, and $\mathbb{A}'= \{\A_i':=\A_i \cap \V'\}_{i=0}^m$. 
    For each $i\in [0:m]$, we have $\A_i' \subsetneq \V'$. 
    Note that Lemma \ref{lem: simplify} holds for this case. 
    Hence, it is enough to show that $Q[\mathbf{S}]$ is not g-identifiable from $(\mathbb{A}', \G')$. 
    
    Recall that our assumptions and goal in this section are as follows:\\
    $\G'$ is a DAG with observed variables $\V'$ and unobserved variables $\U'$ such that $\G'_{\V'}$ has no directed edges and its bidirected edges form a spanning tree over $\V'$. 
    $\mathbb{A}'= \{\A_i'\}_{i=0}^m$ is a collection of subsets such that $\A_i' \subsetneq \V'$. 
    The goal is to show that $Q[\V']$ is not g-identifiable from $(\mathbb{A}',\G')$.

    For this case we will define two model $\M_1$ and $\M_2$ such that for each $i\in [0:m]$ and any $\mathbf{v}\in \dom{\V'}$,
    \begin{equation*}
        Q^{\M_1}[\A_i'](\mathbf{v}) = Q^{\M_2}[\A_i'](\mathbf{v}),
    \end{equation*}
    but there exists $\mathbf{v}_0\in \dom{\V'}$ such that
    \begin{equation*}
        Q^{\M_1}[\mathbf{S}](\mathbf{v}_0) \neq Q^{\M_2}[\mathbf{S}](\mathbf{v}_0).
    \end{equation*}
    
    For both models $\M_1$ and $\M_2$ we define each observed and unobserved variable to be binary, i.e $\dom{W}=\{0, 1\}$ for all $W \in \V' \cup \U'$.
    Next, we define the equation of the variables in each model. 
    
    \textbf{Model 1}: For $V \in \V'$:
    \begin{align}
        V = 
        \begin{cases}
            \bigoplus \Pa{V}{\G'}, \quad \text{with probability } 1-\epsilon,\\
            1, \quad \text{with probability } \frac{\epsilon}{2},\\
            0, \quad \text{with probability } \frac{\epsilon}{2},
        \end{cases}
    \end{align}
    and for $U \in \U'$:
    \begin{equation*}
        P(U=0)= P(U=1) = 0.5.
    \end{equation*}
    
    \textbf{Model 2}: Suppose $V_1$ is a fixed observed variable in $\V'$. 
    Then, for all $V$ in $\V'\setminus \{V_1\}$ we define:
    \begin{align}
        V = 
        \begin{cases}
            \bigoplus \Pa{V}{\G'}, \quad \text{with probability } 1-\epsilon \\
            1, \quad \text{with probability } \frac{\epsilon}{2},\\
            0, \quad \text{with probability } \frac{\epsilon}{2},
        \end{cases}
    \end{align}
    and for $V_1$:
    \begin{align}
        V_1 = 
        \begin{cases}
            \urcorner\bigoplus \Pa{V_1}{\G'}, \quad \text{with probability } 1-\epsilon \\
            1, \quad \text{with probability } \frac{\epsilon}{2},\\
            0, \quad \text{with probability } \frac{\epsilon}{2},
        \end{cases}
    \end{align}
    where $\urcorner$ denotes the logical not. 
    Similar to the first mode, for each unobserved variables $U\in\U'$, 
    \begin{equation*}
        P(U=0)= P(U=1) = 0.5.
    \end{equation*}
    
    \begin{lemma}
        Let $i \in [0, m]$ and denote the cardinality of $\A_i'$ by $n$, i.e. $|\A_i'|=n$. 
        Then for any realization $\mathbf{v}\in \dom{\V'}$:
        $$
        Q^{\M_1}[\A_i'](\mathbf{v}) = Q^{\M_2}[\A_i'](\mathbf{v}) = \frac{1}{2^n}.
        $$
    \end{lemma}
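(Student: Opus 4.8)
The plan is to compute both $Q^{\M_1}[\A_i']$ and $Q^{\M_2}[\A_i']$ directly from the factorization \eqref{eq: factorization 2} by expanding every observed conditional in a $\pm 1$ (Fourier) basis, then using that the bidirected part of $\G'$ is a \emph{connected} spanning tree. First I would record the relevant structure: since $\G'_{\V'}$ has no directed edges, every $V\in\V'$ has $\Pa{V}{\G'}\subseteq\U'$; and since $\G'$ is semi-Markovian, each $U\in\U'$ has no parents and exactly two observed children, so — viewing each $U$ as an edge joining its two children — the set $\U'$ is exactly the edge set of a spanning tree $T$ on $\V'$. In particular $T$ is connected and $|\U'|=|\V'|-1$. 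Because all unobserved variables are uniform, \eqref{eq: factorization 2} gives, for $\M\in\{\M_1,\M_2\}$,
\begin{equation*}
    Q^{\M}[\A_i'](\mathbf{v})=\frac{1}{2^{|\U'|}}\sum_{\u\in\dom{\U'}}\ \prod_{A\in\A_i'}P^{\M}(v_A\mid \Pa{A}{\G'}),
\end{equation*}
where each $\Pa{A}{\G'}$ is assigned its value under $\u$.

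Next I would rewrite the observed conditionals. For a variable $V$ whose equation equals $\bigoplus\Pa{V}{\G'}$ with probability $1-\epsilon$ and is uniform otherwise, an elementary check yields $P(v\mid \Pa{V}{\G'})=\tfrac12+\tfrac{1-\epsilon}{2}(-1)^{\,v\,\oplus\,\bigoplus\Pa{V}{\G'}}$; for the special variable $V_1$ in $\M_2$ the only change is that the coefficient $\tfrac{1-\epsilon}{2}$ acquires a minus sign. Substituting and expanding the product over subsets $B\subseteq\A_i'$, the $B$-term is a $\u$-independent constant (a power of $\tfrac12$ times a power of $\tfrac{1-\epsilon}{2}$, times $(-1)^{\mathds{1}[V_1\in B]}$ in $\M_2$) multiplied by $\prod_{A\in B}(-1)^{v_A\oplus\bigoplus\Pa{A}{\G'}}$. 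Collecting the unobserved variables, this last product equals $(-1)^{\sum_{A\in B}v_A}\prod_{U\in\U'}(-1)^{\,u\cdot d_B(U)}$, where $d_B(U):=|\{A\in B:\ U\in\Pa{A}{\G'}\}|\in\{0,1,2\}$ is the number of children of $U$ lying in $B$.

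Then I would perform the sum over $\u$. Since $\tfrac12\sum_{u\in\{0,1\}}(-1)^{u\,d_B(U)}$ is $1$ when $d_B(U)$ is even and $0$ when it is odd, only the subsets $B$ with $d_B(U)$ even for \emph{every} $U\in\U'$ contribute; equivalently, no edge of $T$ has exactly one endpoint in $B$, i.e.\ $B$ is a union of connected components of $T$. As $T$ is connected, the only such $B$ are $\varnothing$ and $\V'$, and $\A_i'\subsetneq\V'$ eliminates $B=\V'$. The surviving term $B=\varnothing$ contributes $(\tfrac12)^{|\A_i'|}$, so $Q^{\M_1}[\A_i'](\mathbf{v})=2^{-n}$; and since the $\M_2$-correction $(-1)^{\mathds{1}[V_1\in B]}$ equals $+1$ at $B=\varnothing$ (and if $V_1\notin\A_i'$ the two models already agree on every conditional in the product), $Q^{\M_2}[\A_i'](\mathbf{v})=2^{-n}$ as well.

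The genuinely delicate point is the combinatorial step pinning the surviving $B$ down to $\varnothing$: it rests on translating the parity condition ``$d_B(U)$ even for all $U$'' into ``$B$ is a union of components of $T$'' and then invoking connectedness of the spanning tree — which is exactly where the hypothesis $\A_i'\subsetneq\V'$ enters. I would also dispatch the trivial extremes by hand: $n=0$ gives $Q^{\M}[\varnothing](\mathbf{v})=1$ immediately, and $|\V'|\le 1$ forces $\A_i'=\varnothing$.
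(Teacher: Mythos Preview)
Your argument is correct and takes a genuinely different route from the paper. The paper decomposes each observed conditional as a \emph{mixture} $(1-\epsilon)\mathds{1}[v=\bigoplus\Pa{V}{\G'}]+\tfrac{\epsilon}{2}$ and then conditions on which variables take the XOR branch: for a fixed choice of $j$ such variables it invokes the tree structure (root at some $r\notin\A_i'$, assign to each $A_l$ its parent edge $U_l$) to argue the resulting $j$ XOR constraints determine $u_{U_1},\dots,u_{U_j}$ uniquely given the remaining $\u$, giving probability $(1/2)^j$, and then sums the binomial $\sum_j\binom{n}{j}(1-\epsilon)^j(\epsilon/2)^{n-j}(1/2)^j=2^{-n}$. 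Your approach instead writes each conditional additively as $\tfrac12\pm\tfrac{1-\epsilon}{2}(-1)^{v\oplus\bigoplus\Pa{V}{\G'}}$ and expands over subsets $B\subseteq\A_i'$; averaging over $\u$ kills every $B$ for which some tree edge has exactly one endpoint in $B$, and connectedness of the spanning tree together with $\A_i'\subsetneq\V'$ forces $B=\varnothing$. The paper's route is slightly more elementary in spirit but leaves implicit why the XOR system is uniquely solvable (a triangularity argument in the rooted tree); your Fourier route is cleaner, makes the role of connectedness completely transparent, and handles $\M_1$ and $\M_2$ in one stroke via the single sign at $V_1$, which is inert at $B=\varnothing$.
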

    \begin{proof}
        Suppose $\A_i':= \{A_1, A_2, \dots, A_n\}$. Since $\A_i' \subsetneq \V'$, there are distinct unobserved variables $U_1, U_2, \dots, U_n$, such that $U_j$ is a parent of the $A_j$ for $j \in [1: n]$. Denote by $\M$ any of the model $\M_1$ or $\M_2$. 
        
        Assume that for some realization of observed and unobserved variables, exactly $t\in [0, n]$ variables in $\A'_i$ are defined by the $XOR$ or $\urcorner XOR$ of their parents. Without loss of generality, assume that these variables are $\{A_1, A_2, \dots, A_t\}$. If we know all unobserved variables $\U'$ except $\{U_1, U_2, \dots, U_t\}$, then we can determine uniquely the values of $\{U_1, U_2, \dots, U_t\}$ from the following equations:
        \begin{equation*}
            A_i = \bigotimes \Pa{A_i}{\G'},\quad i\in [1:t],
        \end{equation*}
        where $\bigotimes$ denotes the corresponding equation, either $XOR$ or $\urcorner XOR$, for variable $A_i$ in model $\M$. 
        Thus, by considering all possible realizations of unobserved variables that lead to a realization $\mathbf{v} \in \dom{\V'}$, we obtain
        \begin{equation*}
            Q[\textbf{A}_i'] = \sum_{j=0}^n C_n^j(1-\epsilon)^j\left( \frac{\epsilon}{2} \right)^{n-j} \left(\frac{1}{2}\right)^j = \left(\frac{1}{2}\right)^n,
        \end{equation*}
        where $C_n^j$ is the number of different ways to choose $j$ variables out of $n$, such that with probability $(1-\epsilon)$ their values are determined by either $XOR$ or $\urcorner XOR$ equation. All other $n-j$ variables are equal to either $0$ or $1$ with probability $\frac{\epsilon}{2}$.
    \end{proof} 
    
    \begin{lemma}
        Let $\mathbf{v}=\mathbf{0}$ be the realization of $\V'$ such that all observed variables are equal to $0$. 
        Then $Q^{\M_1}[\V'](\mathbf{v}) \neq Q^{\M_2}[\V'](\mathbf{v})$.
    \end{lemma}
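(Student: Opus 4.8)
The plan is to first reduce the claim to a statement about the two \emph{observational} distributions. Since $Q[\V'](\mathbf{v})=P_{\varnothing}(\mathbf{v})=P(\mathbf{v})$, the lemma is equivalent to showing $P^{\M_1}(\V'=\mathbf{0})\neq P^{\M_2}(\V'=\mathbf{0})$. To evaluate these, I would first record the structure of $\G'=\F^{\mathbf{S}}$: its bidirected edges over $\V'$ form a spanning tree, so, writing $n:=|\V'|$, there are exactly $n-1$ unobserved variables, each with precisely two children in $\V'$, and for every $V\in\V'$ the set $\Pa{V}{\G'}$ is exactly the collection of unobserved variables incident to $V$ in that tree. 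Both models then admit a convenient ``noisy-XOR'' reformulation: there are mutually independent bits $\{N_V\}_{V\in\V'}$, independent of $\U'$, with $P(N_V=1)=\epsilon/2$, so that in $\M_1$ each $V$ equals $(\bigoplus_{U\in\Pa{V}{\G'}}U)\oplus N_V$, whereas in $\M_2$ the same holds for $V\neq V_1$ but $V_1$ equals $1\oplus(\bigoplus_{U\in\Pa{V_1}{\G'}}U)\oplus N_{V_1}$; also $P(U=1)=1/2$ for every $U\in\U'$.

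Next I would condition on a realization $\mathbf{u}$ of $\U'$. The event $\{\V'=\mathbf{0}\}$ then forces every noise bit to a value determined by $\mathbf{u}$, so, letting $\phi(\mathbf{u})\in\{0,1\}^{\V'}$ be the vector whose $V$-th entry is the value of $\bigoplus_{U\in\Pa{V}{\G'}}U$ at $\mathbf{u}$, the conditional probability $P^{\M_1}(\V'=\mathbf{0}\mid\mathbf{u})$ equals $(\epsilon/2)^{|\phi(\mathbf{u})|}(1-\epsilon/2)^{n-|\phi(\mathbf{u})|}$, and $P^{\M_2}(\V'=\mathbf{0}\mid\mathbf{u})$ is the same expression with $\phi(\mathbf{u})$ replaced by its $V_1$-flipped version, call it $\psi(\mathbf{u})$. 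The key observation is that $\phi$ is the $\mathbb{F}_2$-linear vertex--edge incidence map of the spanning tree; being a connected tree on $n$ vertices it has rank $n-1$, hence $\phi$ is injective, and since each of its columns has exactly two ones its image is contained in, and by a dimension count equals, the hyperplane of even-weight vectors in $\{0,1\}^{\V'}$. Consequently, as $\mathbf{u}$ ranges over all $2^{n-1}$ assignments, $\phi(\mathbf{u})$ ranges bijectively over all even-weight length-$n$ vectors and $\psi(\mathbf{u})$ over all odd-weight ones.

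Finally I would average over $\mathbf{u}$ and invoke the elementary identities $\sum_{|w|\text{ even}}a^{n-|w|}b^{|w|}=\tfrac12[(a+b)^n+(a-b)^n]$ and $\sum_{|w|\text{ odd}}a^{n-|w|}b^{|w|}=\tfrac12[(a+b)^n-(a-b)^n]$ with $a=1-\epsilon/2$, $b=\epsilon/2$ (so $a+b=1$, $a-b=1-\epsilon$), yielding $P^{\M_1}(\V'=\mathbf{0})=(1+(1-\epsilon)^n)/2^n$ and $P^{\M_2}(\V'=\mathbf{0})=(1-(1-\epsilon)^n)/2^n$; their difference is $(1-\epsilon)^n/2^{n-1}\neq 0$ for $0<\epsilon<1$, completing the argument. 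The only genuinely non-routine point is the rank computation for the tree incidence matrix over $\mathbb{F}_2$ (equivalently, identifying its image as the even-weight subspace); I expect that to be the main obstacle, everything else being bookkeeping. An alternative that sidesteps it is a direct induction on $n$: peel off a leaf $V$ of the tree together with its unique incident unobserved variable, sum them out first, and track whether the distinguished vertex $V_1$ is that leaf, which gives a parity recursion matching the closed forms above; this also makes transparent why the earlier lemma gives $2^{-|\A_i'|}$ for proper subsets $\A_i'\subsetneq\V'$, while the full set $\V'$ escapes this precisely because of the single even-weight constraint imposed by the tree.
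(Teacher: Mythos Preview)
Your argument is correct and takes a genuinely different route from the paper. The paper proceeds by induction on $n=|\V'|$: it first argues that $Q^{\M_2}[\V'](\mathbf{0})$ does not depend on which vertex plays the role of $V_1$, then assumes $V_1$ is a leaf with unique incident unobserved variable $U_1$, conditions on $U_1$, and reduces the inequality $Q^{\M_1}[\V'](\mathbf{0})\neq Q^{\M_2}[\V'](\mathbf{0})$ to the same inequality for the tree $\G'[\V'\setminus\{V_1\}]$ on $n-1$ vertices, bottoming out at an explicit computation for $n=2$. Your approach instead exploits the global $\mathbb{F}_2$-linear structure: the noisy-XOR reformulation together with the fact that the vertex--edge incidence map of a tree is a bijection onto the even-weight subspace yields the closed forms $P^{\M_1}(\mathbf{0})=(1+(1-\epsilon)^n)/2^n$ and $P^{\M_2}(\mathbf{0})=(1-(1-\epsilon)^n)/2^n$ directly. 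This buys you explicit values (and even the exact gap $(1-\epsilon)^n/2^{n-1}$) in one shot, whereas the paper's induction only certifies the inequality without producing these expressions; conversely, the paper's leaf-peeling avoids invoking the rank/image computation for the incidence matrix. You in fact sketch the paper's inductive alternative at the end of your proposal, so you have both proofs in hand.
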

    \begin{proof}
        Define  $n=|\V'|$ and $\V' = \{V_1, V_2, \dots, V_n\}$. Firstly, we will prove that for any $\mathbf{v} \in \dom{\V'}$, the value of $Q^{\M_2}[\V'](\mathbf{v})$ does not depend on the position of $V_1$ in graph $\G'$. 
        Denote by $V_2$ an observed variable which is connected to the $V_1$ by a bidirected edge in $\G'_{\V'}$.
        Let $U$ denotes the unobserved variable (corresponding to the bidirected edge) which is a parent of $V_1$ and $V_2$. 
        Next, we define a new model $\M_2'$ in which all variables in $\V'$ are defined similarly as they are defined in model $\M_2$ except for variables $V_1$ and $V_2$. 
        In $\M_2'$, we define $V_2$ in the same way as $V_1$ is defined in  $\M_2$.
        We also define $V_1$ in $\M_2'$ in the same way as $V_2$ is defined in $\M_2$.
        Then, we have
        \begin{align*}
            \prod_{i=1}^n P^{\M_2}(v_i|\Pa{V_i}{\G'}) = P^{\M_2}(v_1|\Pa{V_1}{\G'})P^{\M_2}(v_2|\Pa{V_2}{\G'})\prod_{i=3}^{n}P(v_i|\Pa{V_i}{\G'}) \\
            = P^{\M_2'}(v_1|\Pa{V_1}{\G'}\setminus\{U\}, u\oplus 1)P^{\M_2'}(v_2|\Pa{V_2}{\G'}\setminus\{U\}, u\oplus 1)\prod_{i=3}^{n}P(v_i|\Pa{V_i}{\G'}).
        \end{align*}
        This implies that substituting $V_1$ by $V_2$ does not change the value of $Q^{\M_2}[\V'](\mathbf{v})$.
        
        Without loss of generality, suppose that $V_1$ is a leaf in $\G'$ and $U_1$ is a parent of $V_1$. Note that there are exactly $n-1$ unobserved variables in graph $\G'$.
        This is because $\G'_{\V'}$ is a tree with bidirected edges over $\V'$. Therefore, we have
        \begin{align*}
            & 2^{n-1}Q^{\M_1}[\V'](\mathbf{0}) = P^{\M_1}(V_1=0|U_1=0)\!\! \sum_{\U' \setminus \{ U_1\}} \prod_{j>1}P(v_j|\Pa{V_j}{\G'}) + P^{\M_1}(V_1=0|U_1=1)\!\! \sum_{\U' \setminus \{ U_1\}} \prod_{j>1}P(v_j|\Pa{V_j}{\G'}), \\
            & 2^{n-1}Q^{\M_2}[\V'](\mathbf{0}) = P^{\M_2}(V_1=0|U_1=0)\!\! \sum_{\U' \setminus \{ U_1\}} \prod_{j>1}P(v_j|\Pa{V_j}{\G'}) + P^{\M_2}(V_1=0|U_1=1)\!\! \sum_{\U' \setminus \{ U_1\}} \prod_{j>1}P(v_j|\Pa{V_j}{\G'}).
        \end{align*}
        Note that:
        \begin{align*}
            & P^{\M_1}(V_1=0|U_1=0) = 1 -\frac{\epsilon}{2}\\
            & P^{\M_1}(V_1=0|U_1=1) = \frac{\epsilon}{2}\\
            & P^{\M_2}(V_1=0|U_1=0) = \frac{\epsilon}{2}\\
            & P^{\M_2}(V_1=0|U_1=1) = 1 -\frac{\epsilon}{2}
        \end{align*}
        More over, we have
        \begin{equation*}
            \sum_{U_1=0, \U' \setminus \{ U_1\}} \prod_{j>1}P(v_j|\Pa{V_j}{\G'}) + \sum_{U_1=1, \U' \setminus \{ U_1\}} \prod_{j>1}P(v_j|\Pa{V_j}{\G'}) = Q[\V'\setminus\{V_1\}] = \left(\frac{1}{2}\right)^{n-1}
        \end{equation*}
        This yields
        \begin{align*}
            & 2^{n-1}Q^{\M_1}[\V'](\mathbf{0}) = \left( 1 - \frac{\epsilon}{2} \right)a + \frac{\epsilon}{2} b, \\
            & 2^{n-1}Q^{\M_2}[\V'](\mathbf{0}) = \left( 1 - \frac{\epsilon}{2} \right)b + \frac{\epsilon}{2} a,
        \end{align*}
        where 
        \begin{align*}
            & a = \sum_{U_1=0, \U' \setminus \{ U_1\}} \prod_{j>1}P(V_j=0|\Pa{V_j}{\G'}), \\
            & b = \sum_{U_1=1, \U' \setminus \{ U_1\}} \prod_{j>1}P(V_j=0|\Pa{V_j}{\G'}).
        \end{align*}
        To prove that $Q^{\M_1}[\V'](\mathbf{0}) \neq Q^{\M_2}[\V'](\mathbf{0})$, it is enough to show that $a\neq b$.
        
        Denote by $S_n$ an observed variable connected to the $V_1$ by a bidirect edge in $\G'_{\mathbf{\V'}}$.
        We define $\V'_{n-1} := \V'\setminus \{V_1\}$, $\U'_{n-1} := \U'\setminus \{U_1\}$ and $\G_{n-1}:=\G'[\V' \setminus \{V_1\}]$.
        We also define models $\M_1^{(n-1)}$ and $\M_2^{(n-1)}$ as follows:
        
        \textbf{New model $\M_1^{(n-1)}$}: For $V \in \V_{n-1}'$:
        \begin{align}
            V = 
            \begin{cases}
                \bigoplus \Pa{V}{\G_{n-1}}, \quad \text{with probability } 1-\epsilon,\\
                1, \quad \text{with probability } \frac{\epsilon}{2},\\
                0, \quad \text{with probability } \frac{\epsilon}{2},
            \end{cases}
        \end{align}
        and for $U \in \U'_{n-1}$:
        \begin{equation*}
            P(U=0)= P(U=1) = 0.5.
        \end{equation*}
        
        \textbf{Model $\M_2^{(n-1)}$}:
        For all $V$ in $\V'_{n-1}\setminus \{S_{n}\}$:
        \begin{align}
            V = 
            \begin{cases}
                \bigoplus \Pa{V}{\G_{n-1}}, \quad \text{with probability } 1-\epsilon \\
                1, \quad \text{with probability } \frac{\epsilon}{2},\\
                0, \quad \text{with probability } \frac{\epsilon}{2},
            \end{cases}
        \end{align}
        and for $S_{n}$:
        \begin{align}
            S_n = 
            \begin{cases}
                \urcorner\bigoplus \Pa{S_{n}}{\G_{n-1}}, \quad \text{with probability } 1-\epsilon \\
                1, \quad \text{with probability } \frac{\epsilon}{2},\\
                0, \quad \text{with probability } \frac{\epsilon}{2}.
            \end{cases}
        \end{align}
        Similar to the first model, for each unobserved variables $U\in\U'_{n-1}$, we define
        \begin{equation*}
            P(U=0)= P(U=1) = 0.5.
        \end{equation*}
        Note that:
        \begin{align*}
            & \left( \frac{1}{2} \right)^{n-2}\sum_{U_1=0, \U'_{n-1}} \prod_{j>1}P(V_j|\Pa{V_j}{\G'}) =   \left( \frac{1}{2} \right)^{n-2}a = Q^{\M_1^{(n-1)}}[\V'_{n-1}](\mathbf{0}),\\
            & \left( \frac{1}{2} \right)^{n-2}\sum_{U_1=1, \U'_{n-1}} \prod_{j>1}P(V_j|\Pa{V_j}{\G'}) =   \left( \frac{1}{2} \right)^{n-2}b =   Q^{\M_2^{(n-1)}}[\V'_{n-1}](\mathbf{0}).
        \end{align*}
        It remains to show  $Q^{\M_1^{(n-1)}}[\V'_{n-1}](\mathbf{0})\neq Q^{\M_2^{(n-1)}}[\V'_{n-1}](\mathbf{0})$. 
        Note that if this holds, then by our construction, $Q^{\M_1}[\V'](\mathbf{0})\neq Q^{\M_2}[\V'](\mathbf{0})$.
        In other words, we could reduce the size of the graph while keeping the same problem. Thus, by continuing this procedure, we eventually reach graph $\G_2$ that consists of only two observed nodes and showing $Q^{\M_1^{(2)}}[\V'_{2}](\mathbf{0})\neq Q^{\M_2^{(2)}}[\V'_{2}](\mathbf{0})$ in that graph will conclude the result.
        For graph $\G_2$, we have
        \begin{align*}
            & Q^{\M_1^{(2)}}[\V'_{2}](\mathbf{0}) =\left( \frac{\epsilon}{2}\right)^2 + 2\frac{\epsilon}{2}(1-\epsilon)\frac{1}{2}+(1-\epsilon)^2\frac{1}{2}, \\
            & Q^{\M_2^{(2)}}[\V'_{2}](\mathbf{0}) = \left( \frac{\epsilon}{2}\right)^2 + 2\frac{\epsilon}{2}(1-\epsilon)\frac{1}{2}.
        \end{align*}
        This clearly shows that $Q^{\M_1^{(2)}}[\V'_{2}](\mathbf{0})\neq Q^{\M_2^{(2)}}[\V'_{2}](\mathbf{0})$.
    \end{proof}

\end{document}